\def\eqref#1{equation~\ref{#1}}
\def\1{\bm{1}}
\DeclareMathAlphabet{\mathsfit}{\encodingdefault}{\sfdefault}{m}{sl}
\SetMathAlphabet{\mathsfit}{bold}{\encodingdefault}{\sfdefault}{bx}{n}
\newcommand{\imx}{\mathbf{x}}
\newcommand{\imA}{\mathbf{A}}
\newcommand{\ims}{\mathbf{s}}
\newcommand{\imy}{\mathbf{y}}
\newcommand{\imm}{\mathbf{m}}
\newcommand{\imtheta}{\hat{\boldsymbol{\theta}}}
\newcommand{\imthetagt}{\boldsymbol{\theta}^*}
\newcommand{\imeta}{\boldsymbol{\eta}}
\newcommand{\logit}{\text{logit}}
\newtheorem{theorem}{Theorem}[section]
\newtheorem{lemma}[theorem]{Lemma}
\newtheorem{proposition}[theorem]{Proposition}
\newtheorem{corollary}[theorem]{Corollary}
\title{Distributional Consistency Loss: Beyond Pointwise Data Terms in Inverse Problems}
\author{George Webber \& Andrew J. Reader
	\\
School of Biomedical Engineering and Imaging Sciences\\
King's College London\\
\texttt{\{george,andrew\}.\{webber,reader\}@kcl.ac.uk}
}
\begin{document}

\maketitle

\begin{abstract}
Recovering true signals from noisy measurements is a central challenge in inverse problems spanning medical imaging, geophysics, and signal processing. Current solutions nearly always balance prior assumptions regarding the true signal (regularization) with agreement to noisy measured data (data-fidelity). Conventional data-fidelity loss functions, such as mean-squared error (MSE) or negative log-likelihood, seek pointwise agreement with noisy measurements, often leading to overfitting to noise. In this work, we instead evaluate data-fidelity collectively by testing whether the observed measurements are statistically consistent with the noise distributions implied by the current estimate. We adopt this aggregated perspective and introduce \emph{distributional consistency (DC) loss}, a data-fidelity objective that replaces pointwise matching with distribution-level calibration. DC loss acts as a direct and practical plug-in replacement for standard data consistency terms: i) it is compatible with modern unsupervised regularizers that operate without paired measurement–ground-truth data, ii) it is optimized in the same way as traditional losses, and iii) it avoids overfitting to measurement noise without early stopping or the use of priors. Its scope naturally fits many practical inverse problems where the measurement-noise distribution is known and where the measured dataset consists of many independent noisy values. We demonstrate efficacy in two key example application areas: i) in image denoising with deep image prior, using DC instead of MSE loss removes the need for early stopping and achieves higher PSNR; ii) in medical image reconstruction from Poisson-noisy data, DC loss reduces artifacts in highly-iterated reconstructions and enhances the efficacy of hand-crafted regularization. These results position DC loss as a statistically grounded, performance-enhancing alternative to conventional fidelity losses for an important class of unsupervised noise-dominated inverse problems. 
\end{abstract}

\section{Introduction}\label{sec:1-introduction}
Reconstructing a true signal from a single set of noisy measurements is a prevalent challenge in scientific computing \citep{vogel_computational_2002, tarantola_inverse_2005, aster_parameter_2018}, with applications spanning medical imaging \citep{bertero_introduction_2021, mccann_convolutional_2017, ribes_linear_2008, louis_medical_1992}, remote sensing \citep{efremenko_foundations_2021, baret_estimating_2008}, time-series denoising \citep{gubbins_time_2004}, astronomical data analysis \citep{craig_inverse_1986,lucy_astronomical_1994}, and geophysical inversion \citep{parker_geophysical_1994, linde_geological_2015}. In these problems, the reconstruction objective typically combines a data-fidelity term with a regularizer: data-fidelity enforces consistency with the measurement process and noise model, while regularization encodes prior structure in the signal.

While regularization has evolved substantially \citep{habring_neural_2024}, the underlying data-fidelity terms have changed little. Standard objectives such as mean squared error (MSE) or negative log-likelihood (NLL) penalize residuals point-by-point. Optimizing these pointwise objectives encourages matching individual noise realizations rather than ensuring that the measurements are statistically compatible with the model. While early stopping or discrepancy-based rules can mitigate this behavior, they require explicit tuning and do not change the objective itself. As a result, under a noisy realization, the ground-truth signal is not a minimizer of the pointwise data term, and regularization must compensate for noise fitting rather than focusing solely on structure.

We address this limitation directly by reframing the data-fidelity objective.
We introduce distributional consistency (DC) loss, which evaluates whether the measurements are collectively consistent with the noise distributions implied by a candidate reconstruction. Each measurement is mapped to its percentile value under the cumulative distribution function (CDF) of its predicted noise model, and the resulting collection of percentile values is compared to the uniform distribution expected under a correct model (Figure \ref{fig:methodology}). Instead of reducing individual residuals, DC loss tests whether the entire noisy dataset behaves like a typical draw from the assumed noise process.

DC loss applies whenever the noise distribution is known or can be estimated, including heteroscedastic settings such as Poisson counts or Gaussian noise with known variance. It requires only one noise realization per measurement but many independent measurements overall. This setting is satisfied in common inverse problems such as pixel-wise imaging \citep{fan_brief_2019}, dense time-series sampling \citep{gubbins_time_2004}, and tomographic reconstruction \citep{arridge_solving_2019}.

\subsection*{Contributions:}

\begin{itemize}
	\item We introduce DC loss as a new principled data-fidelity term built from distributional calibration. Evaluating DC loss is simple and compatible with auto-differentiation frameworks.
	\item We analyze how DC loss compares to pointwise objectives: for parameter estimates that are far from the solution, DC loss exhibits MSE/NLL-like convergence, while near the solution it removes the incentive to fit noise, enabling stable prolonged optimization.
	\item Across the settings we study, DC loss improves practical reconstructions: (a) Deep Image Prior (DIP) denoising \citep{ulyanov_deep_2020} with clipped Gaussian noise, where DC loss removes the need for early stopping and yields higher peak PSNR than MSE; (b) PET image reconstruction under a Poisson model \citep{gourion_inverse_2002}, where DC loss reduces noise artifacts at high iteration numbers and pairs well with TV regularization, achieving superior noise–detail trade-offs at much smaller regularization strengths.
	\item We demonstrate the real-world applicability of DC loss through experiments on real 3D PET brain data.
\end{itemize}

\section{Background}\label{sec:2-background}

\subsection{Problem formulation}\label{sec:2-problem-formulation}

We consider inverse problems where a latent signal is mapped through a known forward model and corrupted by stochastic noise drawn from a known distribution. No paired measurement–ground-truth data are available for learning, though a pre-trained regularizer may be provided.

Let unknown parameters $\imthetagt\in\Theta$ be mapped by a known forward operator
$f:\Theta\to\mathbb{R}^N$ to a noise–free signal (or mean data) $\mathbf{y}=f(\imthetagt)=(y_1,\dots,y_N)$.
Measurements $\mathbf{m}=(m_1,\dots,m_N)$ are then modeled as noisy draws from known, per–index likelihoods tied to that signal,
\begin{equation}
	m_i \sim \mathcal{D}_i\!\bigl(y_i\bigr)\;=\;\mathcal{D}_i\!\bigl(f(\imthetagt)_i\bigr),
\end{equation}
where each family $\mathcal{D}_i$ has $y_i$ as its single unknown parameter (e.g., Poisson rate, Gaussian mean with known variance). Our goal is to recover an estimate $\imtheta$ close to $\imthetagt$, given $\imm, f,$ and $\{\mathcal{D}_i\}_{i=1}^N$.

Classical data–fidelity terms enforce \emph{pointwise} agreement between $f(\imtheta)$ and $\imm$, which can promote overfitting in expressive models by tracking individual noise realizations.
In Section~\ref{sec:3-method}, we instead assess \emph{distributional} agreement: given a candidate $\hat{\imy}=f(\imtheta)$, we ask whether the measurements $\imm$ are statistically consistent with the predicted noise models $\{\mathcal{D}_i(\hat y_i)\}_{i=1}^N$.

\subsection{Related work}\label{sec:5-related-work}
While most progress in inverse problems has focused on regularization, some works have considered the design of data-fidelity terms. Robust variants such as the Huber and Student’s t losses \citep{mohan_huber_2015, kazantsev_students_2017} can reduce outlier influence but do not prevent noise-fitting under a correct noise model. Stein’s unbiased risk estimate \citep{metzler_sure_2018} penalizes the divergence of the estimator to control its sensitivity to noise, effectively regularizing the architecture rather than altering the underlying incentive to match the observed noise. DC loss operates in a different regime from Noise2Noise \citep{lehtinen_noise2noise_2018}, which requires multiple noisy observations, or end-to-end trained approaches \citep{heckel_endtoend_2025}, which require paired measurement–ground-truth data.

DC loss is instead linked most strongly to classical goodness-of-fit tests such as the Kolmogorov–Smirnov (K-S) and Cramér–von Mises (CvM) criteria \citep{kolmogorov_ks_1933, cramer_1928} as well as related work in emission tomography \citep{llacer_statistical_1989}. The K-S and CvM formulations provide post-hoc assessments of goodness-of-fit using $L^{\infty}$ and $ L^2 $ discrepancies between empirical and theoretical distributions. DC loss may be viewed as a optimization-friendly analogue of these tests: by changing the metric, we obtain a smooth, differentiable objective suitable for optimization rather than post-hoc assessment. This avoids the limitations of post-hoc stopping criteria in iterative reconstruction, such as spatially variant convergence and ambiguity in determining when a reconstruction is complete. This mechanism parallels the stabilization achieved by semi-convergence and early-stopping methods \citep{elfving_semiconvergence_2014} but embeds it directly in the loss, allowing data fidelity and regularization to interact synergistically.

\section{Method}\label{sec:3-method}
\subsection{From pointwise to distributional consistency}\label{sec:3-key-idea}

\begin{figure}[t]
	\centering
	\includegraphics[width=\linewidth]{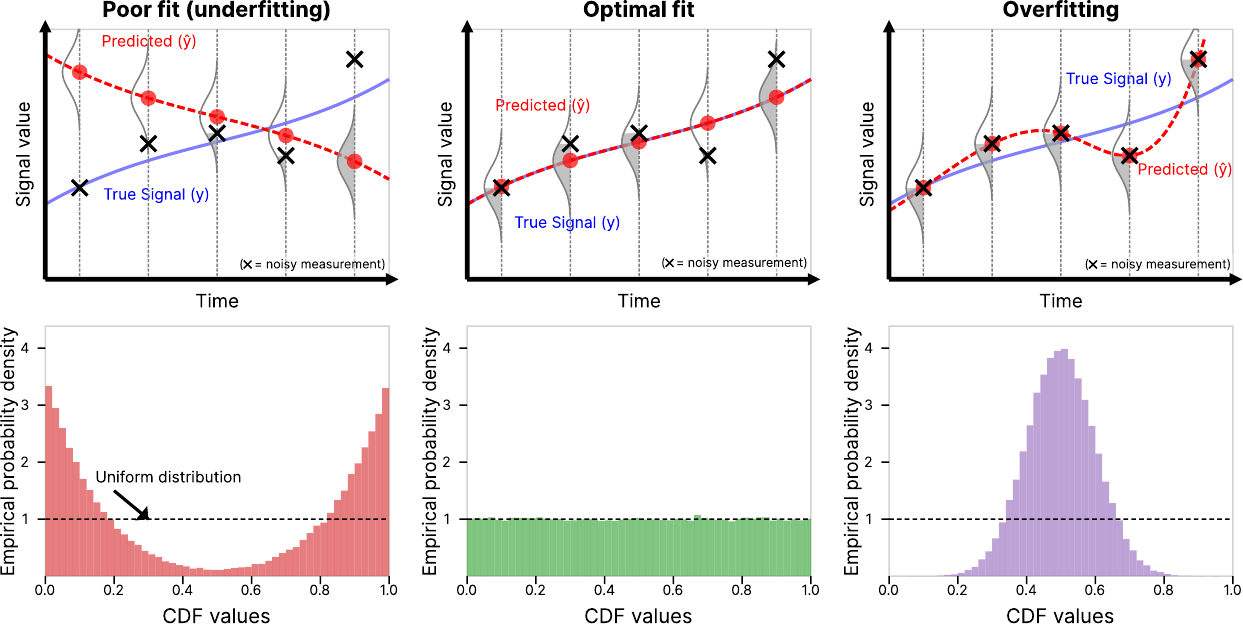}
	\caption{Illustration of the proposed DC loss. 
		Top: predicted signals illustrating under-, well- and over-fitting to noisy measurements.
		Bottom: empirical CDF-value histograms for each case (assuming many predicted points). 
		Poor fit histograms skew towards 0 or~1; good fits give uniform histograms; 
		overfitting gives a sharp peak near~0.5.
		DC loss penalizes departures from uniformity.}
	\label{fig:methodology}
\end{figure}

The \textit{Probability Integral Transform} (PIT) provides a simple way to test whether 
a model’s predicted distributions are statistically consistent with observed data. 
For each measurement~$m_i$ and its corresponding predicted noise distribution 
$\mathcal{D}_i(\hat{y}_i)$, we compute the cumulative probability
\[
s_i = F_i(m_i \mid \hat{y}_i)
= \mathbb{P}_{c \sim \mathcal{D}_i(\hat{y}_i)}(c \le m_i).
\]
Each value~$s_i$ represents the \textit{percentile position} of $m_i$ within its predicted distribution.  
The PIT says, that if the model is correctly calibrated, these percentiles are uniformly distributed on~$[0,1]$: 
about~10\% of data fall below the 10th percentile, 50\% below the median, and so on\footnote{See Appendix \ref{app:probability_integral_transform_continuous} for a full mathematical statement of this result.}.  
Departures from uniformity therefore indicate systematic mismatch between the model and data.

Figure~\ref{fig:methodology} illustrates this idea. 
For a predicted signal~$\hat{\mathbf{y}}$ (red curve), 
we evaluate the CDF of each noise model~$\mathcal{D}_i(\hat{y}_i)$ 
(vertical Gaussians) at the corresponding measurement~$m_i$ (black crosses).  
Collecting these~$s_i$ values yields an empirical distribution whose shape reflects how well the model explains the data:
\begin{itemize}
	\item \textbf{Underfitting:} a histogram with peaks near 0 or 1 shows that most measurements lie in improbable regions of their predicted distributions $\mathcal{D}_i(\hat{y}_i)$.
	\item \textbf{Well-calibrated:} a uniform histogram shows that each quantile contains its expected share of data.
	\item \textbf{Overfitting:} a histogram with a sharp peak near 0.5 shows that most measurements lie near the center of their predicted noise distribution.
\end{itemize}

This motivates the concept of \textit{distributional consistency}: 
a prediction is distributionally consistent when the empirical distribution of its CDF values is uniform.  
The next subsection formalizes this intuition into a differentiable loss.

\subsection{Distributional consistency loss formulation}\label{sec:3-loss-formulation}

Given a current parameter estimate~$\imtheta$, 
we evaluate each measurement~$m_i$ under its predicted noise distribution 
$\mathcal{D}_i(f(\imtheta)_i)$ via its cumulative probability
\begin{equation}
	s_i := F_i(m_i \mid f(\imtheta)_i)
	= \mathbb{P}_{c \sim \mathcal{D}_i(f(\imtheta)_i)}(c \le m_i).
\end{equation}
When the model is well-calibrated, the PIT guarantees that these percentiles~$s_i$ 
follow a uniform distribution on~$[0,1]$. 
We therefore aim to make the empirical distribution of 
$\mathbf{s} = (s_1, \dots, s_N)$ as close to uniform as possible.

However, directly matching $\mathbf{s}$ to $\mathrm{Uniform}[0,1]$ 
often leads to vanishing gradients: 
when a predicted mean is far from~$m_i$, $s_i$ saturates near~0 or~1, 
giving $\frac{\partial s_i}{\partial\hat{\theta}_j}\!\approx\!0$.  
To address this, we apply the logit transform (inverse sigmoid)
\begin{equation}
	r_i = \logit(s_i)
	= \ln\!\left(\frac{s_i}{1-s_i}\right),
\end{equation}
which stretches the endpoints to~$\pm\infty$ and preserves gradient sensitivity.  
This maps the uniform target to a Logistic$(0,1)$ distribution.\footnote{%
	See Appendix~\ref{app:logit_transform_of_uniform} for a derivation.}
To measure the discrepancy between the empirical~$\{r_i\}$ 
and the Logistic$(0,1)$ reference, 
we compute the Wasserstein-1 (Earth Mover’s) distance:
\begin{equation}
	\mathcal{L}_{\text{DC}}(\imtheta)
	= \frac{1}{N} \sum_{i=1}^{N} |\,r_i - u_i\,|,
\end{equation}
where $\{u_i\}_{i=1}^N$ are sorted samples from Logistic$(0,1)$.

This formulation encourages~$\imtheta$ to produce predictions whose implied noise distributions make the observed data statistically typical, rather than merely close in value, offering a principled alternative to pointwise losses. Algorithm \ref{alg:general_distributional_loss} summarizes the steps taken to calculate DC loss.

\begin{algorithm}[t]
\caption{Generalized Distributional Consistency Loss}\label{alg:general_distributional_loss}
\begin{algorithmic}[1]
\Require Estimate $\imtheta \in \Theta$, observed data $\imm$, number of measurements $N$, CDFs $F_1, \cdots, F_N$ corresponding to noise distributions $\mathcal{D}_1, \cdots, \mathcal{D}_N$, forward operator $f:\Theta\to\mathbb{R}^N$
\State $\hat{\imy} \gets f(\imtheta)$
\State $\ims \gets \left( F_1(m_1 | \hat{y}_1), \dots, F_N(m_N | \hat{y}_N) \right)$ \Comment{Evaluate CDF of model at observed data}
\State $\mathbf{r} \gets \logit(\ims)$ \Comment{Apply logit transform}
\State Sample $\mathbf{u} \sim \text{Logistic}(0,1)^N$ \Comment{Generate $N$ samples from target distribution}
\State Sort $\mathbf{r}$ and $\mathbf{u}$ in ascending order
\State $\mathcal{L}_{\text{DC}}(\imtheta) \gets \frac{1}{N} \sum_{i=1}^N \left| r_i - u_i \right|$ \Comment{Compute Wasserstein-1 distance}
\State \Return $ \mathcal{L}_{\text{DC}}(\imtheta) $
\end{algorithmic}
\end{algorithm}

\section{Relation to pointwise error functions}\label{sec:4-practicalities}

\subsection{Early-stage optimization behavior}\label{sec:4-early_opt_behav} 

To avoid numerical precision difficulties, we may wish to use an approximation that directly estimates $r_i$, avoiding the need to compute the CDF value $s_i$ and then apply the logit as separate steps (i.e. combining Steps 1 and 2 in Algorithm \ref{alg:general_distributional_loss}). For example, when $\mathcal{D}_i(\hat{y}_i) = \mathcal{N}(\hat{y}_i, \sigma^2)$, if $\frac{|m_i - \hat{y}_i|}{\sigma}$ is large then $s_i = \mathbb{P}_{c \sim \mathcal{N}(\hat{y}_i, \sigma^2)}(c \le m_i)$ may be exactly 0 or 1 under float precision. To resolve this issue, we may calculate $r_i = \logit(s_i)$ in one step, using appropriate approximations for $s_i$ and $\logit$. When $s_i \approx 1$ (i.e. $m_i \gg \hat{y}_i$), using the Laplace tail approximation to a Gaussian distribution we obtain:
\begin{equation}
    r_i = \logit(s_i) \approx \frac{(m_i-\hat{y}_i)^2}{2\sigma^2} + \ln\left(\frac{m_i-\hat{y}_i}{\sigma}\right) + \ln(\sqrt{2\pi}) \; .   \label{eq:normal_logistic_approx}
\end{equation}
Note that when predicted value $\hat{y}_i$ is much smaller than the measured value $m_i$, $r_i$ is dominated by the squared term in the approximation. A derivation and the $s_i \approx 0$ case are given in Appendix \ref{app:CDF_logit_approx}.

Now consider the DC loss contribution for index $i$ and the gradient it gives to $\hat y_i$ when $\hat y_i$ is far from $y_i$.
The per-sample term is $|r_i - u_i|$, where $r_i=\logit(F_i(m_i\mid \hat y_i))$ and $u_i$ is the corresponding sorted reference value from $\mathrm{Logistic}(0,1)$.
When $\frac{|m_i-\hat y_i|}{\sigma}$ is large, $|r_i|$ dominates the bounded $u_i$, so $|r_i - u_i|\approx |r_i|$ and the gradient w.r.t.\ $\hat y_i$ is governed by $\partial r_i/\partial \hat y_i$.
Using the Gaussian tail approximation in \eqref{eq:normal_logistic_approx}, $\partial r_i/\partial \hat y_i \approx -(m_i-\hat y_i)/\sigma^2$.
Thus, \textbf{far from the solution, DC loss provides essentially the same pointwise update direction as MSE on the noisy measurement}, with differences emerging only near the data-consistent regime.

A similar result holds for Poisson noise, with DC loss exhibiting similar behavior to the negative Poisson log-likelihood function when $\hat{y}_i$ is far from the true signal value $y_i$ (see Appendix \ref{app:CDF_logit_approx}).

\subsection{Late-stage optimization behavior \& interaction with regularization} \label{sec:4-regularization}

\begin{figure}[t]
	\centering
	\includegraphics[width=0.5\linewidth]{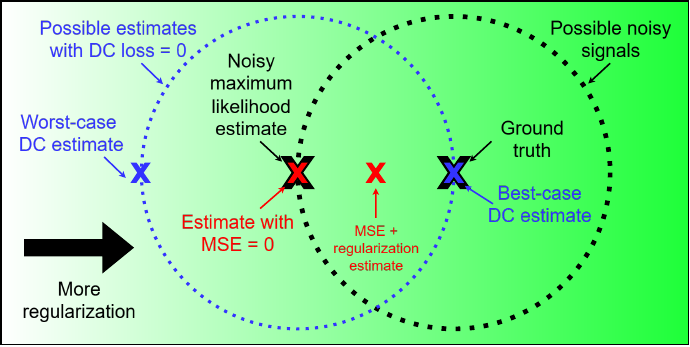}
	\caption{Interaction of DC loss and MSE loss with regularization. Crosses denote the predicted noise-free values $\hat{\imy}$ implied by different parameter settings. With MSE loss, the estimate is pulled toward the noisy maximum-likelihood estimate (MLE); adding regularization forces a compromise between data-fidelity and regularity. With DC loss, many estimates that explain the data attain (near-)zero loss, forming a manifold around the MLE; adding regularization then simply selects among these data-consistent solutions rather than trading fidelity for regularity.}
	\label{fig:regularization_interaction}
\end{figure}

We have shown that DC loss behaves similarly to MSE/NLL when a given estimate is far from the maximum likelihood estimate. Now, we consider how DC loss behaves close to the true solution.

Consider the simple case where the forward operator $A$ is $\mathbf{I}$ and the noise model is Gaussian noise with variance $\sigma^2$. Then, the ground truth noise-free measurements $\imy$ are exactly the true parameters $\imthetagt$. Let the noise vector be $\mathbf{n} \sim \mathcal{N}(\mathbf{0}, \sigma^2)$. Then measurements $\imm$ are simply $\mathbf{m} = \imthetagt + \mathbf{n}$.

In this setting, a ``worst-case'' for DC loss minimization occurs when the optimizer attributes the noise with the \emph{wrong sign}, i.e. predicting $-\mathbf{n}$ instead of $\mathbf{n}$. Because DC loss enforces distributional (not pointwise) agreement, this estimate attains the same DC loss as the ground truth. Then, 
\begin{equation}
	\imtheta_{\text{worst-DC}} = \imm - (-\mathbf{n}) = \imm + \mathbf{n} = \imthetagt + 2\mathbf{n} \;.
\end{equation}
Hence, in the worst case, our $L_2$ error is $||2\mathbf{n}||_2$. Our best case is, of course, \textbf{zero} $L_2$ error. In comparison, using MSE as the data-fidelity term leads directly to the noisy maximum likelihood estimate, and a guaranteed $L_2$ error of $||\mathbf{n}||_2$. Figure \ref{fig:regularization_interaction} summarizes the conclusion from this analysis, namely that DC loss \textit{can but is not guaranteed} to deliver a lower error estimate than MSE loss, and that the quality of estimate depends also on the prior assumptions introduced and optimization path.

In the following section, we demonstrate this behavior empirically in different unsupervised inverse problem settings.

\section{Applications}\label{sec:4-applications}

\subsection{Deep image prior (DIP) for Gaussian denoising}\label{sec:4-DIP}

DIP \citep{ulyanov_deep_2020} is a popular method for unsupervised image denoising. In DIP, a randomly initialized convolutional neural network is given a fixed pure-noise input and trained, with an MSE objective, to reproduce the noisy target image. The network's inductive bias encourages the recovery of structured content, enabling unsupervised denoising, provided early stopping of training iterations is used to prevent overfitting the noise in the target image \citep{wang_early_2023}.

Instead of the standard MSE loss, we investigated using our DC loss with DIP to avoid overfitting without necessitating early stopping. In our experiments, we followed the method (including the neural architecture) of \citet{ulyanov_deep_2020}. Appendix \ref{app:DIP} gives the full experimental details.

\begin{figure}[t]
    \centering
    \includegraphics[width=1.0\textwidth]{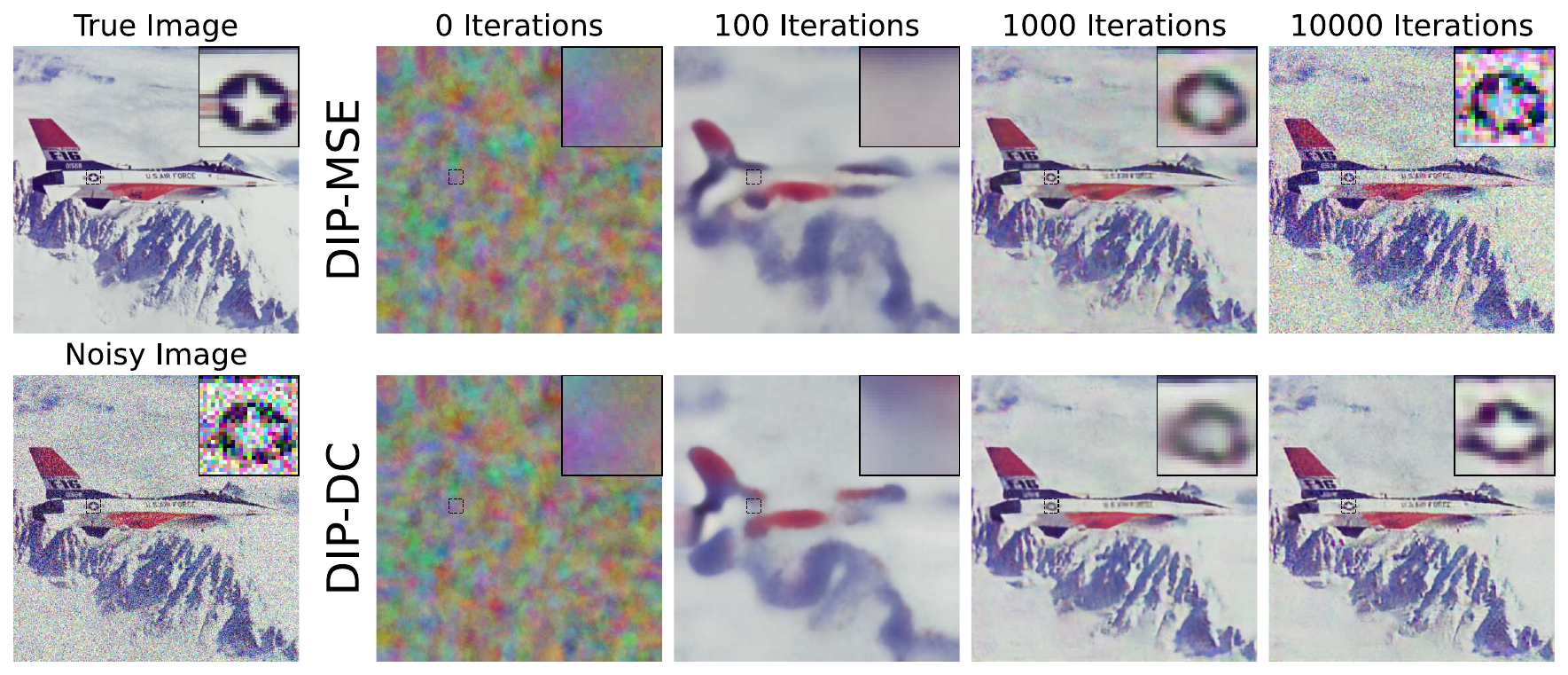}
    \caption{Images denoised with DIP over time ($\sigma = \frac{75}{255}$). Top row: DIP with MSE loss; bottom row: DIP with DC loss. DIP-MSE converges towards the noisy target; DIP-DC remains stable and does not display the same late-iteration degradation.}
    \label{fig:DIP_images}
\end{figure}

\begin{figure}[t]
    \centering
    \includegraphics[width=1.0\textwidth]{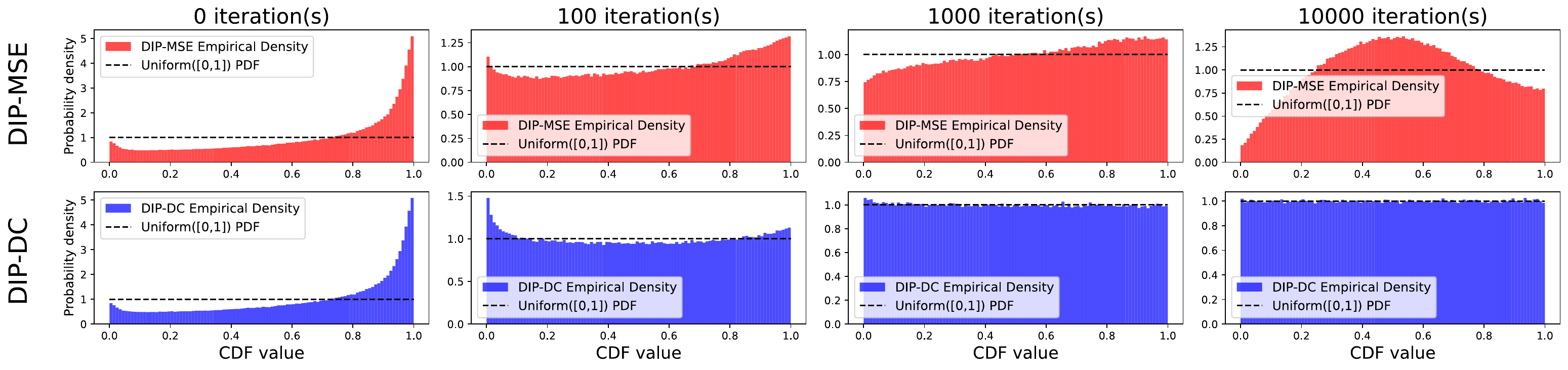}
    \caption{Histograms of the CDF values of measured data ($s_i$ values from Section \ref{sec:3-loss-formulation}) from performing DIP with MSE (top) and DC loss (bottom) as iteration increases ($\sigma = \frac{75}{255}$).}
    \label{fig:DIP_histograms}
\end{figure}

\begin{figure}[t]
    \centering
    \begin{subfigure}[b]{0.24\textwidth}
        \includegraphics[width=\textwidth]{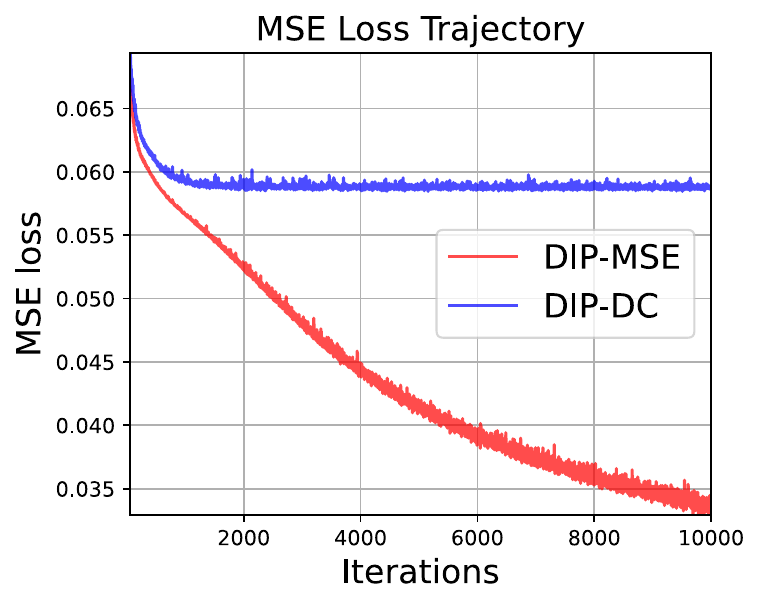}
        \caption{Loss trajectories of MSE loss during DIP.}
        \label{fig:DIP_loss_MSE}
    \end{subfigure}
    \hfill
    \begin{subfigure}[b]{0.24\textwidth}
        \includegraphics[width=\textwidth]{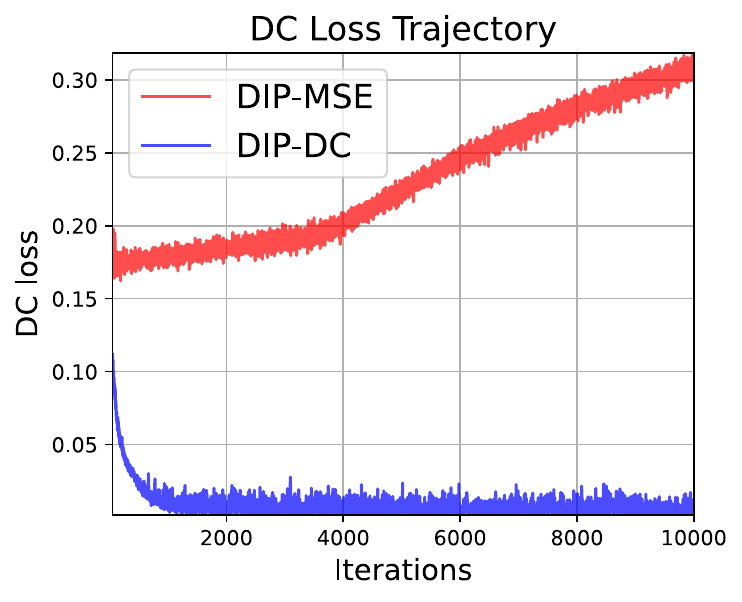}
        \caption{Loss trajectories of DC loss during DIP.}
        \label{fig:DIP_loss_DC}
    \end{subfigure}
    \hfill
    \begin{subfigure}[b]{0.24\textwidth}
        \includegraphics[width=\textwidth]{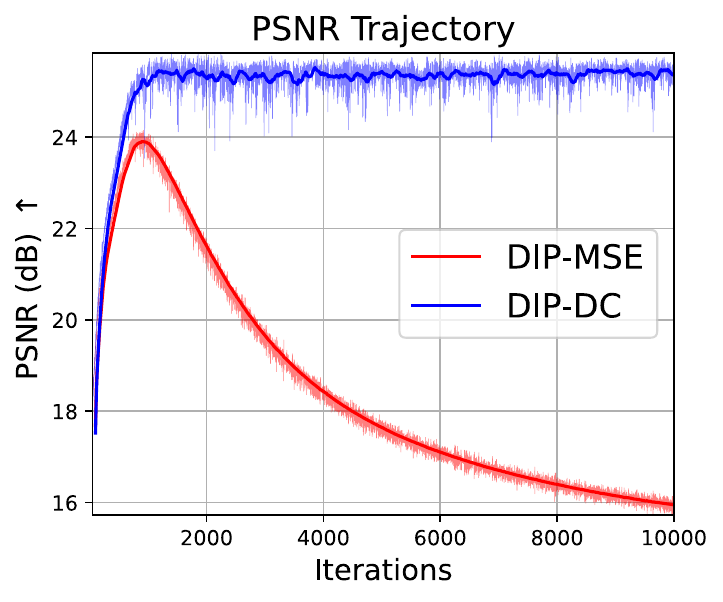}
        \caption{PSNR over time for $\sigma=\frac{75}{255}$ case.}
        \label{fig:DIP_loss_PSNR}
    \end{subfigure}
    \hfill
    \begin{subfigure}[b]{0.24\textwidth}
        \includegraphics[width=\textwidth]{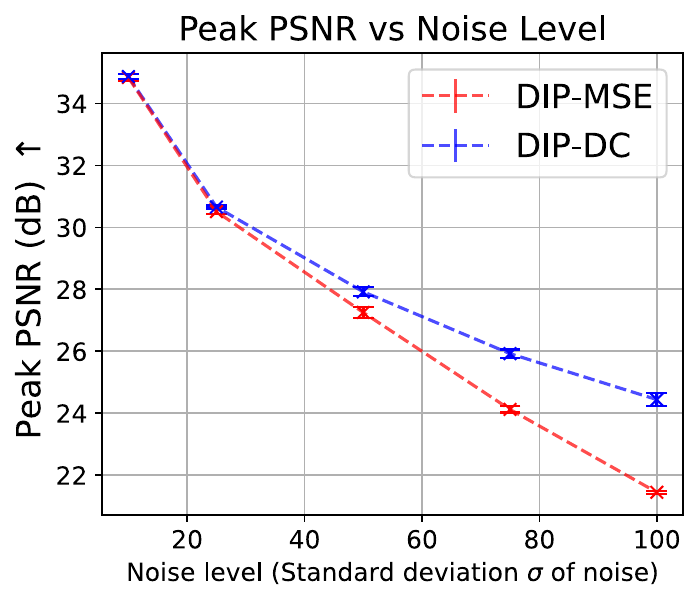}
        \caption{Peak PSNR achieved as $\sigma$ varies.}
        \label{fig:DIP_loss_peak_PSNR}
    \end{subfigure}
    \caption{DIP-MSE and DIP-DC losses and metrics. Error bars were calculated as 1.96$\times$ standard deviation on metrics over 5 random initializations of noise and network parameters.}
    \label{fig:DIP_metrics}
\end{figure}

In Figure \ref{fig:DIP_images}, we show the result of the DIP method using our DC loss (DIP-DC) and MSE (DIP-MSE). It is clear that after 1{,}000 iterations DIP-MSE begins to overfit to noise. By 10{,}000 iterations, this has resulted in severe degradation of the image quality, with the noise spikes from the noisy image visible in the DIP-MSE image. In contrast, the DIP-DC image suffers no such degradation.

This effect is also seen in the histograms of $s_i$ values (Figure~\ref{fig:DIP_histograms}) and loss trajectories (Figure~\ref{fig:DIP_metrics}) for each case. Figure \ref{fig:DIP_loss_MSE} demonstrates that the MSE loss tends to zero for DIP-MSE, while the MSE loss converges to a non-zero value for DIP-DC. Similarly, in Figure \ref{fig:DIP_loss_DC}, DC loss tends towards zero for DIP-DC, and converges to a value away from zero for DIP-MSE.

Further, in Figure \ref{fig:DIP_loss_PSNR} we plot PSNR during optimization. As is standard in DIP, MSE requires carefully optimised early stopping to avoid performance degradation. DIP-DC, by contrast, reaches its optimum PSNR and then plateaus without such intervention. Importantly, we compare against optimally early-stopped DIP-MSE (i.e., peak PSNR), and still observe that \textbf{DIP-DC achieves higher peak PSNR}. Figure \ref{fig:DIP_loss_peak_PSNR} shows this trend persists across noise levels, with larger gains at higher $\sigma$. We hypothesize that MSE begins fitting noise before finer details can fully benefit from the inductive prior, whereas DC reduces this incentive and stabilizes optimization near the data-consistent regime.

In Appendix \ref{app:DIP}, we show further results and trends with varied $\sigma$ values, as well as results evaluating the effect of misspecifying the noise model.

\subsection{Tomographic image reconstruction from Poisson noisy measurements}\label{sec:4-PET}

\begin{figure}[tbp]
	\centering
	\includegraphics[width=1.0\textwidth]{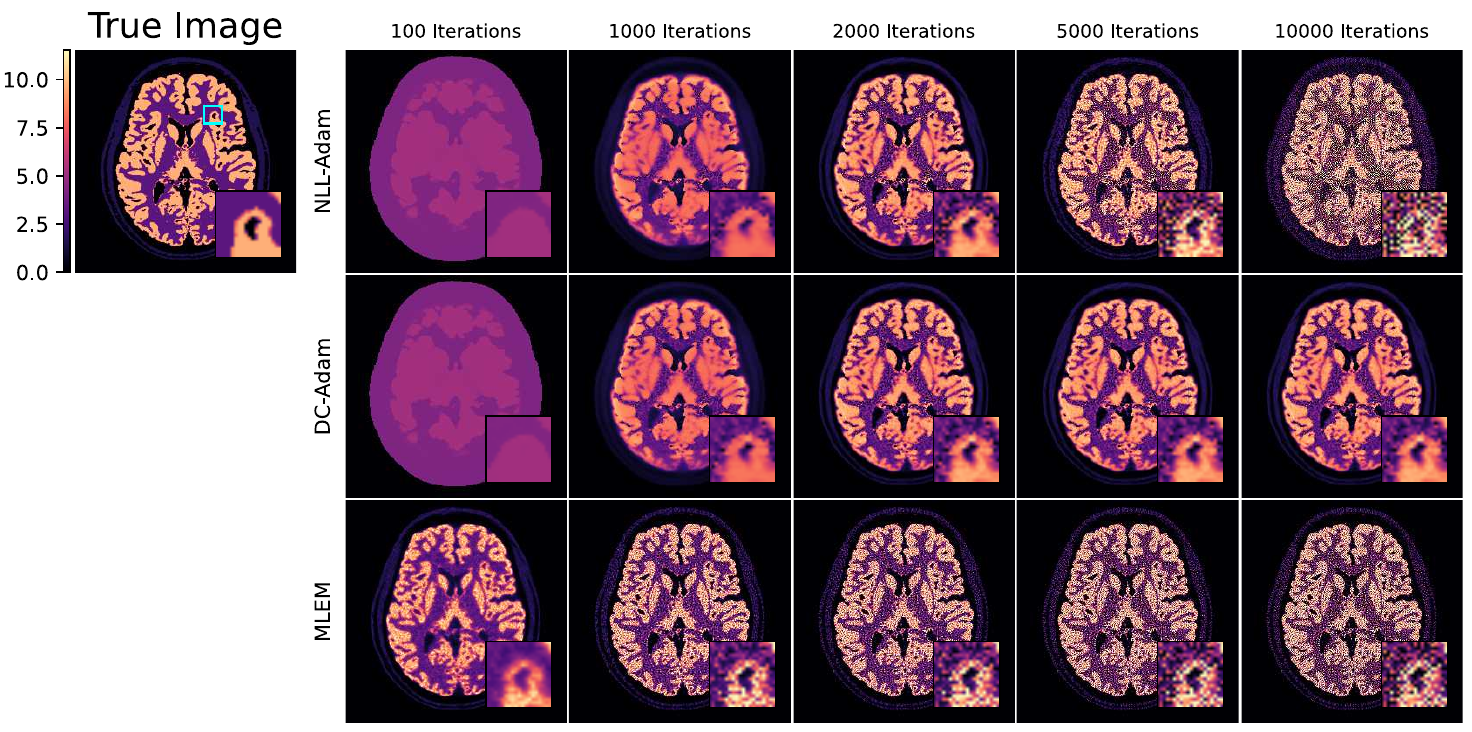}
	\caption{PET image reconstructions formed with three iterative algorithms (NLL-Adam, DC-Adam and MLEM) shown with increasing iteration number. The proposed DC loss (row 2) clearly avoids the noise overfitting with NLL as an objective, whether with Adam optimization (row 1) or with the MLEM optimization algorithm (row 3).}
	\label{fig:PET_images}
\end{figure}

Medical imaging features many noisy inverse problems, of which a notable example is \emph{positron emission tomography} (PET). In PET image reconstruction, a linear forward model $\imA$ maps the unknown activity image $\imtheta$ to the ideal sinogram $\imy = \imA\imtheta$. The detector then records counts $\imm \sim \text{Poisson}(\imy)$, introducing Poisson measurement noise that reconstruction algorithms must handle.

Clinical PET scans meet DC loss's assumptions of a known noise model and a large number of independent measurements. We investigate the use of DC loss for PET image reconstruction from noisy measurement data. Existing unregularized algorithms optimize an image estimate with respect to the negative Poisson log-likelihood (NLL):
\begin{equation}
    \text{NLL}(\imtheta | \imm) = \sum_{i=1}^{N} - m_i \ln([\imA\imtheta]_i) + [\imA\imtheta]_i + \ln(m_i!)\; .
\end{equation}
Clinically-used algorithms for PET reconstruction, such as MLEM (maximum-likelihood expectation-maximization) \citep{shepp_maximum_1982}, use early stopping on the number of iterations to avoid overfitting, which manifests as noise spikes and other image artifacts \citep{jonsson_total_nodate}.

Using a BrainWeb phantom \citep{cocosco_brainweb_1997}, we modeled the PET reconstruction process in 2D with a single ring of a cylindrical PET scanner \citep{schramm_parallelprojopen-source_2024}. We then optimized an image $\imtheta$ with respect to the NLL and DC loss using Adam \citep{kingma_adam_2015} (with learning rate $5 \times 10^{-3}$), as well as using the clinically-relevant MLEM algorithm for comparison.

\begin{figure}[tp]
    \centering
    \begin{subfigure}[b]{0.32\textwidth}
        \includegraphics[width=\textwidth]{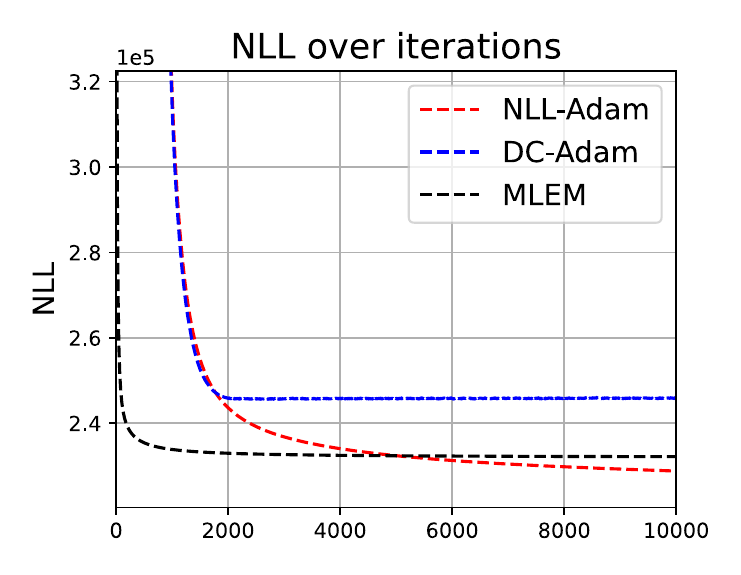}
        \caption{NLL loss trajectory.}
        \label{fig:PET_loss_nll}
    \end{subfigure}
    \hfill
    \begin{subfigure}[b]{0.32\textwidth}
        \includegraphics[width=\textwidth]{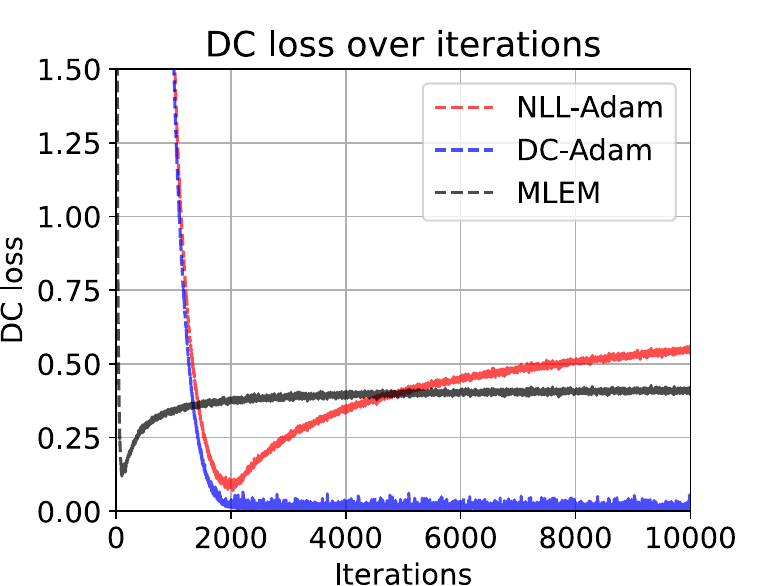}
        \caption{DC loss trajectory.}
        \label{fig:PET_loss_dc}
    \end{subfigure}
    \hfill
    \begin{subfigure}[b]{0.32\textwidth}
        \includegraphics[width=\textwidth]{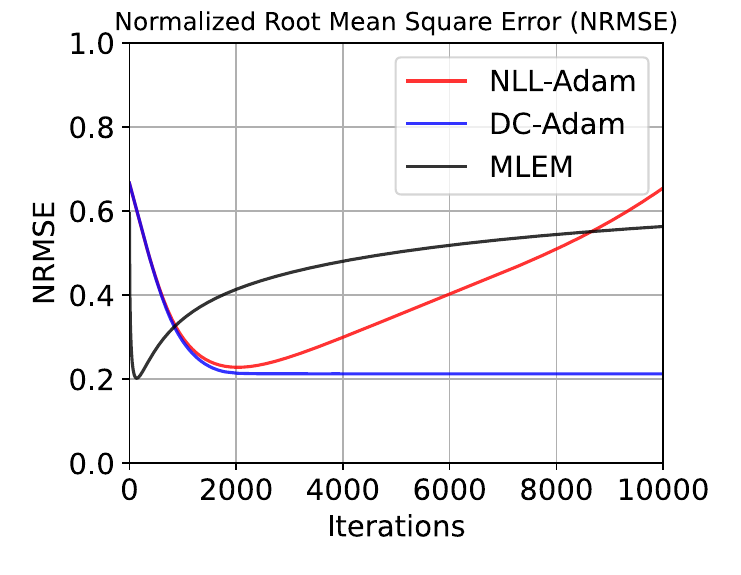}
        \caption{Error over iterations.}
        \label{fig:PET_nrmse}
    \end{subfigure}
    \caption{PET reconstruction losses and metrics. As iterations increase, only DC loss avoids noise-chasing behavior that leads NLL-Adam and MLEM towards worse error.}
    \label{fig:PET_metrics}
\end{figure}

\begin{figure}[tbp]
	\centering
	\begin{subfigure}[b]{0.3\textwidth}
		\includegraphics[width=\textwidth]{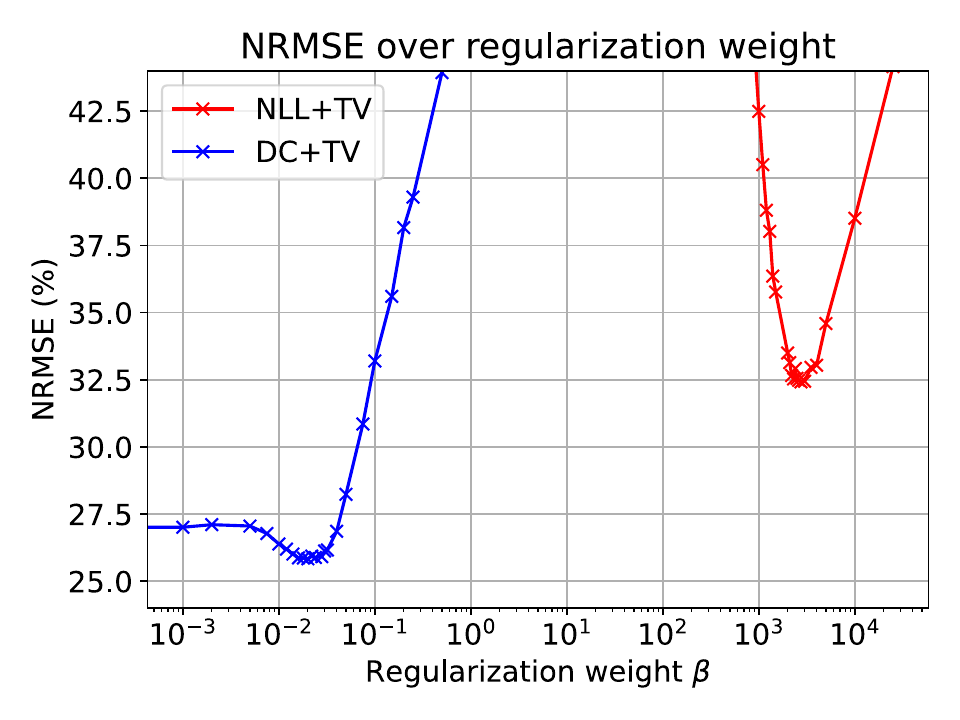}
		\caption{NRMSE vs.\ $\beta$; DC+TV reaches a lower minimum.}
		\label{fig:PET_reg_beta_nrmse}
	\end{subfigure}
	\hfill
	\begin{subfigure}[b]{0.68\textwidth}
		\includegraphics[width=\textwidth]{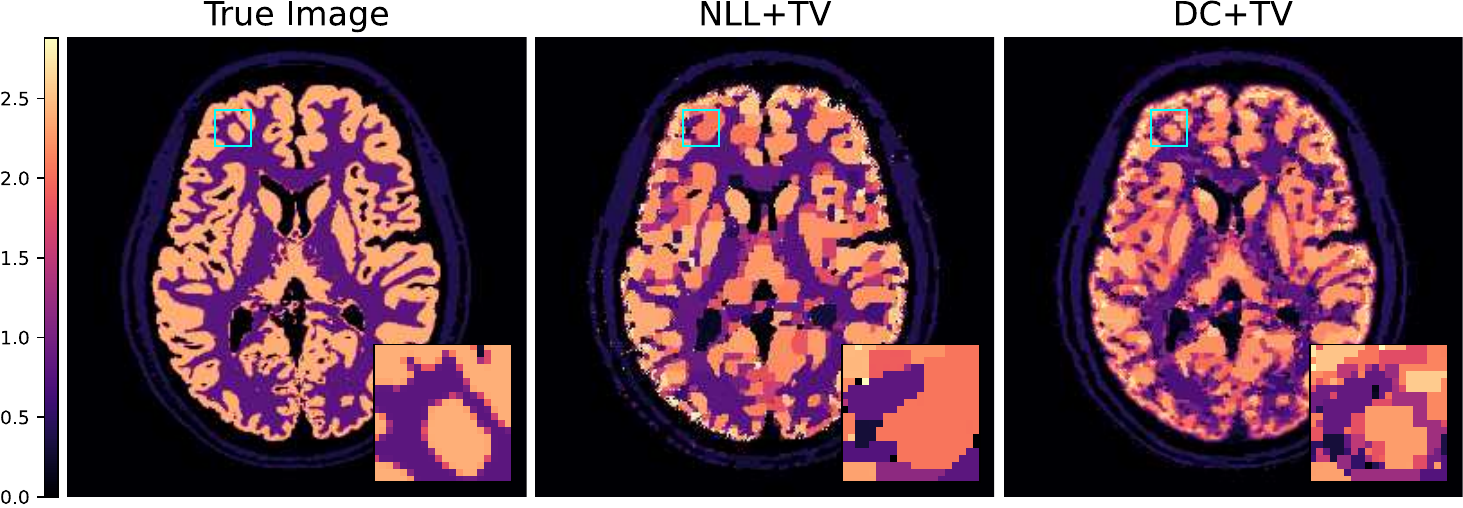}
		\caption{Optimal-$\beta$ images; NLL+TV over-smooths, DC+TV preserves detail.}
		\label{fig:PET_reg_best_images}
	\end{subfigure}
	\caption{Edge-preserving TV-regularized PET image reconstruction with different data terms. Under NLL, the regularizer must both suppress noise-fitting \textit{and} impose structure; with DC loss, noise-fitting is already controlled, so the regularizer can concentrate on promoting desirable properties. This yields lower error and better detail at smaller $\beta$ for DC+TV than for NLL+TV.}
	\label{fig:PET_reg}
\end{figure}

In Figure \ref{fig:PET_images}, we show the visual results from each algorithm over 10{,}000 iterations. We see that both MLEM and NLL-Adam overfit to noise in the image, with varying convergence rates. We further see that DC-Adam does not overfit the noise in the data. This is also observed in Figure \ref{fig:PET_loss_nll}, where the NLL of the DC-Adam image plateaus after 2{,}000 iterations, and in Figure \ref{fig:PET_loss_dc} where the NLL-Adam and MLEM images do not converge to zero DC loss. In Figure \ref{fig:PET_nrmse}, we see that DC-Adam converges at the minimum error of NLL-Adam, although MLEM achieves a slightly better minimum error than both (likely due to the non-negativity constraint in the MLEM algorithm).

In this setting the problem is less over-parameterized than in Sections \ref{sec:4-DIP} and Appendix \ref{app:deconvolution}, so perfectly fitting the noise is harder. Even so, DC loss delivers a clear advantage: it \textbf{converges to its best solution and stays there}, rather than briefly peaking and then chasing noise. In practice this yields stable late-iteration behavior without early stopping, fewer noise artifacts in highly iterated images, and better agreement with the assumed noise model. By contrast, conventional fidelity terms tend to overfit as iterations progress. Additional setup details and experiments on the impact of over-parameterization are provided in Appendix \ref{app:PET}.

\subsubsection{Regularization with DC loss}\label{sec:5-pet_reg}

Regularization is commonly used in medical image reconstruction to compensate for the noise–fitting tendency of standard data-fidelity terms. We therefore augmented the PET image reconstruction objective with a regularization penalty, which introduces the regularization strength hyperparameter $\beta$
\begin{equation}
	\ell(\imtheta, \imm) = NLL(\imtheta | \imm) + \beta \cdot TV(\imtheta) \;.
\end{equation}
For this experiment, we reconstructed from $4 \times$ lower count data while using an edge-preserving variant of total variation (TV) as the regularization term (details in Appendix \ref{app:pet_reg_further_section}).

DC+TV achieved better quantitative accuracy and stronger noise suppression at its optimum $\beta$ than NLL+TV, as shown by Figure~\ref{fig:PET_reg}. Very notably, even with little to no regularization, DC loss still delivers low NRMSE results. In contrast, as shown in Figure~\ref{fig:PET_reg_best_images}, NLL+TV required substantially greater regularization to counter noise fitting; the reconstruction with lowest NRMSE is noticeably smoother and less detailed than its DC+TV counterpart. Notably, the optimal $\beta$ for DC+TV also was orders of magnitude smaller than for NLL+TV (Figure~\ref{fig:PET_reg_beta_nrmse}); this aligns with our hypothesis that DC loss doesn't place regularization in opposition to data-fidelity (see Figure~\ref{fig:regularization_interaction}).

By curbing noise fitting at the data-fidelity term, DC loss enables a more favorable trade-off: weaker regularization suffices to suppress noise without sacrificing detail.

\subsubsection{Real PET feasibility study}\label{sec:5-pet_real}

\begin{figure}[t]
	\centering
	\includegraphics[width=0.9\textwidth]{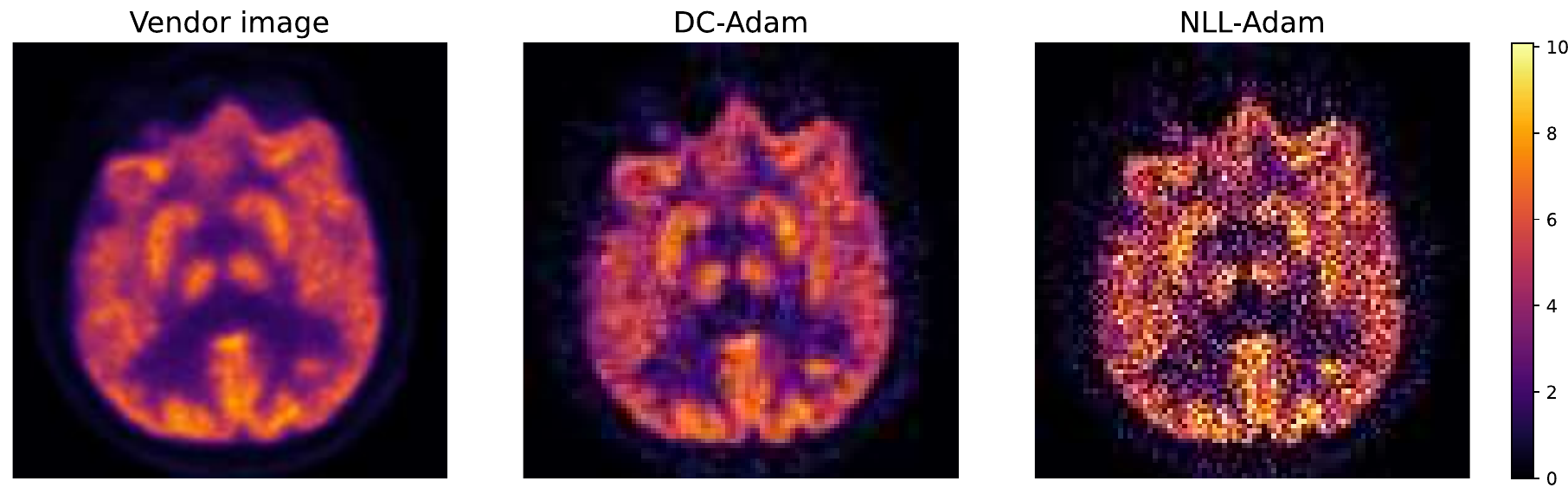}
	\caption{An axial slice from reconstruction of a real 3D PET dataset with DC-Adam or NLL-Adam, demonstrating the stable late-iteration behavior of DC loss as observed in synthetic experiments.}
	\label{fig:pet_real}
\end{figure}

To assess the practical feasibility of DC loss on real measurement data, we applied it to a
clinical 3D PET brain scan acquired on a Siemens Biograph mMR system. The dataset comprises
over 70 million lines of response, and reconstruction uses a full physical forward model
including attenuation, detector normalization, scatter, and randoms. An axial slice comparing
the vendor reconstruction with late-iteration DC-Adam and NLL-Adam reconstructions
(10{,}000 iterations, 21 subsets) is shown in Fig.~\ref{fig:pet_real}; full reconstruction
details are provided in Appendix~\ref{app:pet_real}.

In this clinical dataset, DC loss yields a stable late-iteration reconstruction, whereas
NLL-Adam displays the characteristic high-frequency amplification commonly observed at large
iteration counts. This experiment therefore demonstrates that DC loss is operationally feasible
on real clinical PET data and behaves as expected under realistic measurement and modelling
conditions.

This experiment is intended as a first demonstration of real-world applicability. In practice, the regimes where overfitting becomes most problematic are lower-dose acquisitions or high-resolution reconstructions, where stronger priors are typically used. Those settings provide natural next targets for evaluating DC loss in even more challenging scenarios.

\section{Discussion}\label{sec:5-Discussion}

Across the settings we have studied, DC loss initially tracks pointwise losses but then shifts trajectory and converges once the measurements are statistically explained, at which stage, in contrast, further updates with pointwise losses mostly chase noise rather than signal. This behavior matches the theory in Section~\ref{sec:4-practicalities}: once the set of CDF values is close to the uniform target, DC no longer rewards per-sample noise fitting.

Crucially, combining DC with regularization, whether implicit (e.g., DIP) or explicit (e.g., edge-preserving TV), yields stronger results than pointwise losses, because DC reduces the tension between data-fidelity and regularization; the fidelity term stops pulling toward noise, and the regularizer can preserve structure instead of fighting noise fitting.

As discussed in Section~\ref{sec:4-regularization}, DC loss effectively defines an equivalence class of solutions: any prediction that induces an (approximately) uniform distribution of CDF values across measurements attains low DC loss. This flattens the objective near good solutions. While this guards against noise-chasing, it does not by itself distinguish between plausible and implausible signals within the data-consistent regime; architectural priors or explicit penalties provide that structure.

We expect DC loss to be most useful (a) in over-parameterized regimes where many viable solutions exist and noise fitting is a concern, and (b) when paired with priors that encourage interpretable or realistic outputs. In the main text, we have shown DC loss integrating well with DIP and handcrafted priors. We further demonstrate applicability to 1D noisy deconvolution in Appendix~\ref{app:deconvolution} and noisy image deblurring with plug-and-play priors~\citep{venkatakrishnan_plug-and-play_2013} in Appendix \ref{app:pnp}; we also anticipate compatibility with score-based generative models for inverse problems~\citep{song_solving_2022}.

\subsection{Limitations and scope}\label{sec:5-limitations}

DC loss assumes independent measurements and a known (or estimable) noise model per measurement, and it benefits from a large enough set of measurements to reliably assess collective residual behavior. These conditions hold in many large-scale inverse problems (e.g., imaging, tomography, spatial sensing) but may limit applicability in small-data regimes or when noise is poorly characterized. Likewise, DC is not designed for settings where the ill-posedness of the forward operator, rather than noise, is the main challenge, such as inpainting or problems with large null spaces, where overfitting noisy measurements is not a primary concern. For discrete noise models (e.g., Poisson), exact uniformity of CDF values need not hold; the randomized probability integral transform could be employed (Appendix~\ref{app:probability_integral_transform_discrete}). Computationally, DC loss adds overhead time relative to pointwise methods (Appendix~\ref{app:timing}), although in our experiments this was not a bottleneck.

Finally, because DC loss tolerates a family of statistically valid solutions, it does not, on its own, enforce structural properties such as regularity, sparsity, or coherence. As with pointwise losses, priors or regularizers remain important to steer reconstructions toward plausible signals.

\subsection{Future work}\label{sec:6-future_work}

We focused on DC loss paired with non-learned regularization to isolate the behavior of the fidelity term itself. Appendix G shows that similar benefits also appear in a learned plug-and-play setting, suggesting that DC loss should extend naturally to learned priors. Further work should explore this integration more fully and study DC loss across additional operators, noise models, and problems.

\section{Conclusion}\label{sec:6-conclusion}
We introduced distributional consistency (DC) loss, which reframes the data term in inverse problems as statistical consistency with the noise model across many independent measurements in a single noisy dataset. The central advance is that DC loss removes the incentive for noise-chasing in the data term, so optimization no longer rewards fitting a particular noise realization and regularization can focus on structure rather than suppressing noise. Empirically, in DIP denoising and PET image reconstruction, DC loss reduced reliance on early stopping and improved noise–detail trade-offs, consistent with this picture. DC loss is simple to implement, compatible with standard priors, and provides a practical data-fidelity term for many inverse problems with known noise models.

\section*{Reproducibility}\label{sec:8-reproducibility}

To ensure the reproducibility of this work, further experimental details are listed in Appendices \ref{app:deconvolution_experimental_design}, \ref{app:DIP_experimental_design}, \ref{app:PET_experimental_design} and \ref{app:pet_reg_further_section}. Additionally, Appendix \ref{app:CDF_logit_approx} details the approximations used to enable practical implementation of DC loss, with algorithm pseudocodes given as Algorithms \ref{alg:gaussian_dc_loss}, \ref{alg:clipped_gaussian_dc_loss} and \ref{alg:poisson_dc_loss}. Appendices \ref{app:math_formulation} and \ref{app:dc-loss-error} also give more mathematical derivations used to support the formulation of DC loss.

Source code for the experiments is available at: \url{https://github.com/GeorgeWebber/Distributional-Consistency-Loss}.

\section*{Acknowledgements}

The authors acknowledge support from the EPSRC CDT in Smart Medical Imaging [EP/S022104/1] and a GSK Studentship.

\clearpage
\newpage
\appendix

\section*{Appendix}
\subsection*{Organization}
This appendix is organized as follows.

\begin{itemize}
	\item Appendix \ref{app:math_formulation} presents additional mathematical details supporting the derivation of DC loss.
	\item Appendix \ref{app:dc-loss-error} is more mathematically involved, and concerns the implications between whether low DC loss implies low signal error and vice versa.
	\item Appendix \ref{app:CDF_logit_approx} derives tail approximations used in the practical implementation of DC loss, and gives the pseudocodes for implementing DC loss.
	\item Appendix \ref{app:deconvolution} presents an illustrative example using DC loss to mitigate noise-fitting in 1D signal deconvolution, with results that support the conclusions of the main paper in an additional application area.
	\item  Appendix \ref{app:DIP} relates experimental details and further experimental results for the DIP section of the main paper, including the effect of a mismatched noise model.
	\item Appendix \ref{app:PET} presents analogous content for the PET section of the main paper.
	\item Appendix \ref{app:pnp} presents an example application of DC loss to a plug-and-play framework.
	\item Appendix \ref{app:timing} presents data on the time efficiency of DC loss.
	\item Appendix \ref{app:llm} declares the usage of LLMs as required by ICLR.
\end{itemize}

\section{Mathematical derivation of distributional consistency loss}\label{app:math_formulation}

\subsection{Probability Integral Transform (PIT)}\label{app:probability_integral_transform_continuous}

We will justify the statement in Section~\ref{sec:3-loss-formulation} that if the model parameters $\imtheta$ are close to the true parameters $\imthetagt$, then the values $s_i := F_i(f(\imtheta), m_i)$ are approximately uniformly distributed on $[0,1]$. This relies on the classical probability integral transform \citep{pearson_probability_1938}, which we state and prove below for convenience.

\begin{lemma}[Probability Integral Transform]
	Let $Z$ be a continuous random variable with CDF $F_Z$. Then the transformed variable
	\begin{equation}
		U := F_Z(Z)
	\end{equation}
	is uniformly distributed on the interval $[0,1]$.
\end{lemma}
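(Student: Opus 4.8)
The plan is to compute the cumulative distribution function of $U$ and check that it coincides with that of $\mathrm{Unif}[0,1]$. Since the law of a real random variable is pinned down by its CDF, and the uniform law on $[0,1]$ has CDF $u \mapsto u$ there (with value $0$ below and $1$ above), it suffices to show $\mathbb{P}(U \le u) = u$ for every $u \in [0,1]$. The ranges $u < 0$ and $u \ge 1$ are immediate because $U = F_Z(Z)$ is $[0,1]$-valued, so the real work is the case $u \in (0,1)$, with $u = 0$ handled by a short separate remark.

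First I would rewrite the event $\{F_Z(Z) \le u\}$ as a half-line event in $Z$. If $F_Z$ were strictly increasing one could just invert it and write $\{Z \le F_Z^{-1}(u)\}$, but in general $F_Z$ may be flat on intervals, so instead I would set $q := \sup\{z \in \mathbb{R} : F_Z(z) \le u\}$. Writing $S_u := \{z : F_Z(z) \le u\}$, for $u \in (0,1)$ this set is nonempty (since $F_Z(z) \to 0 < u$ as $z \to -\infty$) and bounded above (since $F_Z(z) \to 1 > u$ as $z \to +\infty$), so $q$ is finite. Because $Z$ is continuous, $F_Z$ is continuous, hence $S_u = F_Z^{-1}\bigl((-\infty, u]\bigr)$ is closed; combined with the fact that $S_u$ is a lower set of $\mathbb{R}$ (by monotonicity of $F_Z$), this forces $S_u = (-\infty, q]$.

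Next I would identify $F_Z(q) = u$: closedness gives $q \in S_u$, i.e.\ $F_Z(q) \le u$, while $F_Z(z) > u$ for every $z > q$ together with continuity of $F_Z$ at $q$ yields $F_Z(q) \ge u$. Combining the two previous steps,
\[
\mathbb{P}(U \le u) = \mathbb{P}\bigl(F_Z(Z) \le u\bigr) = \mathbb{P}(Z \in S_u) = \mathbb{P}(Z \le q) = F_Z(q) = u,
\]
which is the claim. For $u = 0$ I would note separately that $\{F_Z(Z) \le 0\} = \{F_Z(Z) = 0\}$ lies inside a half-line on which $F_Z$ vanishes identically (or is empty), hence has probability $0 = u$.

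The main obstacle is not any sharp estimate but the careful handling of the possible non-invertibility of $F_Z$: a continuous CDF can still have plateaus, so the naive move ``apply $F_Z^{-1}$'' is unavailable, and the identity $S_u = (-\infty, q]$ must be justified directly from monotonicity plus continuity. This is also the only place the continuity hypothesis on $Z$ is genuinely used — for discrete or mixed $Z$ the conclusion fails, consistent with the paper's later remark that a randomized probability integral transform is needed for Poisson data.
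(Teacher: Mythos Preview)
Your proof is correct and follows the same high-level strategy as the paper: compute the CDF of $U$ by rewriting $\{F_Z(Z)\le u\}$ as a half-line event in $Z$ via an inverse of $F_Z$, then read off $\mathbb{P}(U\le u)=u$.

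The difference is in rigor rather than route. The paper simply writes $\mathbb{P}(F_Z(Z)\le u)=\mathbb{P}(Z\le F_Z^{-1}(u))=F_Z(F_Z^{-1}(u))=u$ and justifies this by saying $F_Z$ is ``continuous and strictly increasing.'' Strict monotonicity, however, is not part of the lemma's hypothesis---a continuous random variable can have a CDF with plateaus (e.g., density supported on disjoint intervals)---so the paper is tacitly adding an assumption. Your construction of $q=\sup\{z:F_Z(z)\le u\}$ and the verification that $S_u=(-\infty,q]$ and $F_Z(q)=u$ is exactly what is needed to make the ``generalized inverse'' step honest in the stated generality. The paper's version is shorter and perfectly fine for the intended audience; yours actually proves the lemma as stated.
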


\begin{proof}
	Let $u \in [0,1]$. We compute the CDF of $U$:
	\begin{align}
		\mathbb{P}(U \le u) &= \mathbb{P}(F_Z(Z) \le u) \\
		&= \mathbb{P}(Z \le F_Z^{-1}(u)) \\
		&= F_Z(F_Z^{-1}(u)) = u \;.
	\end{align}
	
	Here, $F_Z^{-1}(u)$ denotes the (generalized) inverse CDF of $Z$, which exists because $F_Z$ is continuous and strictly increasing. The last equality follows from the definition of the inverse CDF.
	
	Thus, the distribution function of $U$ is $\mathbb{P}(U \le u) = u$, which is the CDF of the uniform distribution on $[0,1]$. Therefore, $U \sim \mathrm{Uniform}[0,1]$.
\end{proof}

\paragraph{Application.} In our context, for each $i$, let $m_i$ be a measurement assumed to be drawn from the continuous model distribution $\mathcal{D}_i(f_i(\imthetagt))$, which has CDF $F_i(\imthetagt, \cdot)$. Then the probability integral transform implies:
\begin{equation}
	s_i := F_i(\imthetagt, m_i) \sim \mathrm{Uniform}[0,1] \;.
\end{equation}
If $\imtheta \approx \imthetagt$, then $s_i = F_i(\imtheta, m_i)$ is approximately uniformly distributed as well (assuming $f$ and $\mathcal{D}$ are sufficiently well-behaved). This statement is made more precise in Appendix \ref{app:dc-loss-error}.

\subsection{Logit Transform of Uniform Distribution}\label{app:logit_transform_of_uniform}

Let $U \sim \mathrm{Uniform}(0,1)$ and define
\begin{equation}
	Z := \logit(U) = \log\left( \frac{U}{1 - U} \right) \;.
\end{equation}
We show that $Z \sim \text{Logistic}(0,1)$ by deriving its CDF directly.

By definition of $Z$, we compute its CDF using the transformation method:
\begin{equation}
	F_Z(z) = \mathbb{P}(Z \le z) = \mathbb{P}\left( \log\left( \frac{U}{1 - U} \right) \le z \right) \;.
\end{equation}
Apply the monotonic function $\exp$ to both sides:
\begin{equation}
	F_Z(z) =\mathbb{P}\left( \frac{U}{1 - U} \le e^z \right) \;.
\end{equation}
Solve the inequality for $U$:
\begin{equation}
	\frac{U}{1 - U} \le e^z \quad \iff \quad U \le \frac{e^z}{1 + e^z} \;.
\end{equation}
Thus,
\begin{equation}
	F_Z(z) = \mathbb{P}\left( U \le \frac{e^z}{1 + e^z} \right) \;.
\end{equation}
Since $U \sim \text{Uniform}(0,1)$, we have
\begin{equation}
	F_Z(z) = \frac{e^z}{1 + e^z} = \frac{1}{1 + e^{-z}} \;.
\end{equation}

The CDF of $Z$ matches that of a standard Logistic$(0,1)$ distribution. Therefore,
\begin{equation}
	\logit(U) \sim \text{Logistic}(0,1) \;.
\end{equation}

\subsection{Approximation of the CDF of a Discrete Random Variable}
\label{app:probability_integral_transform_discrete}

The probability integral transform in subsection~\ref{app:probability_integral_transform_continuous} applies to continuous noise.
For discrete noise, $F_Z$ is a step function, so the naive PIT is not exactly uniform. In this work we \emph{use the naive approach}, which is often adequate in practice; we also note a simple randomized variant that would make the transform exactly uniform if desired.

\paragraph{(A) Naive (non-randomized) PIT can be close to uniform when support is rich.}
Let $Z$ be a discrete random variable with CDF $F_Z$ and pmf $p_Z$, and let $m_i\stackrel{\text{i.i.d.}}{\sim} Z$.
Define $s_i=F_Z(m_i)$ and the empirical CDF
\begin{equation}
	\hat{F}_s(u) \;=\; \frac{1}{N} \sum_{i=1}^{N} \mathbf 1\!\big\{ s_i \le u \big\}.
\end{equation}
As $N\to\infty$, $\hat{F}_s$ converges to the distribution of $F_Z(Z)$, which places mass at the jump values of $F_Z$.
When $Z$ has many support points with relatively balanced probabilities, the mass of $F_Z(Z)$ is spread across many levels in $[0,1]$, and the empirical histogram of $\{s_i\}$ can look visually close to uniform (e.g. see Figure \ref{fig:app_validating_histos_PET}). 
This heuristic supports the empirical usefulness of DC loss with discrete noise even without randomization.

\paragraph{(B) Randomized PIT makes the transform exactly uniform (discrete case).}
Define an independent $U\sim \mathrm{Uniform}[0,1]$ and the \emph{randomized PIT}
\begin{equation}
	S^{\mathrm{rnd}} \;:=\; F_Z(Z^-) \;+\; U\,p_Z(Z),
	\qquad \text{where } F_Z(z^-)=\lim_{t\uparrow z} F_Z(t).
\end{equation}
A direct calculation shows $S^{\mathrm{rnd}}\sim \mathrm{Uniform}[0,1]$.
Indeed, for any $u\in[0,1]$,
\begin{align}
	\mathbb P\!\big(S^{\mathrm{rnd}}\le u\big)
	&= \sum_{z} \mathbb P(Z=z)\, \mathbb P\!\Big(F_Z(z^-)+U\,p_Z(z)\le u \,\Big|\, Z=z\Big) \\
	&= \sum_{z} p_Z(z)\, \frac{\big(u - F_Z(z^-)\big)_+}{p_Z(z)}\, \mathbf 1\!\{u \le F_Z(z)\}
	\;=\; u,
\end{align}
where $(\cdot)_+$ is the positive part and we used that the jumps of $F_Z$ partition $[0,1]$.
Applying the (monotone) logit map gives
\begin{equation}
	R^{\mathrm{rnd}} \;:=\; \mathrm{logit}\!\big(S^{\mathrm{rnd}}\big) \ \sim\ \mathrm{Logistic}(0,1).
\end{equation}
Therefore, at the population level the DC loss is exactly zero.

\paragraph{Practical takeaway.}
Our experiments used the naive PIT for discrete noise (for simplicity).
If exact uniformity (and hence exact logisticity after the logit map) is desired at the population level, one could instead apply the randomized PIT above with no other changes to the framework.

\clearpage
\newpage

\section{Relationship between DC Loss and Error}
\label{app:dc-loss-error}

\paragraph{Notation.}
Notation follows the main text. We briefly restate the pieces used below.
Let $\imthetagt$ denote the ground-truth parameter, $f$ the forward operator, and $\imy=f(\imthetagt)\in\mathbb R^N$ the true signal vector.
For any candidate parameter $\imtheta$, write $\hat{\imy}=f(\imtheta)$.
For each $i\in\{1,\ldots,N\}$, the observation $m_i$ (the $i$th component of $\imm\in\mathbb R^N$) is drawn from a parametric family $\mathcal D_i(y_i)$ with continuous CDF $F_i(\cdot\mid y_i)$.
Define the probability integral transform (PIT)
\begin{equation}
	s_i(\imtheta)=F_i\!\big(m_i\mid \hat y_i\big)\in(0,1),
\end{equation}
and set
\begin{equation}
	r_i(\imtheta)=\mathrm{logit}\!\big(s_i(\imtheta)\big)\in\mathbb R.
\end{equation}
Let the empirical residual-score measure be
\begin{equation}
	\hat P_{\{r_i(\imtheta)\}}=\frac{1}{N}\sum_{i=1}^N \delta_{r_i(\imtheta)}.
\end{equation}
Independently, draw $Z_1,\ldots,Z_M \stackrel{\text{i.i.d.}}{\sim}\mathrm{Logistic}(0,1)$ and define the empirical logistic measure
\begin{equation}
	\hat P^{\mathrm{log}}_{M}=\frac{1}{M}\sum_{j=1}^M \delta_{Z_j}.
\end{equation}
The (empirical two-sample) \emph{DC loss} is
\begin{equation}
	\mathcal L_{N,M}(\imtheta)
	\;=\;
	\mathsf W_1\!\big(\hat P_{\{r_i(\imtheta)\}},\, \hat P^{\mathrm{log}}_{M}\big).
\end{equation}
Signal error\footnote{In this section, signal error is taken to mean the error on the forward-modeled estimate.} is measured by the average squared error:
\begin{equation}
	\mathrm{Err}_2(\imtheta)=\frac{1}{N}\sum_{i=1}^N \big(\hat y_i-y_i\big)^2.
\end{equation}

\subsection{Zero Signal Error $\Rightarrow$ Near-Zero Empirical DC Loss}

\begin{proposition}
	If $\mathrm{Err}_2(\imtheta)=0$, then $\hat y_i=y_i$ for all $i$, and the following hold:
	
	\medskip
	\noindent\textup{(a) Expectation identity.}
	For each $i$ and any bounded measurable test function $f$,
	\begin{equation}
		\mathbb E\!\big[f\big(r_i(\imtheta)\big)\big]
		\;=\;
		\mathbb E_{Z\sim \mathrm{Logistic}(0,1)}[f(Z)].
	\end{equation}
	
	\noindent\textup{(b) Empirical two-sample convergence.}
	As $N\to\infty$ and $M\to\infty$,
	\begin{equation}
		\mathcal L_{N,M}(\imtheta)
		\;=\;
		\mathsf W_1\!\big(\hat P_{\{r_i(\imtheta)\}},\, \hat P^{\mathrm{log}}_{M}\big)
		\xrightarrow{\ \text{a.s.}\ } 0.
	\end{equation}
	In particular, this holds if $M=N$ and both tend to infinity.
	
	\noindent\textup{(c) Two-sample rate in expectation.}
	There exists a universal constant $C<\infty$ such that for all $N,M\ge1$,
	\begin{equation}
		\mathbb E\big[\mathcal L_{N,M}(\imtheta)\big]\;\le\;
		C\!\left(\frac{1}{\sqrt{N}} + \frac{1}{\sqrt{M}}\right).
	\end{equation}
\end{proposition}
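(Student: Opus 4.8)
The proof plan is to isolate the single probabilistic fact that drives all three parts: under $\mathrm{Err}_2(\imtheta)=0$ we have $\hat y_i=y_i$ for every $i$, so each observation $m_i$ is genuinely a draw from $\mathcal D_i(y_i)$ with continuous CDF $F_i(\cdot\mid y_i)$. Applying the probability integral transform (the lemma of Appendix~\ref{app:probability_integral_transform_continuous}) gives $s_i(\imtheta)=F_i(m_i\mid y_i)\sim\mathrm{Uniform}[0,1]$, and then the logit computation of Appendix~\ref{app:logit_transform_of_uniform} gives $r_i(\imtheta)=\mathrm{logit}(s_i(\imtheta))\sim\mathrm{Logistic}(0,1)$ (the map is well defined since $s_i\in(0,1)$ almost surely by continuity of $F_i$). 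The key structural point, worth stating explicitly, is that although the families $\mathcal D_i$ differ across $i$, the PIT collapses all of them to the \emph{same} $\mathrm{Uniform}[0,1]$ law; combined with the standing assumption that the $m_i$ are independent, this makes $(r_i(\imtheta))_{i=1}^N$ an i.i.d.\ sample from $\mathrm{Logistic}(0,1)$, also independent of $Z_1,\dots,Z_M$. Part~(a) is then immediate, since for a bounded measurable test function the expectation is taken against the $\mathrm{Logistic}(0,1)$ law of a single $r_i$.

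For parts~(b) and~(c), I would reduce to classical statements on convergence of one-dimensional empirical measures in $\mathsf W_1$. Writing $P^{\mathrm{log}}=\mathrm{Logistic}(0,1)$, the triangle inequality gives
\begin{equation}
\mathcal L_{N,M}(\imtheta)=\mathsf W_1\!\big(\hat P_{\{r_i(\imtheta)\}},\hat P^{\mathrm{log}}_M\big)\;\le\;\mathsf W_1\!\big(\hat P_{\{r_i(\imtheta)\}},P^{\mathrm{log}}\big)+\mathsf W_1\!\big(P^{\mathrm{log}},\hat P^{\mathrm{log}}_M\big).
\end{equation}
Since $P^{\mathrm{log}}$ has a finite first moment, the empirical measure of an i.i.d.\ sample from it converges to it in $\mathsf W_1$ almost surely (the one-dimensional Varadarajan-type statement, equivalently Glivenko--Cantelli together with uniform integrability of first moments from the SLLN). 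Applying this to the first term (an i.i.d.\ $\mathrm{Logistic}$ sample of size $N$ by Part~(a)) and to the second term (the sample $Z_1,\dots,Z_M$) and letting $N,M\to\infty$ yields~(b), including the case $M=N$. For~(c) I would use the one-dimensional identity $\mathsf W_1(\mu,\nu)=\int_{\mathbb R}|F_\mu(x)-F_\nu(x)|\,dx$, apply Tonelli, and bound $\mathbb E\big|\hat F_n(x)-F^{\mathrm{log}}(x)\big|\le\sqrt{\Var(\hat F_n(x))}=\sqrt{F^{\mathrm{log}}(x)\big(1-F^{\mathrm{log}}(x)\big)/n}$ (Jensen plus unbiasedness of the empirical CDF), giving
\begin{equation}
\mathbb E\big[\mathsf W_1(\hat P_n,P^{\mathrm{log}})\big]\;\le\;\frac{1}{\sqrt n}\int_{\mathbb R}\sqrt{F^{\mathrm{log}}(x)\big(1-F^{\mathrm{log}}(x)\big)}\,dx\;=:\;\frac{C_0}{\sqrt n}.
\end{equation}
Here $F^{\mathrm{log}}(x)(1-F^{\mathrm{log}}(x))=e^{-x}/(1+e^{-x})^2$, whose square root decays like $e^{-|x|/2}$ at both tails, so $C_0<\infty$ and depends only on the logistic law (hence is universal); taking $n=N$ and $n=M$ and summing gives $\mathbb E[\mathcal L_{N,M}(\imtheta)]\le C_0\,(1/\sqrt N+1/\sqrt M)$, i.e.\ the claim with $C=C_0$.

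The remainder is routine bookkeeping, and the only places needing care are (i) making the i.i.d.\ claim for $(r_i(\imtheta))$ airtight despite the heterogeneity of the $\mathcal D_i$ — this is exactly where the PIT does its work, and it is also why the constant in~(c) ends up problem-independent — and (ii) selecting the correct form of the empirical-$\mathsf W_1$ results and verifying the tail integrability that makes $C_0$ finite. I would also note that~(b) can alternatively be recovered from~(c) by Markov's inequality and Borel--Cantelli along a polynomially spaced subsequence, but the direct appeal to the almost-sure $\mathsf W_1$ law of large numbers (valid in one dimension under a finite first moment) is cleaner and already covers the requested $M=N$ regime.
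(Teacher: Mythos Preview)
Your proposal is correct and follows essentially the same route as the paper: deduce $r_i(\imtheta)\sim\mathrm{Logistic}(0,1)$ from the PIT and the logit map, obtain (a) immediately, and for (b)--(c) insert the population law $P^{\mathrm{log}}$ via the triangle inequality and appeal to one-dimensional empirical-$\mathsf W_1$ results. The one place you diverge is the rate bound in (c): the paper cites ``the quantile representation and the Dvoretzky--Kiefer--Wolfowitz inequality,'' whereas you use the CDF-integral identity $\mathsf W_1(\mu,\nu)=\int|F_\mu-F_\nu|$ together with the pointwise bound $\mathbb E|\hat F_n-F|\le\sqrt{F(1-F)/n}$. Your version is arguably cleaner here, since it yields an explicit finite constant $C_0=\int_{\mathbb R}\sqrt{F^{\mathrm{log}}(1-F^{\mathrm{log}})}\,dx$ by direct integration of an $e^{-|x|/2}$ tail, while a sup-norm DKW bound does not by itself control the $L^1$ integral over $\mathbb R$ without an additional tail-truncation step that the paper leaves implicit.
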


\medskip
\noindent\textit{Proof.}
Since $\mathrm{Err}_2(\imtheta)=0$, we have $\hat y_i=y_i$ for every $i$.
By the probability integral transform (with $F_i$ continuous),
\begin{equation}
	s_i(\imtheta)=F_i(m_i\mid y_i)\sim \mathrm{Uniform}[0,1].
\end{equation}
Because $\mathrm{logit}:(0,1)\to\mathbb R$ is measurable and strictly increasing, the image of a uniform variable is standard logistic; hence
\begin{equation}
	r_i(\imtheta)=\mathrm{logit}\!\big(s_i(\imtheta)\big)\sim \mathrm{Logistic}(0,1),
\end{equation}
which yields (a): for any bounded measurable $f$,
\begin{equation}
	\mathbb E\!\big[f\big(r_i(\imtheta)\big)\big]=\mathbb E_{Z\sim\mathrm{Logistic}(0,1)}[f(Z)].
\end{equation}
For (b), write $P^{\mathrm{log}}$ for the population logistic distribution and apply the triangle inequality:
\begin{equation}
	\mathsf W_1\!\big(\hat P_{\{r_i(\imtheta)\}},\, \hat P^{\mathrm{log}}_{M}\big)
	\le
	\mathsf W_1\!\big(\hat P_{\{r_i(\imtheta)\}},\, P^{\mathrm{log}}\big)
	+
	\mathsf W_1\!\big(P^{\mathrm{log}},\, \hat P^{\mathrm{log}}_{M}\big).
\end{equation}
Conditionally on $\{\hat y_i=y_i\}$, the sample $\{r_i(\imtheta)\}_{i=1}^N$ is i.i.d.\ from $P^{\mathrm{log}}$ with finite first moment, and similarly $\{Z_j\}_{j=1}^M$ is i.i.d.\ from $P^{\mathrm{log}}$ and independent.
By standard results for empirical measures on $\mathbb R$,
\begin{align}
	&\mathsf W_1\!\big(\hat P_{\{r_i(\imtheta)\}},\, P^{\mathrm{log}}\big)\xrightarrow{\ \text{a.s.}\ }0 \quad \text{as } N\to\infty, \\
	&\mathsf W_1\!\big(P^{\mathrm{log}},\, \hat P^{\mathrm{log}}_{M}\big)\xrightarrow{\ \text{a.s.}\ }0 \quad \text{as } M\to\infty,
\end{align}
hence $\mathcal L_{N,M}(\imtheta)\xrightarrow{\ \text{a.s.}\ }0$ as $N,M\to\infty$.

For (c), take expectations and use known one-dimensional bounds for the empirical Wasserstein-1 distance (via the quantile representation and the Dvoretzky--Kiefer--Wolfowitz inequality):
\begin{equation}
	\mathbb E\!\big[\mathsf W_1\!\big(\hat P_{\{r_i(\imtheta)\}},\, P^{\mathrm{log}}\big)\big]\le \frac{C}{\sqrt{N}},
	\qquad
	\mathbb E\!\big[\mathsf W_1\!\big(P^{\mathrm{log}},\, \hat P^{\mathrm{log}}_{M}\big)\big]\le \frac{C}{\sqrt{M}}.
\end{equation}
Combining with the triangle inequality gives
\begin{equation}
	\mathbb E\!\big[\mathcal L_{N,M}(\imtheta)\big]\;\le\;
	C\!\left(\frac{1}{\sqrt{N}}+\frac{1}{\sqrt{M}}\right).
\end{equation}
\qed

\subsection{Small Signal Error $\Rightarrow$ Small DC Loss}

We now argue that when the forward-model error is small, the empirical DC loss must also be small. Intuitively, if $\hat y_i$ is close to $y_i$ for each $i$, then the corresponding PIT values $s_i(\imtheta)$ are close to the uniform law, hence their logit-transform $r_i(\imtheta)$ is close to the logistic law, leading to small Wasserstein distance.

\begin{proposition}
	Suppose that for each $i$ the CDF $F_i(\cdot\mid \eta)$ is Lipschitz-continuous in $\eta$, with constant $\kappa_i \ge 0$ in the sense that
	\begin{equation}
		\big|F_i(m\mid \eta) - F_i(m\mid y_i)\big| \;\le\; \kappa_i |\eta - y_i|
		\qquad \text{for all $m$ in the support.}
	\end{equation}
	Then the residual scores $r_i(\imtheta)$ satisfy
	\begin{equation}
		\big|\,\mathrm{logit}(F_i(m_i\mid \hat y_i)) - \mathrm{logit}(F_i(m_i\mid y_i))\,\big|
		\;\le\; L_i\,|\hat y_i-y_i|,
	\end{equation}
	where $L_i$ is a finite constant depending on $\kappa_i$ and on the boundedness of the logistic derivative on the relevant interval.\footnote{Since $\mathrm{logit}$ has derivative $1/(s(1-s))$, it is Lipschitz on any compact sub-interval $[\epsilon,1-\epsilon]\subset(0,1)$. In practice, tail values contribute negligibly to Wasserstein-1, so this suffices.}
	
	Consequently, writing $\Delta_i=\hat y_i-y_i$, the empirical residual-score measure obeys
	\begin{equation}
		\mathsf W_1\!\big(\hat P_{\{r_i(\imtheta)\}},\, \hat P_{\{r_i(\imthetagt)\}}\big)
		\;\le\; \frac{1}{N}\sum_{i=1}^N L_i |\Delta_i|.
	\end{equation}
\end{proposition}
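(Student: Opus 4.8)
The plan is to reduce the Wasserstein bound to a per-coordinate Lipschitz estimate and then lift it by an explicit coupling. First I would prove the stated bound on the residual scores. Fix $i$ and note that $r_i(\imtheta) = \mathrm{logit}\big(F_i(m_i \mid \hat y_i)\big)$ is a composition: the inner map $\eta \mapsto F_i(m_i \mid \eta)$ is $\kappa_i$-Lipschitz by hypothesis, and the outer map $\mathrm{logit}$ has derivative $s \mapsto 1/(s(1-s))$, which on any compact subinterval $[\epsilon, 1-\epsilon] \subset (0,1)$ is bounded by $1/(\epsilon(1-\epsilon))$. Composing the two mean-value estimates gives
\begin{equation}
  \big|r_i(\imtheta) - r_i(\imthetagt)\big| \;\le\; \frac{\kappa_i}{\epsilon(1-\epsilon)}\,|\hat y_i - y_i| \;=:\; L_i\,|\Delta_i|,
\end{equation}
valid whenever the PIT values $F_i(m_i\mid \hat y_i)$ and $F_i(m_i\mid y_i)$ both lie in $[\epsilon,1-\epsilon]$; this is the constant $L_i$ referenced in the footnote.

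For the second claim, I would observe that $\hat P_{\{r_i(\imtheta)\}}$ and $\hat P_{\{r_i(\imthetagt)\}}$ are both empirical measures on $N$ atoms with uniform weights $1/N$, so the diagonal plan $\pi = \frac{1}{N}\sum_{i=1}^N \delta_{(r_i(\imtheta),\, r_i(\imthetagt))}$ is an admissible coupling of the two. Hence
\begin{equation}
  \mathsf W_1\!\big(\hat P_{\{r_i(\imtheta)\}},\, \hat P_{\{r_i(\imthetagt)\}}\big) \;\le\; \int |x-y|\,d\pi(x,y) \;=\; \frac{1}{N}\sum_{i=1}^N \big|r_i(\imtheta) - r_i(\imthetagt)\big|,
\end{equation}
and substituting the first-step bound yields $\mathsf W_1 \le \frac{1}{N}\sum_i L_i|\Delta_i|$. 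This diagonal-coupling step is the only place the matched, equal weights of the two empirical measures are used; it is the discrete analogue of the elementary fact that $\mathsf W_1$ is dominated by any pointwise displacement.

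The main obstacle is the unboundedness of the logit derivative as $s \to 0^+$ or $s \to 1^-$: the naive chain-rule constant is infinite, so the clean bound strictly holds only on the event that the relevant $s_i$ stay in a fixed $[\epsilon,1-\epsilon]$. To discharge this I would either (i) state the proposition under a mild regularity assumption ensuring each $F_i(m_i\mid \hat y_i)$ is bounded away from $\{0,1\}$, or (ii) split $\mathsf W_1$ into a bulk term (handled as above) and a tail term, and argue the latter is negligible — when $\imtheta \approx \imthetagt$ the PIT makes the expected number of $s_i$ in $(0,\epsilon)\cup(1-\epsilon,1)$ of order $2\epsilon N$, and the quantile representation of $\mathsf W_1$ on $\mathbb R$ shows these few extreme order statistics contribute only $O(\epsilon)$. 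I would present option (i) for a clean statement and note (ii) as the reason the restriction is harmless in the regime of interest, matching the footnote.
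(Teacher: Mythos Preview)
Your proposal is correct and follows essentially the same approach as the paper's proof sketch: compose the Lipschitz bound on $\eta\mapsto F_i(m_i\mid\eta)$ with the bounded logit derivative on a compact subinterval to get the per-coordinate estimate, then pass to the Wasserstein bound by what the paper calls ``averaging and using the definition of the Wasserstein-1 distance'' --- exactly your diagonal coupling $\pi=\tfrac1N\sum_i\delta_{(r_i(\imtheta),r_i(\imthetagt))}$. Your treatment is in fact more explicit than the paper's (which gives only a two-sentence sketch), and your discussion of the tail obstruction matches and expands on the footnote.
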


\begin{corollary}
	If $\mathrm{Err}_2(\imtheta)=\tfrac{1}{N}\sum_i \Delta_i^2$ is small, then the expected DC loss is also small. In particular,
	\begin{equation}
		\mathbb E\!\left[\mathcal L_{N,M}(\imtheta)\right]
		\;\le\; C\!\left(\sqrt{\mathrm{Err}_2(\imtheta)} \;+\; \frac{1}{\sqrt{M}}\right),
	\end{equation}
	where $C$ is a constant depending on the average Lipschitz constants $\tfrac{1}{N}\sum_i L_i$ and the logistic reference distribution.
\end{corollary}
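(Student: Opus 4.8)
The plan is to obtain the Corollary as a routine consequence of the preceding Proposition together with part~(c) of the zero-signal-error Proposition, exploiting that $\mathsf W_1$ is a metric and applying Cauchy--Schwarz once. First I would insert the empirical residual-score measure evaluated at the \emph{truth}, $\hat P_{\{r_i(\imthetagt)\}}$, as an intermediate point and use the triangle inequality:
\[
\mathcal L_{N,M}(\imtheta)
= \mathsf W_1\!\big(\hat P_{\{r_i(\imtheta)\}},\,\hat P^{\mathrm{log}}_{M}\big)
\;\le\; \mathsf W_1\!\big(\hat P_{\{r_i(\imtheta)\}},\,\hat P_{\{r_i(\imthetagt)\}}\big)
\;+\; \mathsf W_1\!\big(\hat P_{\{r_i(\imthetagt)\}},\,\hat P^{\mathrm{log}}_{M}\big).
\]
The second summand is exactly $\mathcal L_{N,M}(\imthetagt)$, i.e.\ the DC loss at a parameter with zero signal error (since $f(\imthetagt)=\imy$); by part~(c) of the earlier Proposition its expectation is at most $C\,(N^{-1/2}+M^{-1/2})$, where I would first invoke the probability integral transform at $\imthetagt$ to see that $\{r_i(\imthetagt)\}_{i=1}^N$ is an i.i.d.\ $\mathrm{Logistic}(0,1)$ sample.

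For the first summand I would quote the bound already established in the Proposition, namely $\mathsf W_1(\hat P_{\{r_i(\imtheta)\}},\hat P_{\{r_i(\imthetagt)\}})\le \tfrac1N\sum_i L_i|\Delta_i|$ (obtained from the coupling that pairs $r_i(\imtheta)$ with $r_i(\imthetagt)$ through the common observation $m_i$, which has the correct marginals). Applying Cauchy--Schwarz in the index $i$,
\[
\frac1N\sum_{i=1}^N L_i|\Delta_i|
\;\le\; \Big(\frac1N\sum_{i=1}^N L_i^2\Big)^{\!1/2}\Big(\frac1N\sum_{i=1}^N \Delta_i^2\Big)^{\!1/2}
\;=\; \bar L\,\sqrt{\mathrm{Err}_2(\imtheta)},
\]
with $\bar L=(\tfrac1N\sum_i L_i^2)^{1/2}$ the quadratic-mean Lipschitz constant (if one prefers, $\bar L=\sup_i L_i$ via Cauchy--Schwarz on the $\Delta_i$ alone, matching the ``average Lipschitz constant'' wording). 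Taking expectations and combining gives
\[
\mathbb E\!\left[\mathcal L_{N,M}(\imtheta)\right]\;\le\; \bar L\,\sqrt{\mathrm{Err}_2(\imtheta)}\;+\;C\big(N^{-1/2}+M^{-1/2}\big),
\]
and absorbing the $N^{-1/2}$ empirical-fluctuation term into the constant (or noting it is dominated when, say, $N\ge M$) yields the stated two-term form.

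There is no deep obstacle once the Proposition is available; the care goes into: (i) verifying that $\hat P_{\{r_i(\imthetagt)\}}$ is genuinely an $N$-sample empirical logistic measure so that part~(c) applies verbatim; (ii) ensuring every $\mathsf W_1$ distance and every expectation is finite, which reduces to $\mathrm{Logistic}(0,1)$ having a finite first moment and to the logit argument $F_i(m_i\mid\hat y_i)$ staying inside a compact subinterval of $(0,1)$ with negligible tail contribution to $\mathsf W_1$, exactly as flagged in the Proposition's footnote; and (iii) being transparent that the triangle inequality unavoidably produces an $N^{-1/2}$ term, so the honest bound is the three-term inequality displayed above and the two-term statement is its simplification. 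I would therefore prove the three-term inequality and then remark on the reduction.
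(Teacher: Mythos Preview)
Your proposal is correct and follows essentially the same route as the paper's proof sketch: apply the Proposition's Wasserstein bound $\tfrac1N\sum_i L_i|\Delta_i|$, convert to $\sqrt{\mathrm{Err}_2(\imtheta)}$ via Cauchy--Schwarz, and handle the remaining distance to the logistic reference by empirical-process rates. The paper's sketch is terser and only names the $M^{-1/2}$ contribution from $\hat P^{\mathrm{log}}_M$ versus its population limit; you are right that inserting $\hat P_{\{r_i(\imthetagt)\}}$ (or equivalently $P^{\mathrm{log}}$) as an intermediate point inevitably produces an additional $N^{-1/2}$ fluctuation term, so your three-term inequality is the honest statement and the two-term form in the Corollary is a simplification---your transparency on this point is an improvement over the sketch.
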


\medskip
\noindent\textit{Proof sketch.}
The Lipschitz assumption transfers small perturbations in the signal values $\hat y_i-y_i$ to small perturbations in the PIT values, and hence (by smoothness of the logit map on compact intervals) to the residual scores. Averaging and using the definition of the Wasserstein-1 distance yields the stated bound. Taking expectations and combining with the standard empirical-process bound for $\hat P^{\mathrm{log}}_M$ relative to its population limit then gives the corollary.
\qed

\subsection{Does Small DC Loss Imply Small Error?}

The converse direction is more delicate. In general, while small DC loss does imply some bounds on the signal error, it is \emph{not} true that as DC loss tends to zero that signal error must also tend to zero. DC loss is a \emph{global} distributional criterion: it requires the logistic residual values to be statistically indistinguishable from the logistic distribution, but this is only in aggregate and doesn't place a strong constraint on individual data points.

For example, in the Gaussian-noise identity model discussed in
Section~\ref{sec:4-regularization}, the estimate
$\imtheta = \imthetagt + 2\mathbf{n}$ produces residuals with the same
distributional law as the true $\imthetagt$, and hence yields nearly
zero DC loss, despite incurring $L_2$ error $\|\hat{\imy}-\imy\|_2 = \|2\mathbf{n}\|_2$.
Thus low DC loss can coexist with high\footnote{Note DC loss will never reward an estimate that produces residuals that are \emph{extremely} far from the true signal, as it still penalizes residuals that are many $\sigma$ from the noisy signal value. As $\sigma$ tends to zero, the bound that DC loss places on signal error will also tend to zero.} estimation error.

To obtain an implication of the form
\[
\mathcal L(\imtheta_j) \rightarrow 0 
\quad\Rightarrow\quad
\mathrm{Err}_2(\imtheta_j) \rightarrow 0 \quad \text{  for a sequence of estimates } (\imtheta_j),
\]
one would need to impose additional conditions, such as:
(i) identifiability of the forward model,
(ii) additional constraints to DC loss introducing uniqueness for the distributional match,
or (iii) the use of regularization to select among the manifold of DC-loss minimizers.

In short, while small signal error always forces small DC loss, the reverse is not automatic: DC loss controls distributional
consistency, not pointwise closeness, and only under stronger
structural assumptions can the two be shown to coincide exactly.

\paragraph{Remark (role of regularization).}
DC loss enforces a \emph{distributional} match: many parameter settings can produce logit residual values that look logistic, so DC loss by itself may not pick a unique solution. Regularization (implicit or explicit) is therefore essential: not as a penalty that trades off data-fidelity (as with MSE/NLL), but as a \emph{selection rule} within the set of DC-consistent solutions (see Figure~\ref{fig:regularization_interaction}). This lens explains why DC loss can yield either higher or lower estimation error than MSE, depending on the regularizer and optimization path. \emph{In practice}, across the applications we studied, DC loss performed reliably with standard forward models and regularizers. A plausible explanation is that, far from the MLE, DC loss behaves similarly to pointwise losses (see Section \ref{sec:4-early_opt_behav}), guiding optimization toward comparable estimates; near the solution, regularization selects among the (near-)zero–DC loss candidates in a way that aligns with prior structure.

\subsection{From Signal Error to Parameter Error}

In this appendix we have measured error in the signal space, via
\[
\mathrm{Err}_2(\imtheta) \;=\; \tfrac{1}{N}\sum_{i=1}^N (\hat y_i - y_i)^2,
\qquad \hat \imy = f(\imtheta),\; \imy = f(\imthetagt).
\]
This choice is natural because both the DC loss and pointwise losses are defined directly on the forward-modeled data, and it allows us to unlink the choice of forward model from the error analysis.  Nevertheless, one ultimately seeks control of the parameter error $\|\imtheta - \imthetagt\|$.

A simple link can be stated under mild conditions on the forward operator $f$.  If $f:\Theta \to \mathbb R^N$ is injective and locally
bi-Lipschitz around $\imthetagt$, then there exists a constant
$C>0$ such that
\begin{equation}
	\|\imtheta - \imthetagt\|
	\;\le\; C\, \|\hat \imy - \imy\|_2
	\;=\; C\,\sqrt{N\,\mathrm{Err}_2(\imtheta)}.
\end{equation}
Thus small signal error implies small parameter error.  In particular,
our results showing that small signal error leads to small DC loss
extend, under such regularity assumptions, to the conclusion that
parameter estimates with low error also have low DC loss.

\clearpage
\newpage
\section{Evaluating DC loss in practice: Calculating the logit function of a CDF value}\label{app:CDF_logit_approx}

In subsection \ref{sec:4-early_opt_behav}, we motivated the direct approximation of $\logit(s)$ to avoid numerical precision difficulties sequentially calculating $s$ and applying the $\logit$ function. In this appendix, we derive appropriate tail approximations to $\logit(s)$ for the cases of the Gaussian, Poisson and clipped Gaussian distributions. These approximations are useful in settings where $ s $ is close to 0 or 1 and separately evaluating the logit function and CDF is numerically unstable.

\subsection{Gaussian distribution}
\subsubsection{Approximating $\logit(s)$ for the tails of a Gaussian distribution}\label{app:CDF_logit_approx_gaussian}

Consider drawing a measurement $m$ from a Gaussian distribution $\mathcal{N}(\hat{y}, \sigma^2)$.
Let 
\begin{equation}
	s = \Phi\left( \frac{m - \hat{y}}{\sigma} \right) \; ,
\end{equation}
where $ \Phi $ denotes the standard Gaussian CDF, and
\begin{equation} 
	\logit(s) = \log\left( \frac{s}{1 - s} \right) \;.
\end{equation}
We derive asymptotic approximations for $ \logit(s) $ in the extreme tails, where $ s \approx 1 $ or $ s \approx 0 $. Define the standardized variable:
\begin{equation}
	z := \frac{m - \hat{y}}{\sigma} \;. 
\end{equation}

\subsubsection*{Right Tail: \texorpdfstring{$z \gg 0$}{z >> 0} (i.e., \texorpdfstring{$s \approx 1$}{s ≈ 1})}

Using a classical Laplace approximation for the upper tail of the Gaussian CDF,
\begin{equation}
	1 - \Phi(z) \approx \frac{1}{z \sqrt{2\pi}} e^{-z^2/2}, \quad \text{as } z \to \infty \;,
\end{equation}
we get:
\begin{equation}
	\logit(s) = \log\left( \frac{s}{1 - s} \right)
	= -\log\left( \frac{1 - s}{s} \right)
	\approx -\log(1 - s)
	\approx \frac{z^2}{2} + \log z + \log \sqrt{2\pi} \;.
\end{equation}

Thus,
\begin{equation}
	\logit\left( \Phi\left( \frac{m - \hat{y}}{\sigma} \right) \right)
	\approx \frac{1}{2} \left( \frac{m - \hat{y}}{\sigma} \right)^2 + \log\left( \frac{m - \hat{y}}{\sigma} \right) + \log \sqrt{2\pi} \;, \quad \text{as } m \gg \hat{y}\;.
\end{equation}

\subsubsection*{Left Tail: \texorpdfstring{$z \ll 0$}{z << 0} (i.e., \texorpdfstring{$s \approx 0$}{s ≈ 0})}

For large negative $ z $, we use the lower tail approximation:
\begin{equation}
	\Phi(z) \approx \frac{1}{|z| \sqrt{2\pi}} e^{-z^2/2}, \quad \text{as } z \to -\infty\;.
\end{equation}
Then,
\begin{equation}
	\logit(s) = \log(s) - \log(1 - s) \approx \log(s) \approx -\frac{z^2}{2} - \log|z| - \log \sqrt{2\pi}\;.
\end{equation}

So,
\begin{equation}
	\logit\left( \Phi\left( \frac{m - \hat{y}}{\sigma} \right) \right)
	\approx -\frac{1}{2} \left( \frac{m - \hat{y}}{\sigma} \right)^2 - \log\left| \frac{m - \hat{y}}{\sigma} \right| - \log \sqrt{2\pi} \;, \quad \text{as } m \ll \hat{y}\;.
\end{equation}

\subsubsection*{Summary}

Let $ z = \frac{m - \hat{y}}{\sigma} $, then:
\begin{equation}\label{eq:gaussian_tail_approx_cases}
	\logit\left( \Phi(z) \right) \approx 
	\begin{cases}
		\displaystyle \frac{z^2}{2} + \log z + \log \sqrt{2\pi}\;, & z \gg 0, \\
		\displaystyle -\frac{z^2}{2} - \log |z| - \log \sqrt{2\pi}\;, & z \ll 0 \\
		\displaystyle \log\left( \frac{\Phi(z)}{1-\Phi(z)} \right) \; & \text{otherwise}\;.
	\end{cases}
\end{equation}

\subsubsection{Pseudocode for calculating distributional consistency loss with Gaussian distribution}

\begin{algorithm}[t]
	\caption{Distributional Consistency Loss for Gaussian Noise}\label{alg:gaussian_dc_loss}
	\begin{algorithmic}[1]
		\Require Parameter estimate $\imtheta$, observed data $\imm$, standard deviation $\sigma$, number of measurements $N$, tail threshold $\tau$
		\State $\hat{\imy} \gets f(\imtheta)$ \Comment{Predicted signal}
		\State Initialize empty vector $\mathbf{z}$ of length $N$
		\For{$i = 1$ to $N$}
		\State $z_i \gets \frac{m_i - \hat{y}_i}{\sigma}$ \Comment{Standardized residual}
		\If{$z_i > \tau$}
		\State $r_i \gets \frac{z_i^2}{2} + \log(z_i) + \log\sqrt{2\pi}$ \Comment{Right tail approximation}
		\ElsIf{$z_i < -\tau$}
		\State $r_i \gets -\frac{z_i^2}{2} - \log(|z_i|) - \log\sqrt{2\pi}$ \Comment{Left tail approximation}
		\Else
		\State $s_i \gets \Phi(z_i)$ \Comment{Evaluate Gaussian CDF}
		\State $r_i \gets \log\left( \frac{s_i}{1 - s_i} \right)$ \Comment{Standard logit transform}
		\EndIf
		\EndFor
		\State Sample $\mathbf{u} \sim \text{Logistic}(0,1)^N$ \Comment{Generate $N$ samples from reference distribution}
		\State Sort $\mathbf{r}$ and $\mathbf{u}$ in ascending order
		\State $L \gets \frac{1}{N} \sum_{i=1}^N \left| r_i - u_i \right|$ \Comment{Wasserstein-1 distance}
		\State \Return $L$
	\end{algorithmic}
\end{algorithm}

Algorithm \ref{alg:gaussian_dc_loss} gives a full pseudocode for calculating the distributional consistency loss when $\mathcal{D}_i(\hat{y}) = \mathcal{N}(\hat{y}, \sigma^2) $.

\subsection{Clipped Gaussian CDF}\label{app:CDF_logit_approx_clipped_gaussian}

We extend our logit-CDF approximation to the case of a Gaussian distribution clipped to the interval $[0,1]$. This arises in applications where measurements are inherently bounded, such as normalized image intensities. To handle this, we define a smoothed approximation to the CDF that matches the Gaussian in the interior but transitions linearly near the boundaries to maintain continuity and gradient stability.

Let $\Phi(m; \hat{y}, \sigma)$ be the Gaussian CDF centered at $\hat{y}$ with standard deviation $\sigma$, and let $\epsilon > 0$ define the width of two linear regions near 0 and 1 respectively. We define a modified CDF $\tilde{F}(m|\hat{y})$ for $m \in [0,1]$ by:
\begin{equation}
	\tilde{F}(m | \hat{y}) =
	\begin{cases}
		\Phi(\epsilon; \hat{y}, \sigma) \cdot \dfrac{m}{\epsilon}, & 0 \le m < \epsilon, \\
		\Phi(m; \hat{y}, \sigma), & \epsilon \le m \le 1 - \epsilon, \\
		\Phi(1 - \epsilon; \hat{y}, \sigma) + \left(1 - \Phi(1 - \epsilon; \hat{y}, \sigma)\right) \cdot \dfrac{m - (1 - \epsilon)}{\epsilon}, & 1 - \epsilon < m \le 1 \;.
	\end{cases}
\end{equation}

This construction ensures a smooth approximation to a clipped Gaussian CDF while avoiding discontinuities in the derivative.

For measured values that are exactly 0 or 1, we uniformly resample the values to be within $[0, \epsilon]$ or $[1 - \epsilon, 1]$, respectively. This avoids degenerate gradients and provides a consistent treatment of boundary values.

To compute the logit of the CDF, we apply:
\begin{equation}
	\logit(\tilde{F}(m | \hat{y})) = \log\left( \frac{\tilde{F}(m | \hat{y})}{1 - \tilde{F}(m | \hat{y})} \right) \;.
\end{equation}
In the linear regions near 0 and 1, $\tilde{F}(m | \hat{y})$ behaves approximately linearly with respect to $m$, and no special tail approximation is needed beyond standard numerical clamping (we clamp to a small interval $[\epsilon_{\text{hard}}, 1 - \epsilon_{\text{hard}}]$ to avoid instability). In the central region of $m$ values, the logit is computed directly, with the tail approximations given in Equation \ref{eq:gaussian_tail_approx_cases} where $m$ and $\hat{y}$ are far apart.

This approach allows for stable and differentiable evaluation of $\logit(\tilde{F}(m | \hat{y}))$ across the full range of $m \in [0,1]$, including at the boundaries.

\subsubsection{Pseudocode for calculating distributional consistency loss with clipped Gaussian distribution}

Algorithm \ref{alg:clipped_gaussian_dc_loss} gives a full pseudocode for calculating the distributional consistency loss for a Gaussian distribution clipped to $[0,1]$.

\begin{algorithm}[t]
	\caption{Distributional Consistency Loss for Clipped Gaussian Noise (with Tail Approximation)}\label{alg:clipped_gaussian_dc_loss}
	\begin{algorithmic}[1]
		\Require Parameter estimate $\imtheta$, observed data $\imm$, standard deviation $\sigma$, number of measurements $N$, ramp width $\epsilon$, tail threshold $\delta$
		\State $\hat{\imy} \gets f(\imtheta)$ \Comment{Predicted signal}
		\State Initialize empty vector $\mathbf{r}$ of length $N$
		\For{$i = 1$ to $N$}
		\State $q \gets \hat{y}_i$
		\State $m \gets m_i$
		\If{$m = 0$}
		\State $m \gets \text{Uniform}(0, \epsilon)$ \Comment{Perturb endpoints to avoid discrete effects}
		\ElsIf{$m = 1$}
		\State $m \gets \text{Uniform}(1 - \epsilon, 1)$
		\EndIf
		\State $z \gets \frac{m - q}{\sigma}$ \Comment{Standardized residual}
		\State $c_{\epsilon} \gets \Phi\left( \frac{\epsilon - q}{\sigma} \right)$
		\State $c_{1 - \epsilon} \gets \Phi\left( \frac{1 - \epsilon - q}{\sigma} \right)$
		\If{$m < \epsilon$}
		\State $s \gets \left( \frac{m}{\epsilon} \right) \cdot c_{\epsilon}$ \Comment{Linear ramp left}
		\ElsIf{$m > 1 - \epsilon$}
		\State $s \gets c_{1 - \epsilon} + \left( \frac{m - (1 - \epsilon)}{\epsilon} \right) \cdot (1 - c_{1 - \epsilon})$ \Comment{Linear ramp right}
		\Else
		\State $s \gets \Phi(z)$ \Comment{Standard Gaussian CDF in central region}
		\EndIf
		\If{$s < \delta$} \Comment{Tail approximations to Logit of Gaussian CDF}
		\State $r_i \gets -\left( \frac{z^2}{2} + \log(|z|) + \log\sqrt{2\pi} \right)$
		\ElsIf{$s > 1 - \delta$}
		\State $r_i \gets \frac{z^2}{2} + \log(|z|) + \log\sqrt{2\pi}$
		\Else
		\State $r_i \gets \log\left( \frac{s}{1 - s} \right)$
		\EndIf
		\EndFor
		\State Sample $\mathbf{u} \sim \text{Logistic}(0,1)^N$ \Comment{Reference distribution}
		\State Sort $\mathbf{r}$ and $\mathbf{u}$ in ascending order
		\State $L \gets \frac{1}{N} \sum_{i=1}^N |r_i - u_i|$ \Comment{Wasserstein-1 distance}
		\State \Return $L$
	\end{algorithmic}
\end{algorithm}

\subsection{Poisson CDF}\label{app:CDF_logit_approx_poisson}

\subsubsection{Poisson-Gamma duality}

We consider how to efficiently evaluate the CDF of a one-dimensional Poisson random variable at a measured value, i.e. compute $s_i$ values as defined in Section \ref{sec:3-loss-formulation}. Suppose $ m \in \mathbb{N} $ is drawn from a Poisson distribution with unknown mean, and $ \hat{y} $ is a predicted mean value. We consider the following Poisson-Gamma duality lemma that shows we may equivalently use an appropriately defined Gamma CDF instead of a Poisson CDF:

\begin{lemma}
	For all $ \hat{y} > 0 $ and $ m \in \mathbb{N} $,
	\begin{equation}
		s(\hat{y}, m) = 1 - t(\hat{y}, m),
	\end{equation}
	where
	\begin{align}
		s(\hat{y}, m) = \sum_{k=0}^m \frac{e^{-\hat{y}} \hat{y}^k}{k!}\;, \quad
		t(\hat{y}, m) = \int_0^{\hat{y}} \frac{e^{-u} u^m}{m!} \, du\;.
	\end{align}
\end{lemma}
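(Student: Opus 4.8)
The plan is to prove the identity $s(\hat y, m) = 1 - t(\hat y, m)$ by showing that both sides solve the same first-order ODE in the variable $\hat y$ and agree in the limit $\hat y \downarrow 0$. Write $a_k(\hat y) := e^{-\hat y}\hat y^k / k!$ for the individual Poisson-pmf terms, so that $s(\hat y, m) = \sum_{k=0}^m a_k(\hat y)$. (An equally clean alternative, which I would at least mention, is an induction on $m$ driven by a single integration by parts.)

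First I would differentiate $s(\cdot, m)$ term by term. Since $a_0'(\hat y) = -a_0(\hat y)$ and $a_k'(\hat y) = -a_k(\hat y) + a_{k-1}(\hat y)$ for $k \ge 1$, the sum telescopes and one gets $\frac{d}{d\hat y}\, s(\hat y, m) = -a_m(\hat y) = -\,e^{-\hat y}\hat y^m/m!$. On the other side, the fundamental theorem of calculus gives $\frac{d}{d\hat y}\, t(\hat y, m) = e^{-\hat y}\hat y^m/m!$, hence $\frac{d}{d\hat y}\bigl(1 - t(\hat y, m)\bigr) = -\,e^{-\hat y}\hat y^m/m!$ as well. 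So $s(\cdot, m)$ and $1 - t(\cdot, m)$ have identical derivatives on $(0,\infty)$ and therefore differ by a constant.

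Next I would fix the constant by taking $\hat y \downarrow 0$. There $a_0(\hat y) \to 1$ and $a_k(\hat y) \to 0$ for $k \ge 1$, so $s(\hat y, m) \to 1$; meanwhile $t(\hat y, m) \to 0$ since the integrand is bounded near the origin and the interval of integration shrinks to a point, so $1 - t(\hat y, m) \to 1$ too. The constant is zero, and the claimed equality holds for every $\hat y > 0$ and every $m \in \mathbb{N}$.

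Honestly there is no genuine obstacle here — this is the classical Poisson--Gamma (Erlang) duality and the argument is routine calculus. The only places needing a little care are the telescoping bookkeeping when differentiating $s$ (treating the $k=0$ term separately from $k \ge 1$) and the boundary limit; the integration-by-parts/induction route instead hinges on the recursion $t(\hat y, m) = t(\hat y, m-1) - e^{-\hat y}\hat y^m/m!$ against $s(\hat y, m) = s(\hat y, m-1) + e^{-\hat y}\hat y^m/m!$ with base case $m=0$. As a sanity check I would also record the probabilistic reading: for a unit-rate Poisson process, $\{N(\hat y) \le m\} = \{T_{m+1} > \hat y\}$ where $T_{m+1}$ is the $(m{+}1)$st arrival time, Gamma$(m+1,1)$-distributed with density $e^{-u}u^m/m!$ on $(0,\infty)$; taking probabilities of both events is exactly $s(\hat y, m) = 1 - t(\hat y, m)$.
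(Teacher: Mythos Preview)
Your proof is correct. Your primary route---differentiating $s(\cdot,m)$ term by term to get the telescoping identity $\partial_{\hat y} s = -a_m$, matching it against $\partial_{\hat y}(1-t) = -a_m$ from the fundamental theorem of calculus, and pinning the constant at $\hat y\downarrow 0$---is a genuinely different (purely analytic) argument from the paper's. The paper instead gives the probabilistic proof you relegate to a closing ``sanity check'': it identifies $s(\hat y,m)=\mathbb{P}(X\le m)$ for $X\sim\mathrm{Poisson}(\hat y)$ and $t(\hat y,m)=\mathbb{P}(Y\le\hat y)$ for $Y\sim\mathrm{Gamma}(m+1,1)$, then appeals to the unit-rate Poisson process to equate the events $\{N(\hat y)\le m\}$ and $\{T_{m+1}>\hat y\}$. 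Your ODE argument is self-contained and avoids any stochastic-process machinery, at the price of a small amount of bookkeeping; the paper's argument is shorter and explains \emph{why} the identity holds, but presupposes the reader is comfortable with Poisson-process arrival times. Either is entirely adequate here, and your mention of the induction/integration-by-parts variant is a nice third option.
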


\begin{proof}
	Let $ X \sim \mathrm{Poisson}(\hat{y}) $ and $ Y \sim \mathrm{Gamma}(m+1, 1) $. Then:
	\begin{align}
		s(\hat{y}, m) &= \mathbb{P}(X \le m)\;, \label{eq:poissoncdf} \\
		t(\hat{y}, m) &= \mathbb{P}(Y \le \hat{y})\;. \label{eq:gammacdf}
	\end{align}
	
	To justify the identity $ \mathbb{P}(X \le m) = \mathbb{P}(Y > \hat{y}) $, we appeal to the structure of a homogeneous Poisson process $ \{N(t)\}_{t \geq 0} $ with unit rate. Then:
	\begin{itemize}
		\item The time $ Y $ of the $ (m+1)^{\text{th}} $ event follows a Gamma distribution: $ Y \sim \mathrm{Gamma}(m+1, 1) $.
		\item The count $ X = N(\hat{y}) $, the number of events up to time $ \hat{y} $, is Poisson distributed with mean $ \hat{y} $.
	\end{itemize}
	
	By construction, the events of the Poisson process satisfy:
	\begin{equation}
		\{N(\hat{y}) \le m\} \iff \{Y > \hat{y}\} \;, \label{eq:duality-event}
	\end{equation}
	since if the $ (m+1)^{\text{st}} $ event has not occurred by time $ \hat{y} $, there can be at most $ m $ events up to that time. Therefore,
	\begin{equation}
		\mathbb{P}(X \le m) = \mathbb{P}(Y > \hat{y}) \;. \label{eq:duality}
	\end{equation}
	
	Substituting \eqref{eq:poissoncdf}, \eqref{eq:gammacdf}, and \eqref{eq:duality} gives:
	\begin{equation}
		s(\hat{y}, m) = 1 - t(\hat{y}, m) \;,
	\end{equation}
	as claimed.
\end{proof}

As a consequence of the Poisson-Gamma duality lemma, we may calculate $s$ in Section \ref{sec:3-loss-formulation} by evaluating the CDF of a Gamma$(\imm+1, 1)$ random variable. This is beneficial because PyTorch does not support evaluating the CDF of a Poisson random variable, but does support evaluating the CDF of a Gamma random variable. (While specifying a Poisson CDF manually is possible, it is simpler to make use of optimized PyTorch functions where possible).

\subsubsection{Approximation of \texorpdfstring{$\logit(s)$}{logit(s)} for Poisson CDF Tails via Gamma Duality and Tail Expansions}

Building on the Poisson–Gamma duality $ s(\hat{y}, m) = 1 - t(\hat{y}, m) $, where
\begin{equation}
	t(\hat{y}, m) = \int_0^{\hat{y}} \frac{e^{-u} u^m}{m!} \, du = \mathbb{P}(Y \le \hat{y}), \quad Y \sim \mathrm{Gamma}(m+1, 1) \;,
\end{equation}
we now derive asymptotic approximations for $ \logit(s) $ in the extreme left and right tails, i.e., when $ \hat{y} \gg m $ or $ \hat{y} \ll m $, respectively.

\subsubsection*{Left tail: ($ \hat{y} \gg m $, so $ s \approx 0 $)}
In this regime, the CDF $ s $ is small, and we approximate it by the leading term of the Poisson probability mass function:
\begin{equation}
	s \approx \frac{\hat{y}^m e^{-\hat{y}}}{m!}.
\end{equation}
Apply $\logit$, we obtain:
\begin{equation}
	\logit(s) \approx \log s = m \log \hat{y} - \hat{y} - \log m! \;.
\end{equation}

\subsubsection*{Right tail: ($ \hat{y} \ll m $, so $ s \approx 1 $)}
In this case, the upper tail $ 1 - s = t(\hat{y}, m) $ is small. We approximate the incomplete Gamma integral by truncating its series expansion after the first term beyond the threshold $ m $:
\begin{equation}
	t(\hat{y}, m) \approx e^{-\hat{y}} \cdot \frac{\hat{y}^{m+1}}{(m+1)!} \;.
\end{equation}
Then,
\begin{equation}
	\logit(s) \approx -\log(1 - s) \approx -\log\left( e^{-\hat{y}} \cdot \frac{\hat{y}^{m+1}}{(m+1)!} \right) = - (m+1) \log \hat{y} + \hat{y} + \log (m+1)! \;.
\end{equation}

\subsubsection*{Summary:} The full approximation for $\logit(s)$, where $s = \mathbb{P}(X \le m)$ and $X \sim \mathrm{Poisson}(\hat{y})$, is given by
\begin{equation}
	\logit(s(\hat{y}, m)) \approx 
	\begin{cases}
		m \log \hat{y} - \hat{y} - \log m! & \text{if } \hat{y} \gg m \; , \\
		- (m+1) \log \hat{y} + \hat{y} + \log (m+1)! & \text{if } \hat{y} \ll m \;, \\
		\log\left( \frac{s(\hat{y}, m)}{1 - s(\hat{y}, m)} \right) & \text{otherwise} \;.
	\end{cases}
\end{equation}

\subsubsection{Pseudocode for calculating distributional consistency loss with Poisson distribution}

\begin{algorithm}[t]
	\caption{Distributional Consistency Loss for Poisson Noise}\label{alg:poisson_dc_loss}
	\begin{algorithmic}[1]
		\Require Parameter estimate $\imtheta$, observed data $\imm$, number of measurements $N$, tail threshold $\epsilon$
		\State $\hat{\imy} \gets f(\imtheta)$ \Comment{Predicted signal}
		\State Initialize empty vector $\mathbf{r}$ of length $N$
		\For{$i = 1$ to $N$}
		\State $q \gets \max(\hat{y}_i, 0)$ \Comment{Ensure non-negativity}
		\State $s \gets 1 - \text{GammaCDF}(q; \alpha = m+1, \beta = 1)$ \Comment{Poisson posterior tail probability}
		\If{$s \le \epsilon$}
		\State $r_i \gets m \cdot \log(q) - q - \log\Gamma(m+1)$ \Comment{Lower tail approximation}
		\ElsIf{$s \ge 1 - \epsilon$}
		\State $r_i \gets  - m \cdot \log(q) + q + \log\Gamma(m+2)$ \Comment{Upper tail approximation}
		\Else
		\State $r_i \gets \log\left( \frac{s}{1 - s} \right)$ \Comment{Standard logit transform}
		\EndIf
		\EndFor
		\State Sample $\mathbf{u} \sim \text{Logistic}(0,1)^N$ \Comment{Reference distribution}
		\State Sort $\mathbf{r}$ and $\mathbf{u}$ in ascending order
		\State $L \gets \frac{1}{N} \sum_{i=1}^N |r_i - u_i|$ \Comment{Wasserstein-1 distance}
		\State \Return $L$
	\end{algorithmic}
\end{algorithm}

Algorithm \ref{alg:poisson_dc_loss} gives a full pseudocode for calculating the distributional consistency loss when $\mathcal{D}_i(\hat{y}) = \text{Poisson}(\hat{y})$.

\clearpage
\newpage
\section{1D deconvolution application}\label{app:deconvolution}

\let\origthefigureD\thefigure
\let\origthefigureE\thefigure
\let\origthefigureF\thefigure
\renewcommand{\thefigure}{D\origthefigureD}
\setcounter{figure}{0}

We include the following example application to illustrate the use of DC loss in a simple non-imaging context.

\subsection{Illustrative example: deconvolution under Gaussian Noise}\label{app:deconvolution_example}

\begin{figure}[htbp]
  \centering
  \begin{subfigure}[b]{0.49\textwidth}
       \includegraphics[width=\textwidth]{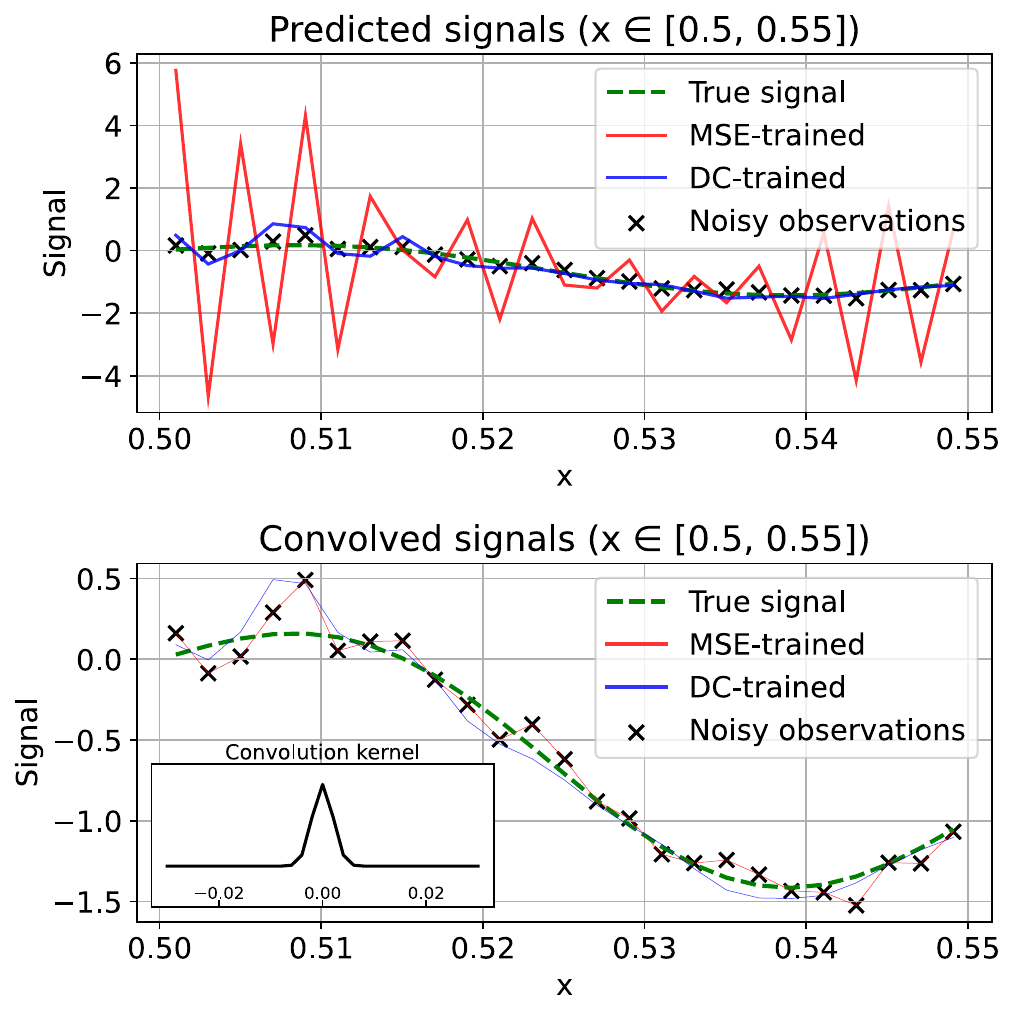}
       \caption{Top: predicted deconvolved signals estimated with MSE and DC loss. Bottom: predicted deconvolved signals after reapplying convolution with the known operator $H$.}
       \label{fig:deconv_main}
   \end{subfigure}
  \hfill
  \begin{subfigure}[b]{0.49\textwidth}
       \includegraphics[width=\textwidth]{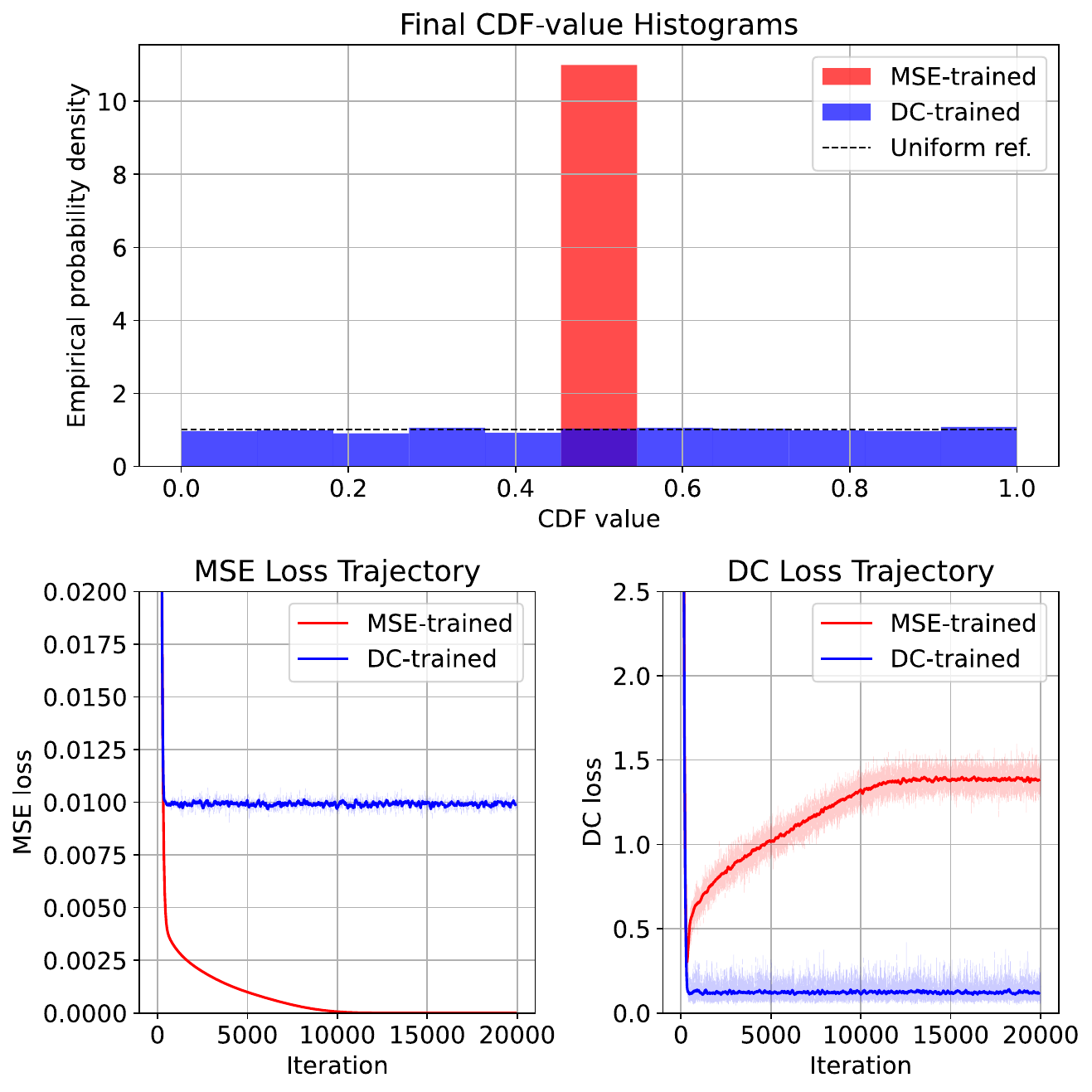}
       \caption{Top: histograms of CDF values under each model’s implied noise distributions. Bottom: training loss curves evaluating each of MSE and DC loss on MSE-trained and DC-trained predictions.}
       \label{fig:deconv_stats}
   \end{subfigure}
  \caption{1D deconvolution with known Gaussian blur and noise level.  MSE inversion amplifies noise artifacts, while DC loss yields a cleaner, artifact‐free reconstruction.}
  \label{fig:deconvolution_combined}
\end{figure}

To illustrate DC loss in a classic inverse problem, we estimate a 1D signal $\imthetagt$ from blurred and noisy measurements
\begin{equation}
\imm = H\imthetagt + \imeta,
\end{equation}
where $H$ is a known Gaussian‐blur convolution and $\imeta\sim\mathcal N(0,\sigma^2_{\mathrm{noise}}\mathbf{I})$. A single ground-truth signal, composed of low- and
high-frequency sinusoids, is forward modeled and sampled at $N = 500$ discrete timesteps to produce noisy observations. We parameterize $\imtheta$ directly as a free vector of length $N$ and optimize it under two objectives: the standard MSE $\|H\imtheta - \imm\|^2$ and our DC loss. To demonstrate the properties of both losses, both reconstructions use Adam \citep{kingma_adam_2015} for 20{,}000 iterations with learning rate $5 \times 10^{-3}$.

Figure~\ref{fig:deconv_main} shows that the MSE‐trained solution (red) overfits by amplifying small, high‐frequency noise
%
producing pronounced oscillations. In contrast, the DC‐trained solution (blue) remains smooth and closely tracks the true signal (green dashed).

In Figure~\ref{fig:deconv_stats}, the top row shows histograms of CDF values under each model’s implied Gaussian noise distribution: the MSE‐trained fit collapses its histogram at around 0.5 (indicative of overfitting), whereas the DC‐trained fit remains approximately uniform (distributionally consistent). The bottom row shows the loss trajectories. In the MSE‐loss plot, the MSE objective for the MSE‐trained reconstruction converges towards zero, while the DC‐trained reconstruction’s MSE plateaus above zero.
In the DC‐loss plot, the DC objective converges to
\begin{equation}\label{eq:log_4}
\mathbb{E}_{u\sim\mathrm{Logistic}(0,1)}\bigl[\lvert u\rvert\bigr] = \ln 4 \approx 1.386
\end{equation}
for the MSE‐trained reconstruction \citep{balakrishnan_order_1985} (the expected distance when all CDFs collapse to 0.5) whereas the DC‐trained reconstruction’s DC loss converges near zero, avoiding noise amplification.

The rest of Appendix ~\ref{app:deconvolution} provides implementation details and additional experiments varying the number of measurements and the noise level.

\subsection{Experimental design}\label{app:deconvolution_experimental_design}

To investigate DC loss for the task of deconvolution, a true signal was generated as $\theta(x) = \sin(10\pi x) + \frac{1}{2}\sin(40\pi x)$, and uniformly sampled at $N=500$ points between 0 and 1.

To simulate a degraded observation, the signal was blurred using a 1D Gaussian kernel defined in domain units. Specifically, a Gaussian kernel in index-space with $\sigma_{\text{index-space blur}} = 1.0$, corresponding to a signal-space blur of $\sigma_{\text{blur}} = 0.002$ was applied with a width of 31 points (i.e. 15 neighbors of the central point on each side). The resulting kernel was applied by convolution with reflective boundary conditions.

Finally, additive Gaussian noise with standard deviation $\sigma_{\text{noise}} = 0.1$ was added independently to each measurement:
\begin{equation}
	m_i = (H\imthetagt)_i + \eta_i, \qquad \eta_i \sim \mathcal{N}(0,\, \sigma_{\mathrm{noise}}^2).
\end{equation}

The deconvolution task was then to recover the original signal (pre-convolution) from the noisy measurements.

For each experiment, Adam \citep{kingma_adam_2015} with a learning rate of 0.005 was run for 20{,}000 iterations. A vector of zeros was used as the starting set of parameters. Experiments were conducted with a 24GB Nvidia GeForce RTX 3090 GPU; running deconvolution with MSE and then DC loss took $\sim4$ minutes (with many experiments able to be run in parallel).

Except where otherwise defined, we took $\sigma_{\text{noise}} = 0.1$, $\sigma_{\text{blur}} = 0.002$ and $N=500$.

\subsection{Assessing distributional consistency of the true signal}\label{app:empirical_uniform_assessment_gaussian}

We investigated the assumption that the true clean image should theoretically induce a uniform histogram of CDF values (asymptotically). In Figure \ref{fig:app_validating_histos_deconv}, we compared evaluating the DC loss on the true convolved signal and on the noisy convolved signal for a varied number of measured datapoints $N$.

For high $N$, we see that evaluating the DC loss on the convolved true signal yields an approximately uniform distribution of CDF values (and correspondingly low DC loss value), while evaluating the DC loss on the noisy image yields a histogram with a peak at 0.5 (and correspondingly high DC loss value of approximately $\ln 4$ - see Equation \ref{eq:log_4}). For lower values of $N$, we obtain higher average DC loss values and less uniform histograms for the true convolved signal, although still more uniform than for the noisy signal.

\begin{figure}[htbp]
	\centering
	\begin{subfigure}[b]{\textwidth}
		\includegraphics[width=\textwidth]{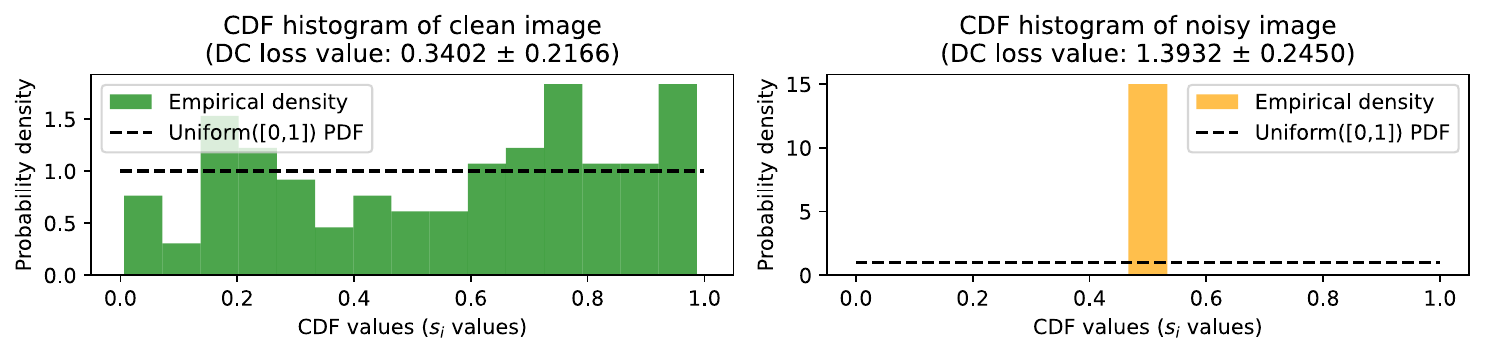}
		\caption{$N = 100$.}
	\end{subfigure}
	\hfill
	\begin{subfigure}[b]{\textwidth}
		\includegraphics[width=\textwidth]{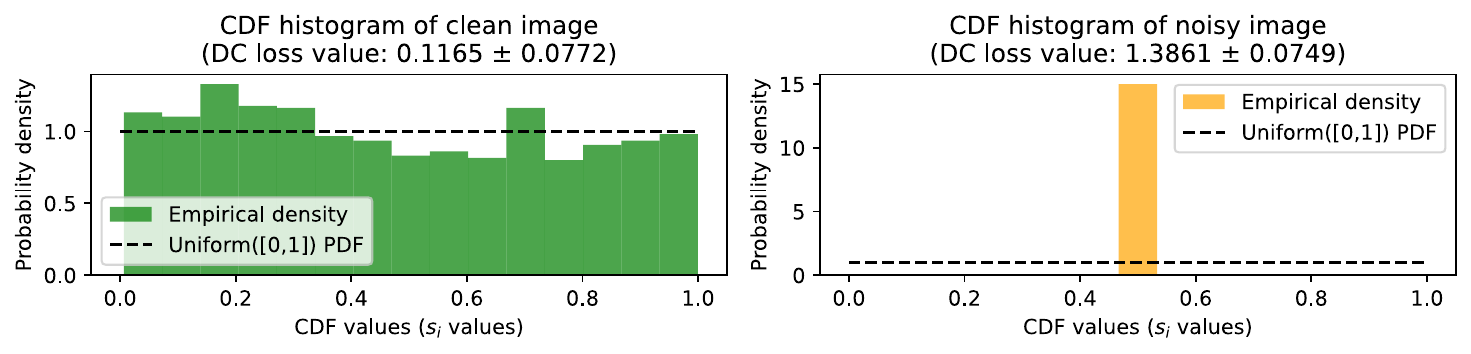}
		\caption{$N = 1{,}000$.}
	\end{subfigure}
	\hfill
	\begin{subfigure}[b]{\textwidth}
		\includegraphics[width=\textwidth]{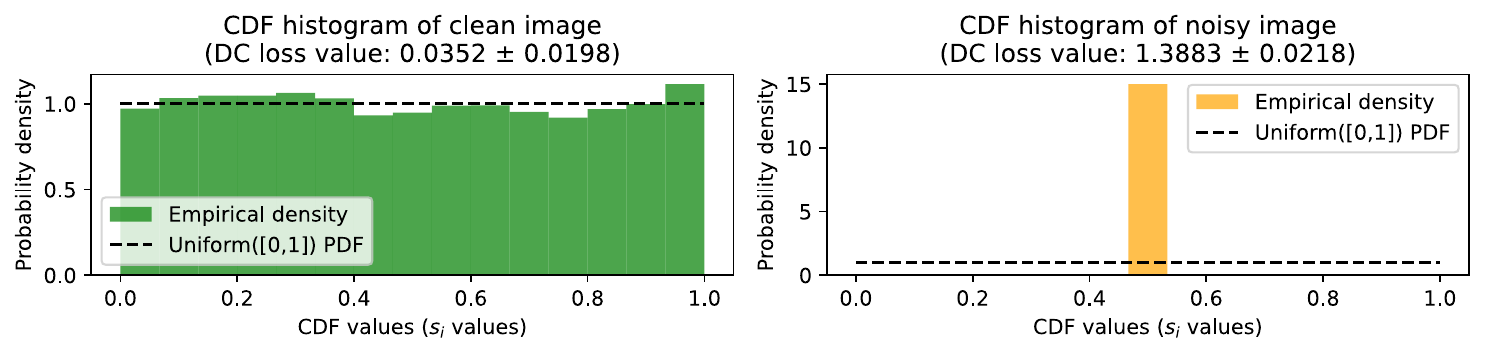}
		\caption{$N = 10{,}000$.}
	\end{subfigure}
	\hfill
	\begin{subfigure}[b]{\textwidth}
		\includegraphics[width=\textwidth]{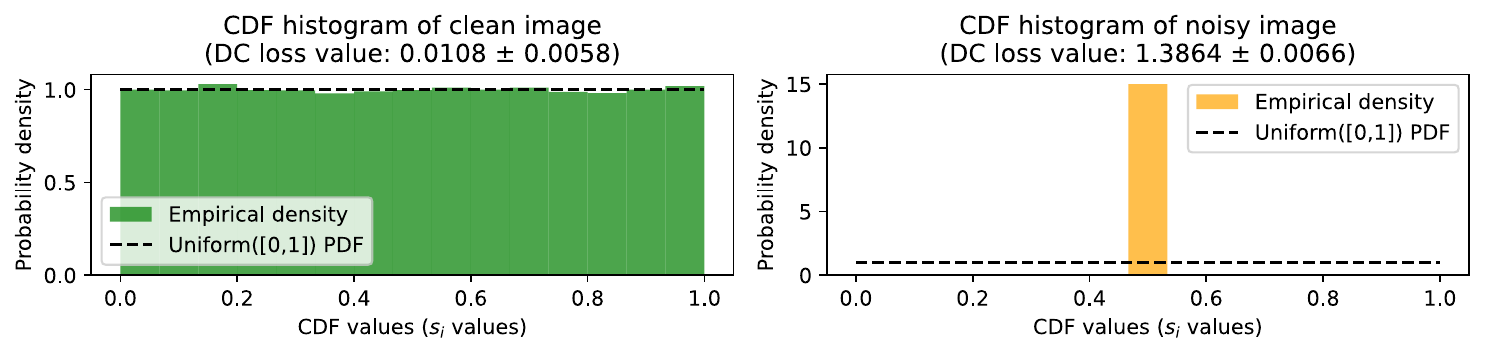}
		\caption{$N = 100{,}000$.}
	\end{subfigure}
	\hfill
	\begin{subfigure}[b]{\textwidth}
		\includegraphics[width=\textwidth]{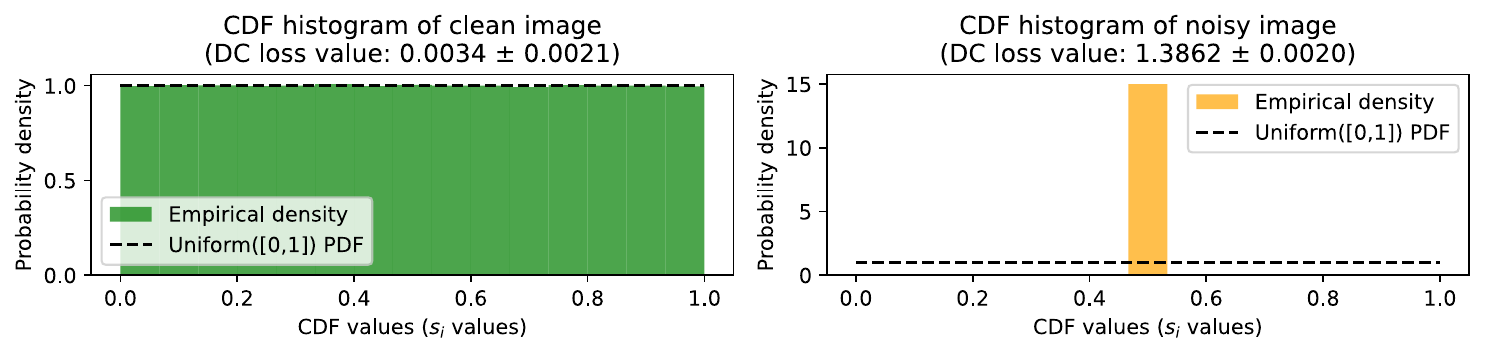}
		\caption{$N = 1{,}000{,}000$.}
	\end{subfigure}
	\caption{Investigating the DC loss values obtained and the uniformity of CDF histograms in the ideal and worst cases of evaluating the DC loss against the true convolved signal or noisy convolved signal respectively. Subplots show the effect of increasing the number of measurements from $N=100$ to $N=1{,}000{,}000$. Uncertainty values are given as $1.96\times$ the standard deviation over 100 evaluations of the (stochastic) loss on different instantiations of noise.}
	\label{fig:app_validating_histos_deconv}
\end{figure}

\subsection{Varying noise level}

In Figures \ref{fig:app_deconvolution_combined_noise=0.01}, \ref{fig:app_deconvolution_combined_noise=0.05} and \ref{fig:app_deconvolution_combined_noise=0.2}, we investigated varying the noise applied to the convolved signal (with the values $\sigma_{\text{noise}} = 0.01$, $0.05$ and $0.2$ respectively). We observed that at the lowest noise level, both loss functions resulted in a near-perfect fit to the true signal, but only the DC-trained prediction avoided noise amplification effects in the estimate of the pre-convolved signal. This trend was seen more dramatically as the noise level was increased. The fit to the true convolved signal worsened for both loss functions, but only the MSE-trained vector experienced extreme noise amplification in the pre-convolved signal (as a result of overfitting).

The loss trajectories and CDF histograms observed in these cases were consistent with those seen in Section \ref{app:deconvolution_example}.

\begin{figure}[htbp]
	\centering
	\begin{subfigure}[b]{0.49\textwidth}
		\includegraphics[width=\textwidth]{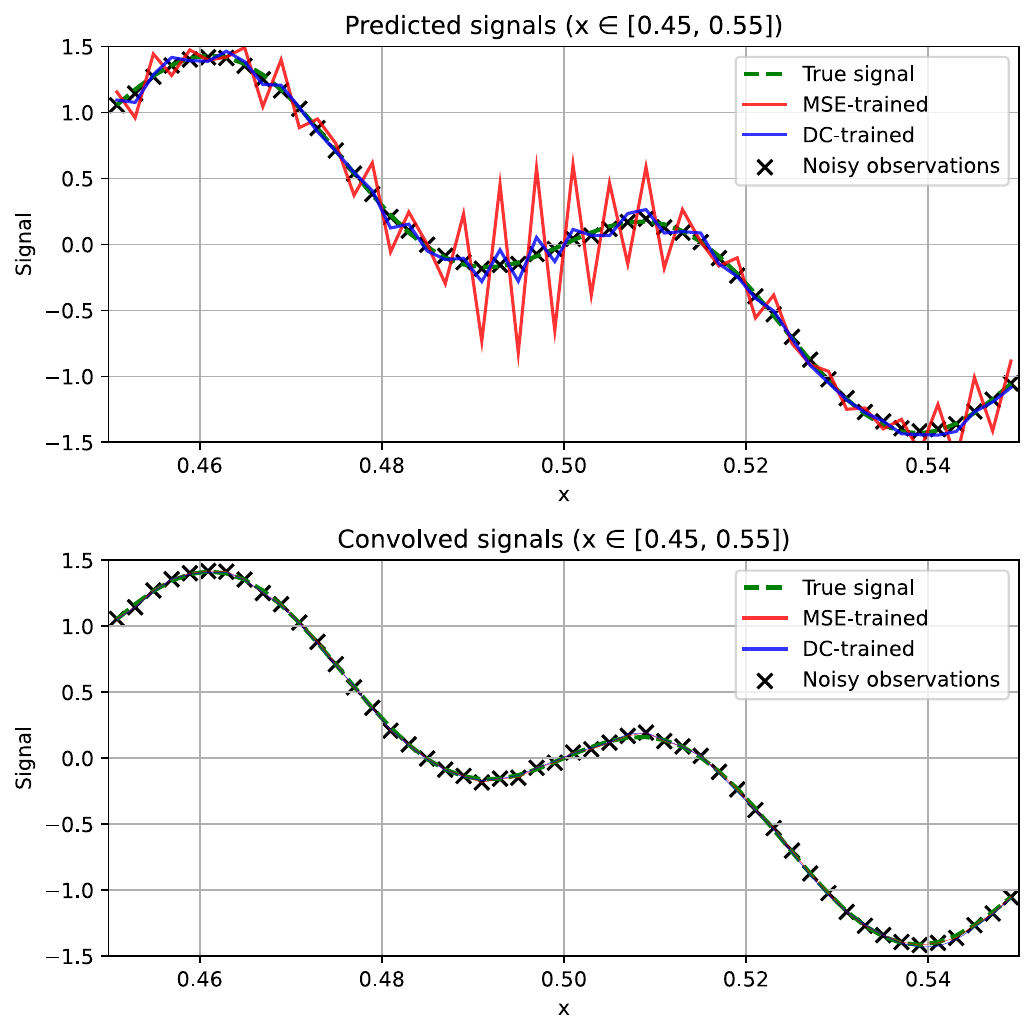}
		\caption{Top: predicted signals estimated with MSE and DC loss. Bottom: signals after convolution with the known operator $H$.}
	\end{subfigure}
	\hfill
	\begin{subfigure}[b]{0.49\textwidth}
		\includegraphics[width=\textwidth]{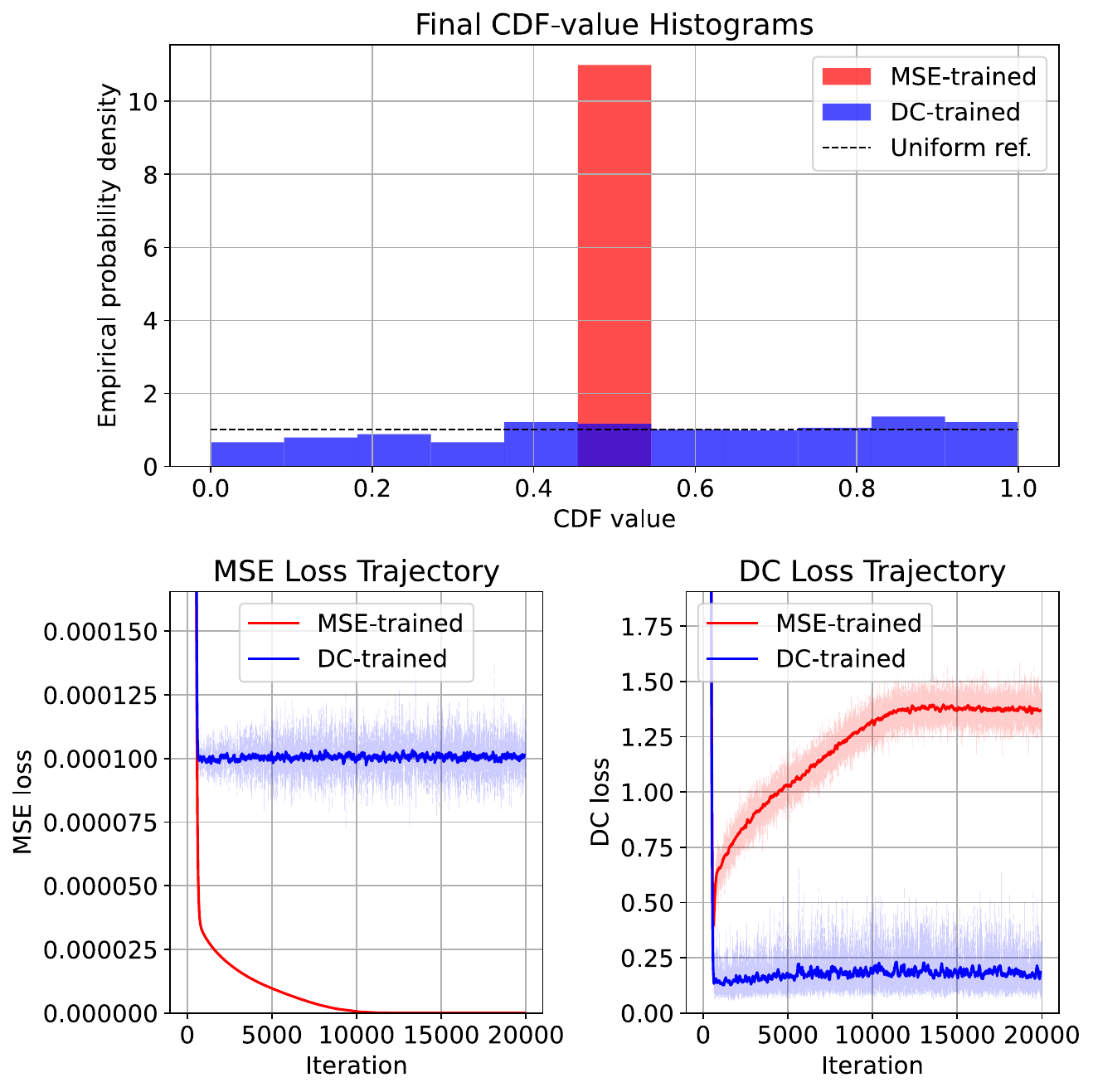}
		\caption{Top: histograms of CDF values under each model’s implied noise distributions. Bottom: training loss curves evaluating each of MSE and DC loss on MSE-trained and DC-trained predictions.}
	\end{subfigure}
	\caption{Results for 1D deconvolution with known Gaussian blur with $\sigma_{\text{blur}} = 0.02$ and noise with $\sigma_{\text{noise}} = 0.01$.}
	\label{fig:app_deconvolution_combined_noise=0.01}
\end{figure}

\begin{figure}[htbp]
	\centering
	\begin{subfigure}[b]{0.49\textwidth}
		\includegraphics[width=\textwidth]{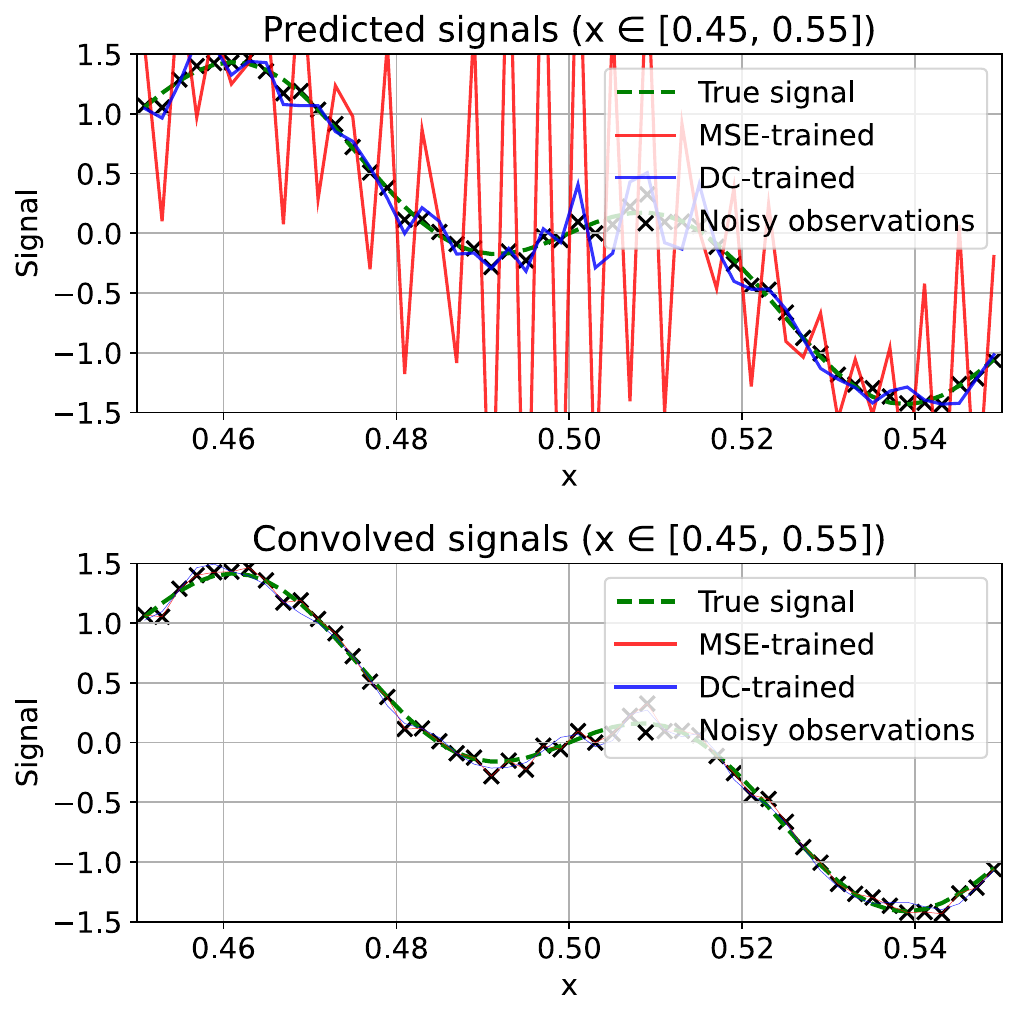}
		\caption{Top: predicted signals estimated with MSE and DC loss. Bottom: signals after convolution with the known operator $H$.}
	\end{subfigure}
	\hfill
	\begin{subfigure}[b]{0.49\textwidth}
		\includegraphics[width=\textwidth]{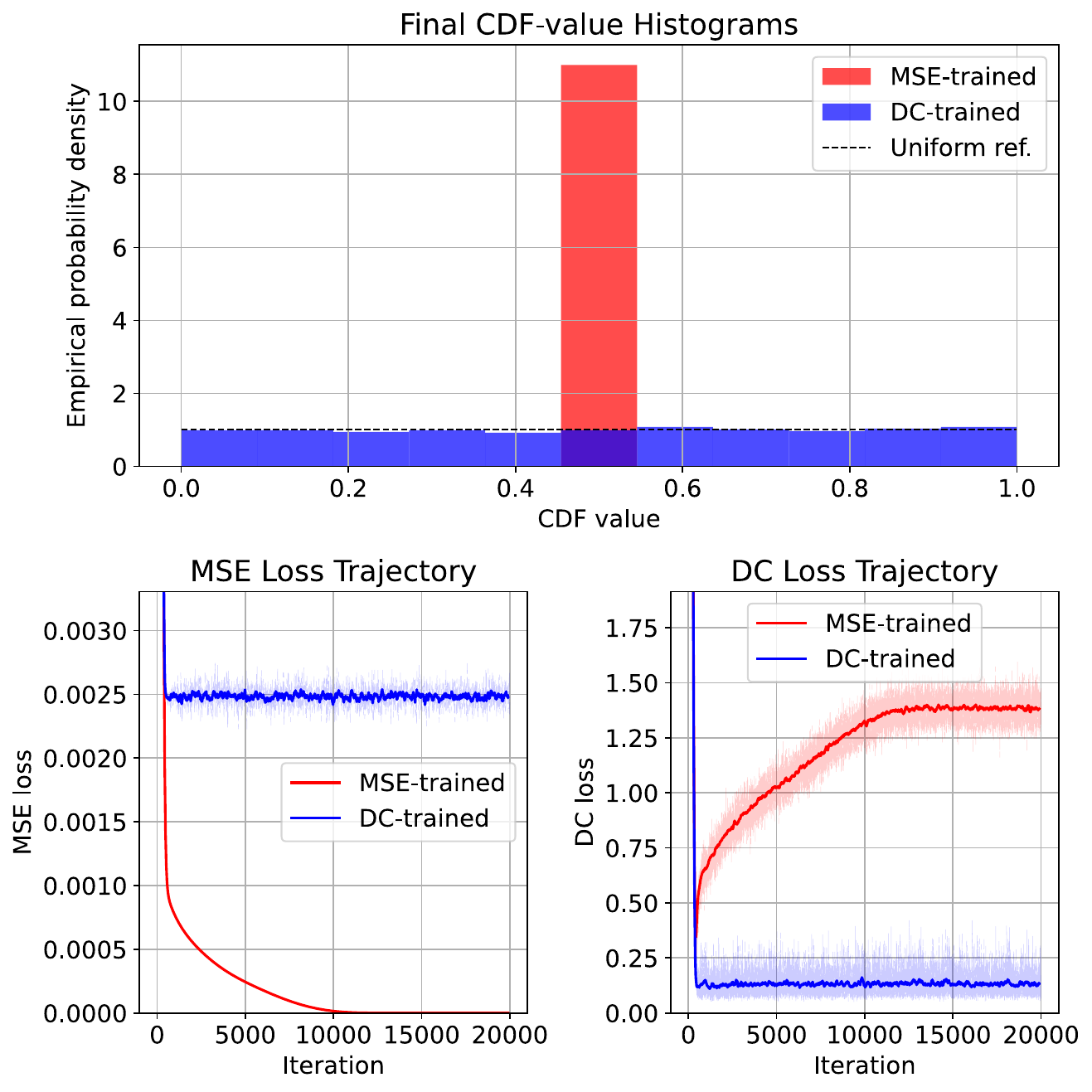}
		\caption{Top: histograms of CDF values under each model’s implied noise distributions. Bottom: training loss curves evaluating each of MSE and DC loss on MSE-trained and DC-trained predictions.}
	\end{subfigure}
	\caption{Results for 1D deconvolution with known Gaussian blur with $\sigma_{\text{blur}} = 0.02$ and noise with $\sigma_{\text{noise}} = 0.05$.}
	\label{fig:app_deconvolution_combined_noise=0.05}
\end{figure}

\begin{figure}[htbp]
	\centering
	\begin{subfigure}[b]{0.49\textwidth}
		\includegraphics[width=\textwidth]{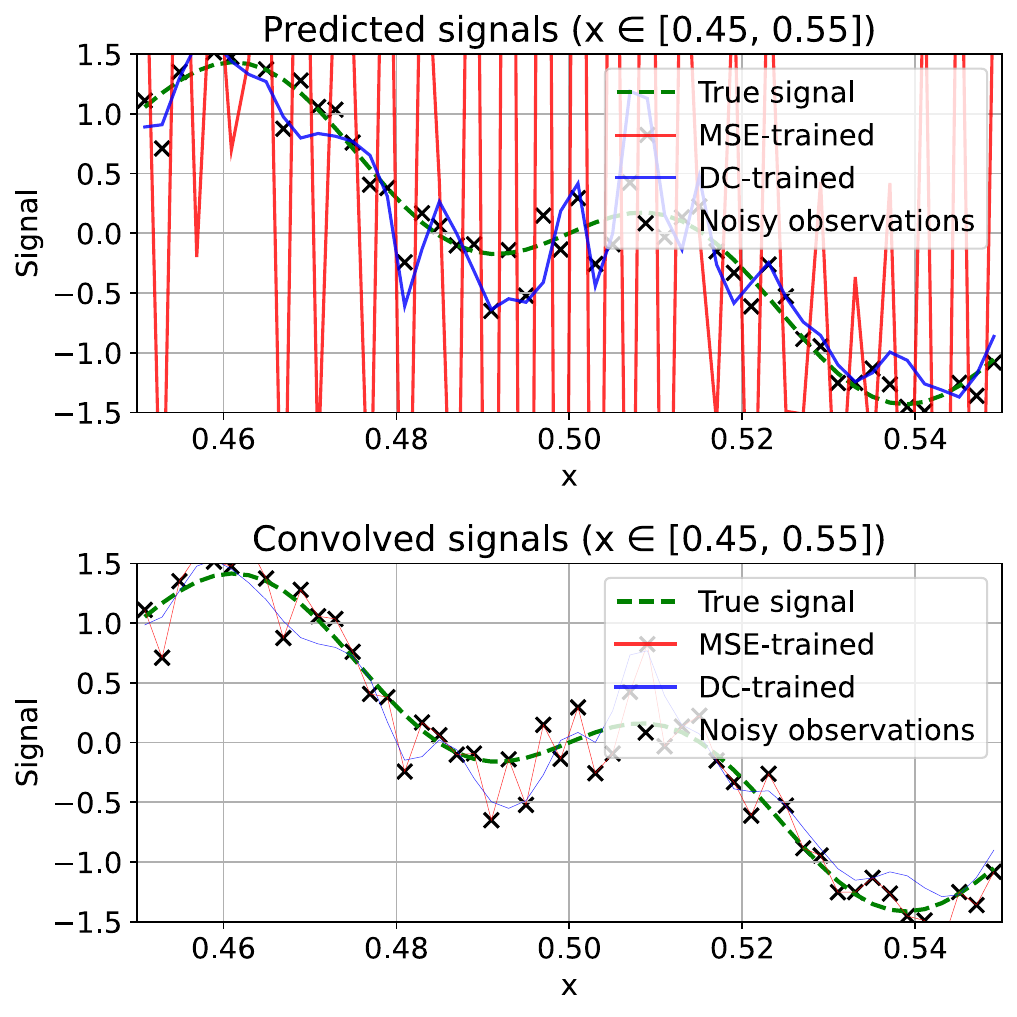}
		\caption{Top: predicted signals estimated with MSE and DC loss. Bottom: signals after convolution with the known operator $H$.}
	\end{subfigure}
	\hfill
	\begin{subfigure}[b]{0.49\textwidth}
		\includegraphics[width=\textwidth]{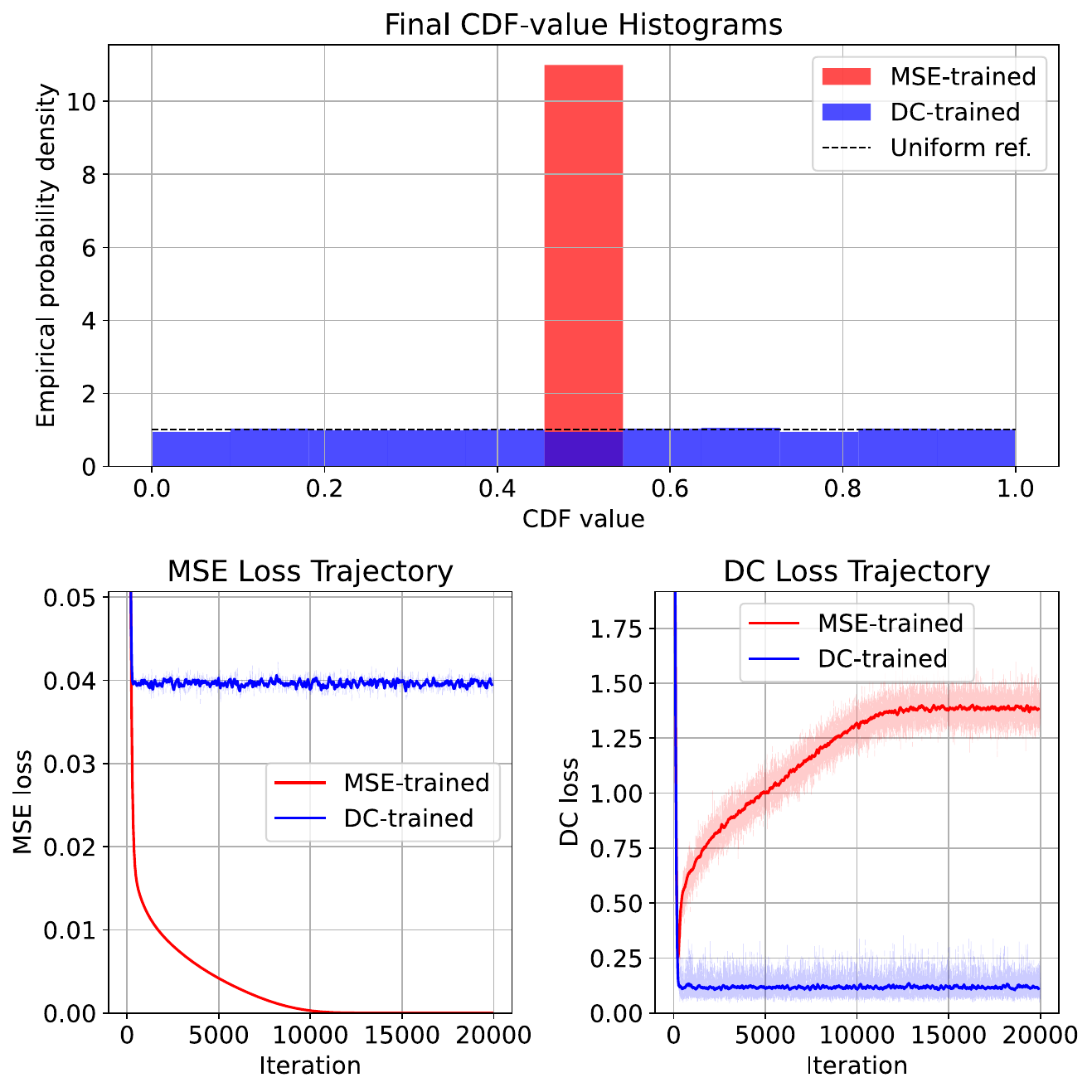}
		\caption{Top: histograms of CDF values under each model’s implied noise distributions. Bottom: training loss curves evaluating each of MSE and DC loss on MSE-trained and DC-trained predictions.}
	\end{subfigure}
	\caption{Results for 1D deconvolution with known Gaussian blur with $\sigma_{\text{blur}} = 0.02$ and noise with $\sigma_{\text{noise}} = 0.2$.}
	\label{fig:app_deconvolution_combined_noise=0.2}
\end{figure}

\clearpage
\newpage
\section{DIP application}\label{app:DIP}

\renewcommand{\thefigure}{E\origthefigureE}
\setcounter{figure}{0}

\subsection{Experimental design}\label{app:DIP_experimental_design}

We followed the experimental design of \citet{ulyanov_deep_2020} to specify a denoising task for this work. We used the same hourglass architecture with skip connections described in their paper, using their code made available at \url{https://github.com/DmitryUlyanov/deep-image-prior}. In all experiments, we used Adam \citep{kingma_adam_2015} with learning rate $1\times10^{-3}$. Experiments were conducted with a 24GB Nvidia GeForce RTX 3090 GPU and run for 10{,}000 iterations; for the largest image size $512\times512$, running DIP-MSE and then DIP-DC took $\sim40$ minutes (with $\sim4$ experiments able to be run in parallel).

Images were first normalized to [0,1]. Then, Gaussian noise was added independently pixel- and channel-wise to 3-channel RGB images with a fixed standard deviation $\sigma$. Pixel intensities were then clipped to [0,1], as in \citet{ulyanov_deep_2020}. The MSE and DC loss were then evaluated after flattening channel and spatial dimensions to a 1D vector.

The clipping function induces a non-continuous noise distribution $\mathcal{D}_i$ for each measurement (i.e. one channel of one pixel), with probability mass at 0 and 1. However, the CDF of a clipped Gaussian distribution may still be well approximated, and we use such an approximation in our experiments in this section (details given in Appendix \ref{app:CDF_logit_approx_clipped_gaussian}).

We investigated varied noise levels from $\sigma=\frac{10}{255}$ to $\sigma = \frac{100}{255}$. We fixed the random image and network input for experiments comparing DIP-MSE to DIP-DC; for repeatability, we ran experiments with 5 random seeds. Where error bars are shown in Figure \ref{fig:DIP_loss_peak_PSNR}, \ref{fig:app_peak_ssim} and \ref{fig:app_mixed_psnr}, these are with respect to 5 random initializations of random noise and network parameters. The results shown in the main body of the paper are for random seed 0 and $\sigma = \frac{75}{255}$, for the image of an F-16 aircraft.

The F-16 aircraft image is from a standard dataset\footnote{\url{https://sipi.usc.edu/database/database.php?volume=misc&image=11}} intended for research purposes. The train, plane and castle images are from \citet{martin_database_2001} (and our usage complies with the custom license, which states use is permitted "for non-commercial research and educational purposes"). The cat image is reproduced with permission from the photographer.

\subsection{Assessing the distributional consistency of the clean image}\label{app:empirical_uniform_assessment_gaussian_clipped}

We empirically validate the assumption that the true clean image should theoretically induce a uniform histogram of CDF values (asymptotically). In Figure \ref{fig:app_validating_histos}, we see that evaluating the DC loss on the clean image yields an approximately uniform distribution of CDF values (and correspondingly low DC loss value), while evaluating the DC loss on the noisy image yields a histogram with a peak at 0.5 (and correspondingly high DC loss value of approximately $\ln 4$ - see Equation \ref{eq:log_4}).

\begin{figure}[htbp]
	\centering
	\includegraphics[width=1\linewidth]{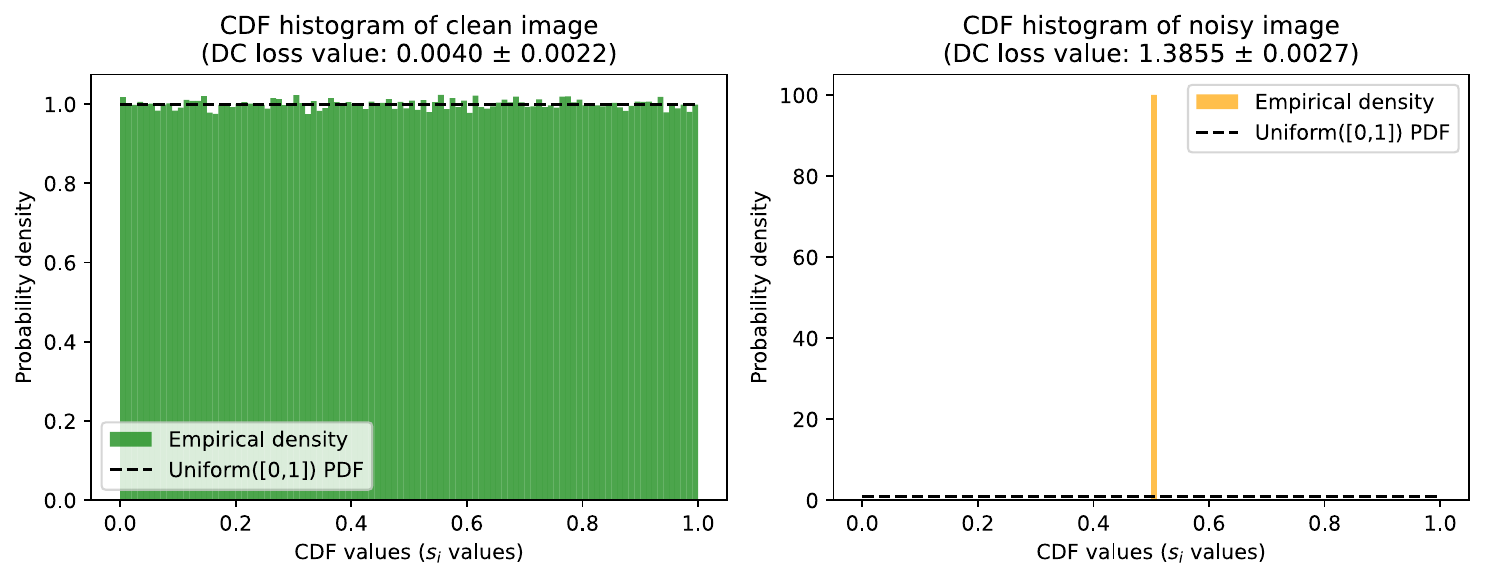}
	\caption{Histograms of CDF values obtained during the evaluation of the DC loss on the true image (left) and the noisy image (right) (with $\sigma=\frac{10}{255}$). Uncertainty values are given as $1.96\times$ the standard deviation over 100 evaluations of the (stochastic) loss on different instantiations of noise.}
	\label{fig:app_validating_histos}
\end{figure}

\subsection{Effect of varied noise level}

We varied $\sigma$ from $\sigma=\frac{10}{255}$ to $\sigma = \frac{100}{255}$ and performed DIP with each loss at each $\sigma$.

\begin{figure}[htbp]
	\centering
	\includegraphics[width=0.5\linewidth]{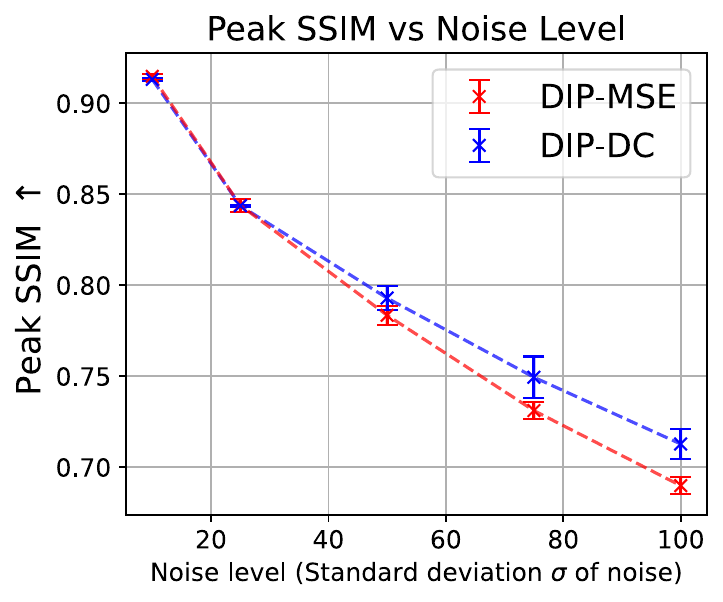}
	\caption{Peak SSIM metric values for DIP-MSE and DIP-DC. Uncertainty is shown with respect to 5 initializations of random noise and network parameters.}
	\label{fig:app_peak_ssim}
\end{figure}

\begin{figure}[htbp]
	\centering
	\includegraphics[width=0.9\linewidth]{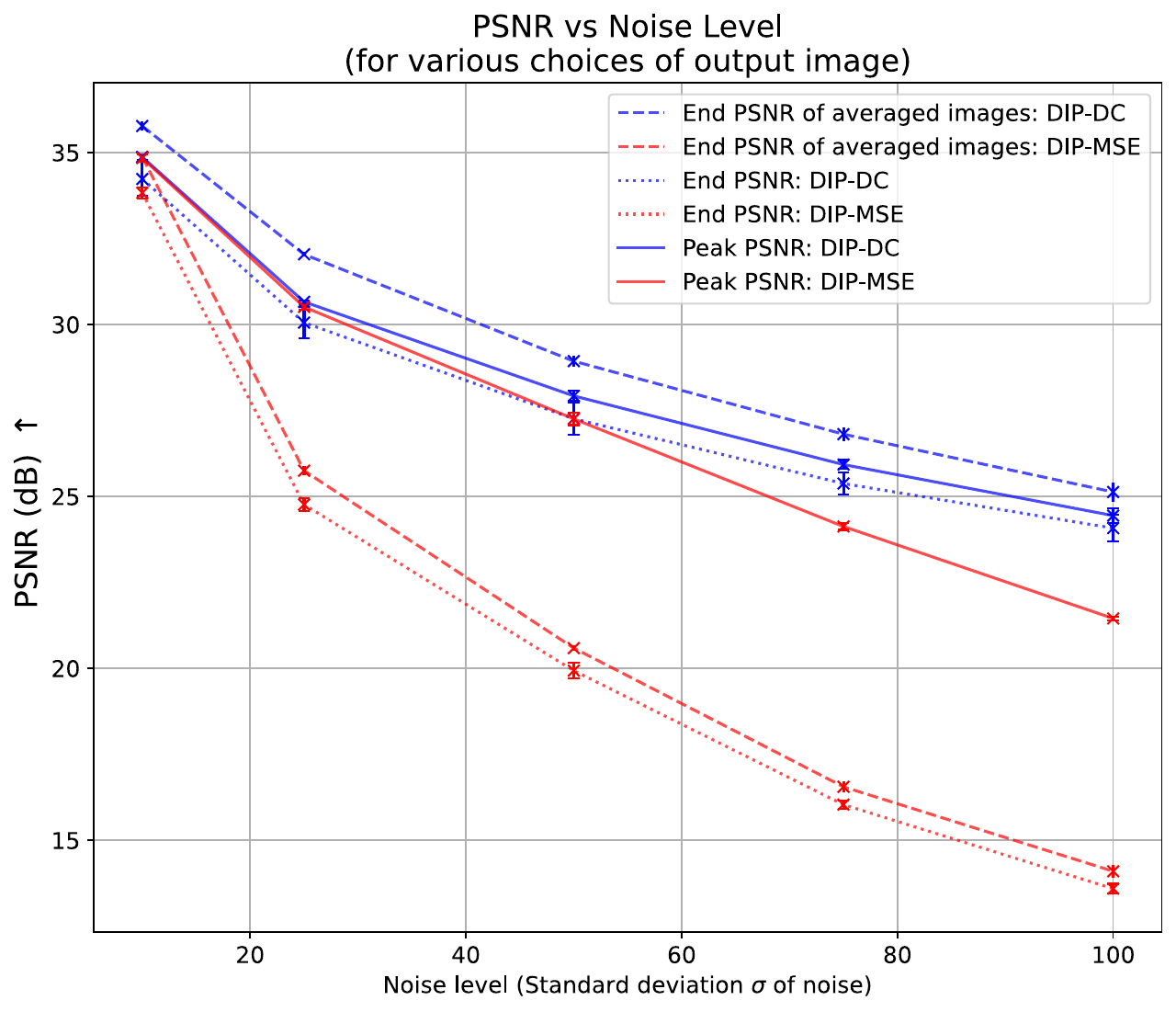}
	\caption{PSNR metric values of variant outputs for DIP-MSE and DIP-DC. Specifically, the peak PSNR achieved, the PSNR of the image at the $10000^{\text{th}}$ iteration and the PSNR of the mean of the images at the $9100^{\text{th}}, 9200^{\text{th}}, \cdots,  10000^{\text{th}}$ iterations are shown. Uncertainty is with respect to 5 initializations of random noise and network parameters.}
	\label{fig:app_mixed_psnr}
\end{figure}

Figure \ref{fig:app_peak_ssim} shows the peak SSIM achieved by each method on the F-16 image. Figure \ref{fig:app_mixed_psnr} shows the peak PSNR, the PSNR of the final image and the PSNR of the mean of the images at the  $9100^{\text{th}}, 9200^{\text{th}}, \cdots ,  10000^{\text{th}}$ iterations.

In Figures \ref{fig:app_sigma_10}, \ref{fig:app_sigma_25}, \ref{fig:app_sigma_50}, \ref{fig:app_sigma_75} and \ref{fig:app_sigma_100}, we give more detailed results (for seed 0) at different noise levels $\sigma$.

\begin{figure}[htbp]
	\centering
	\begin{subfigure}[b]{\textwidth}
		\includegraphics[width=\textwidth]{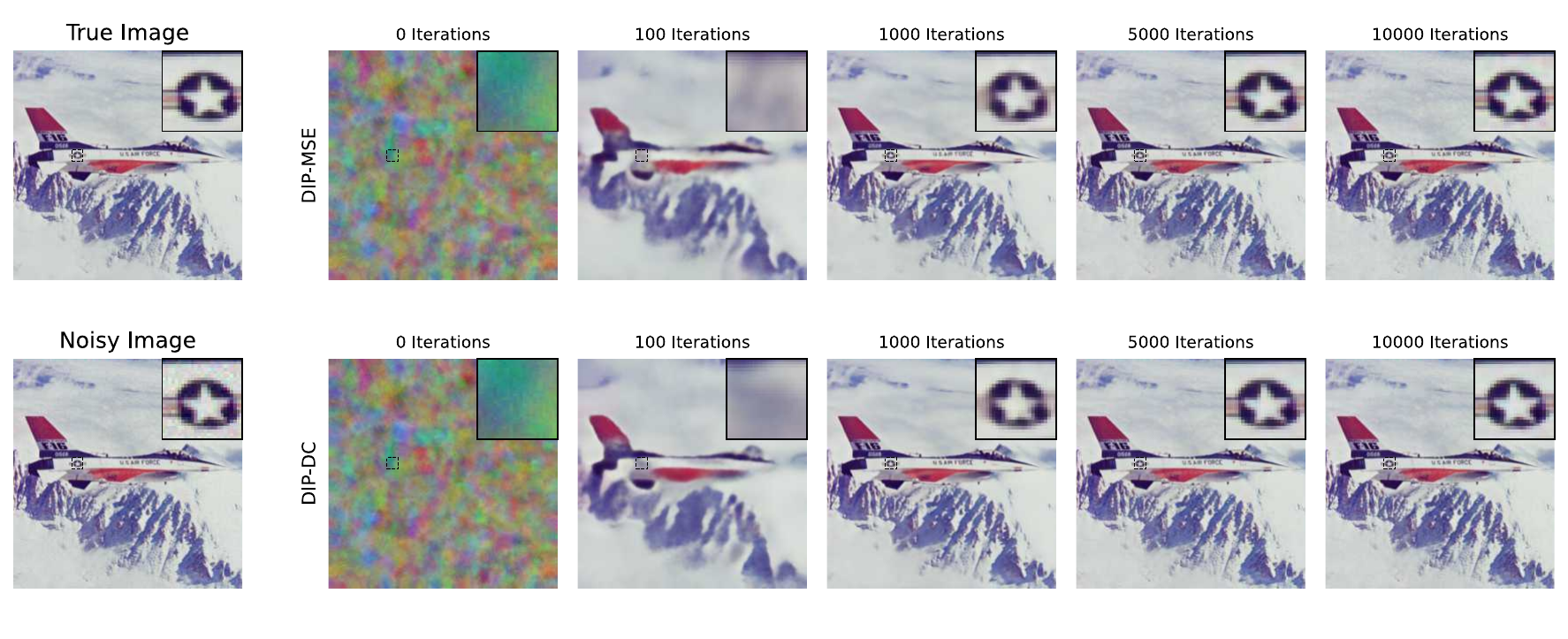}
		\caption{Images produced by DIP-MSE and DIP-DC as a function of iteration number.}
	\end{subfigure}
	\hfill
	\begin{subfigure}[b]{\textwidth}
		\includegraphics[width=\textwidth]{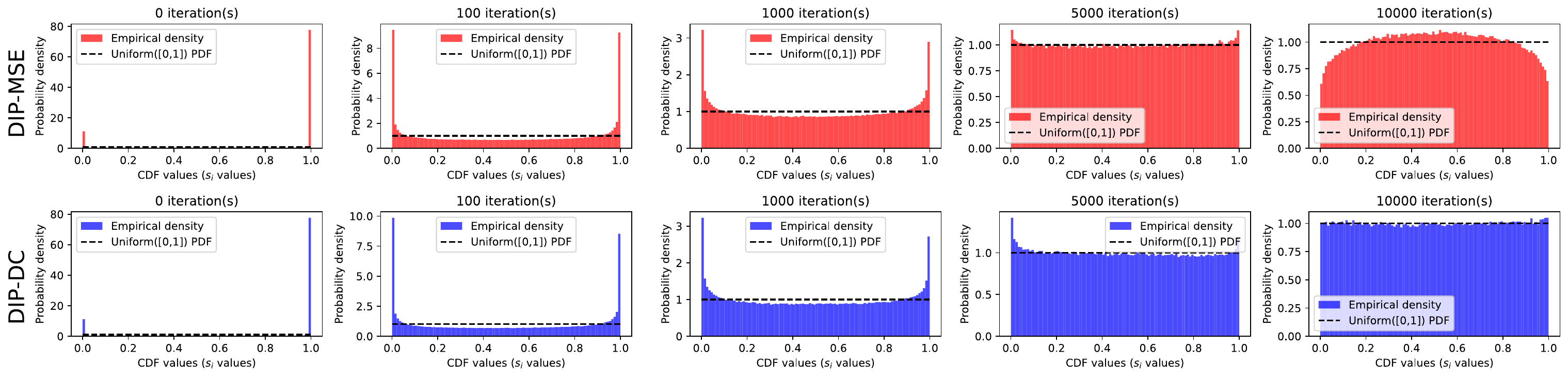}
		\caption{Histograms associated with images produced by DIP-MSE and DIP-DC as a function of iteration number.}
	\end{subfigure}
	\hfill
	\begin{subfigure}[b]{0.45\textwidth}
		\includegraphics[width=\textwidth]{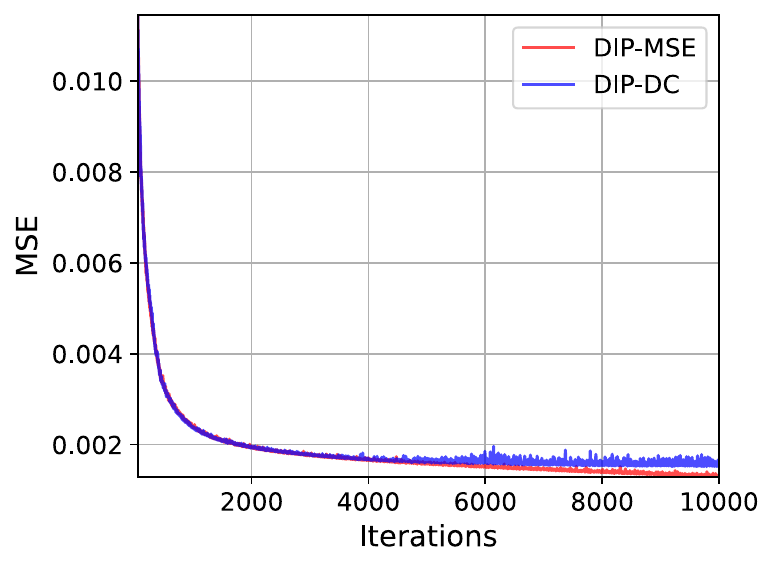}
		\caption{MSE loss as a function of iteration number.}
	\end{subfigure}
	\begin{subfigure}[b]{0.45\textwidth}
		\includegraphics[width=\textwidth]{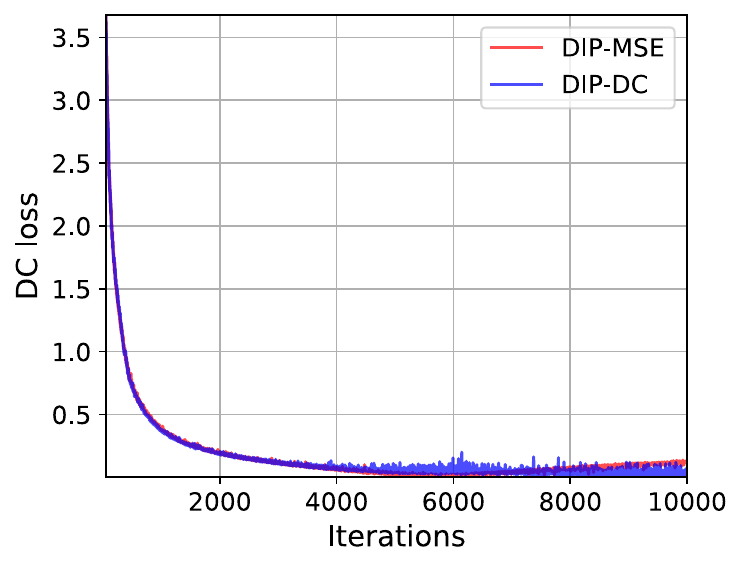}
		\caption{DC loss as a function of iteration number.}
	\end{subfigure}
	\begin{subfigure}[b]{0.45\textwidth}
		\includegraphics[width=\textwidth]{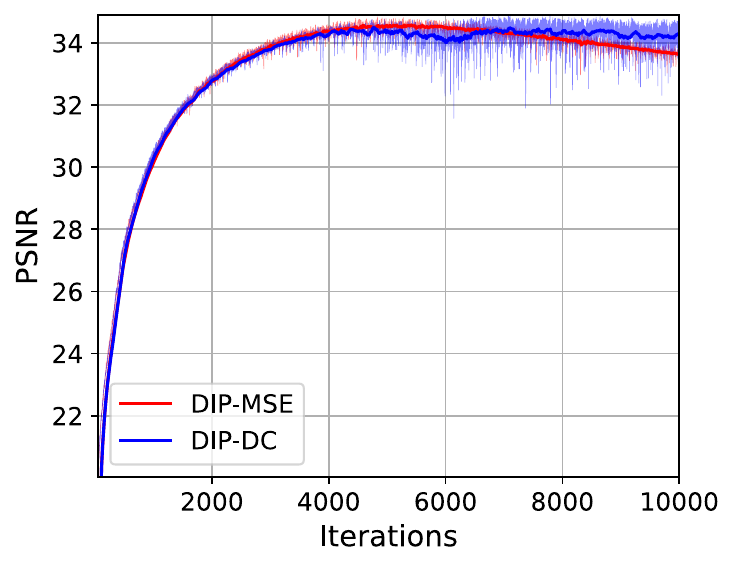}
		\caption{PSNR as a function of iteration number.}
	\end{subfigure}
	\begin{subfigure}[b]{0.45\textwidth}
		\includegraphics[width=\textwidth]{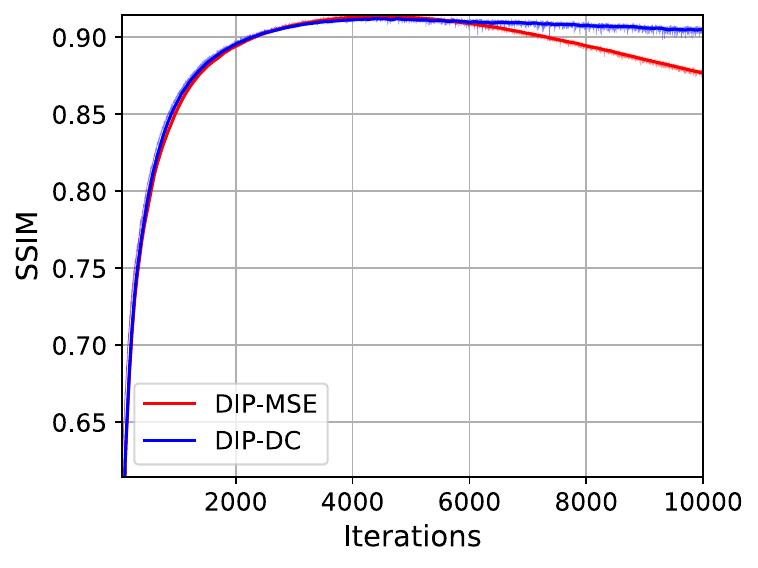}
		\caption{SSIM as a function of iteration number.}
	\end{subfigure}
	\caption{Results for DIP-MSE and DIP-DC at $ \sigma = \frac{10}{255}$.}
	\label{fig:app_sigma_10}
\end{figure}

\begin{figure}[htbp]
	\centering
	\begin{subfigure}[b]{\textwidth}
		\includegraphics[width=\textwidth]{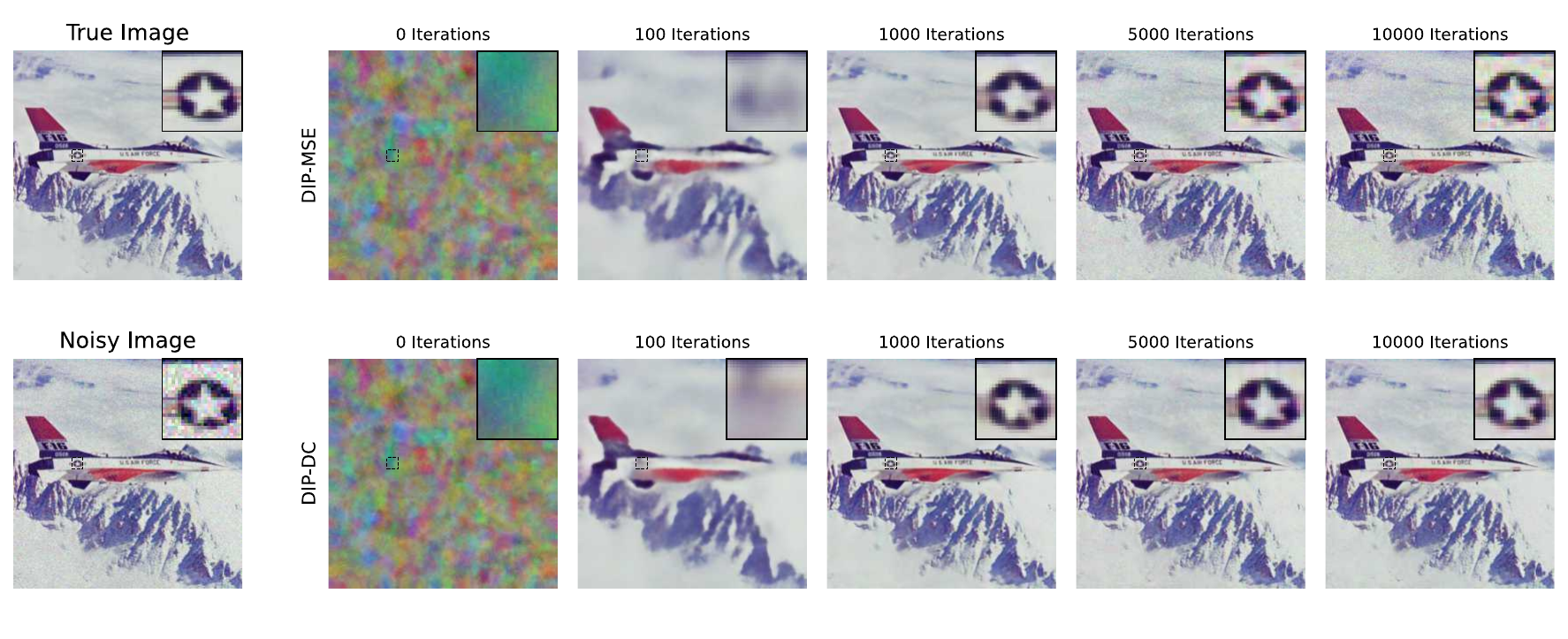}
		\caption{Images produced by DIP-MSE and DIP-DC as a function of iteration number.}
	\end{subfigure}
	\hfill
	\begin{subfigure}[b]{\textwidth}
		\includegraphics[width=\textwidth]{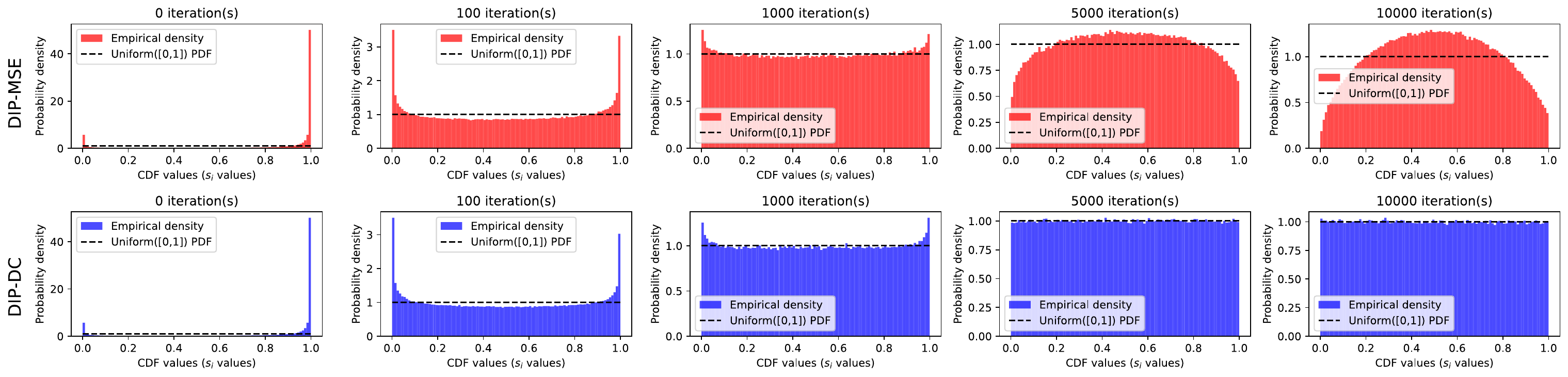}
		\caption{Histograms associated with images produced by DIP-MSE and DIP-DC as a function of iteration number.}
	\end{subfigure}
	\hfill
	\begin{subfigure}[b]{0.45\textwidth}
		\includegraphics[width=\textwidth]{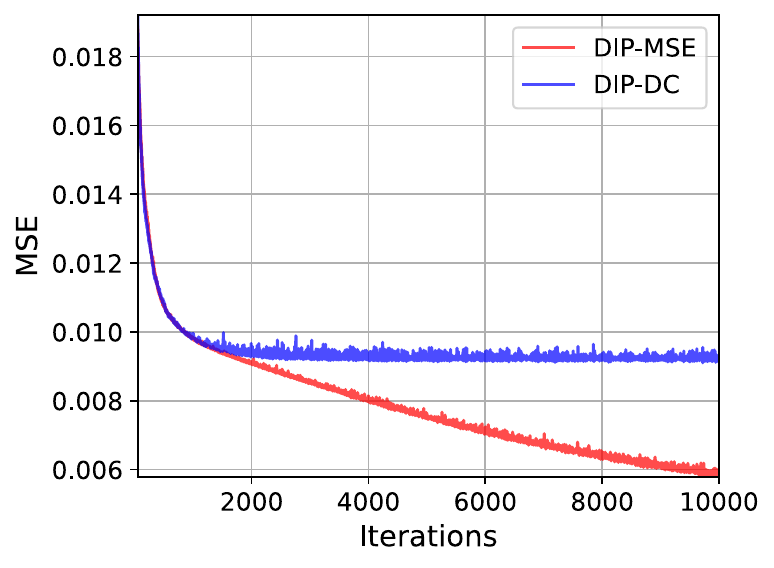}
		\caption{MSE loss as a function of iteration number.}
	\end{subfigure}
	\begin{subfigure}[b]{0.45\textwidth}
		\includegraphics[width=\textwidth]{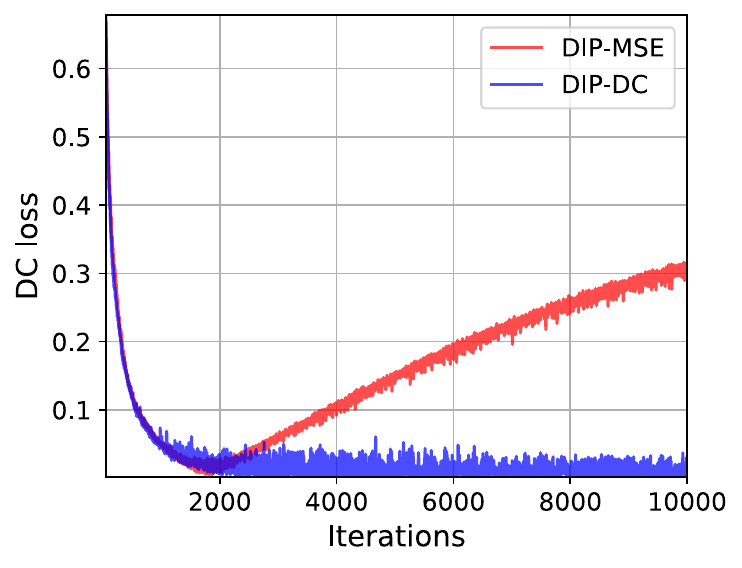}
		\caption{DC loss as a function of iteration number.}
	\end{subfigure}
	\begin{subfigure}[b]{0.45\textwidth}
		\includegraphics[width=\textwidth]{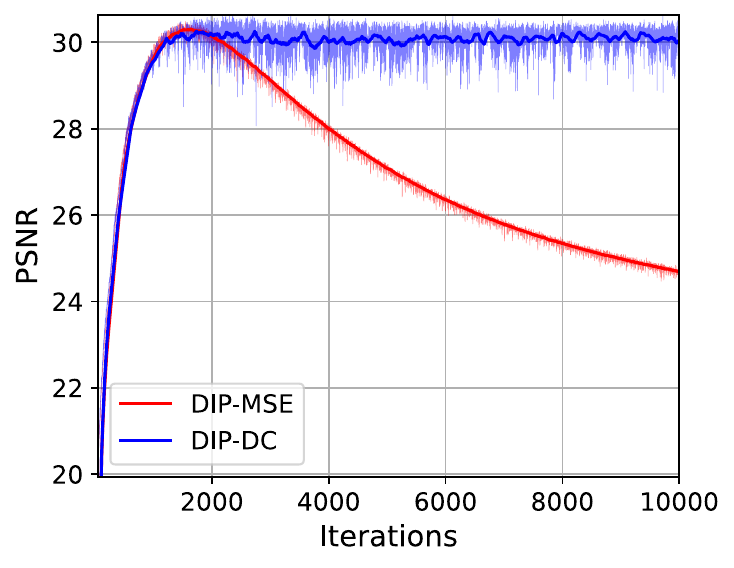}
		\caption{PSNR as a function of iteration number.}
	\end{subfigure}
	\begin{subfigure}[b]{0.45\textwidth}
		\includegraphics[width=\textwidth]{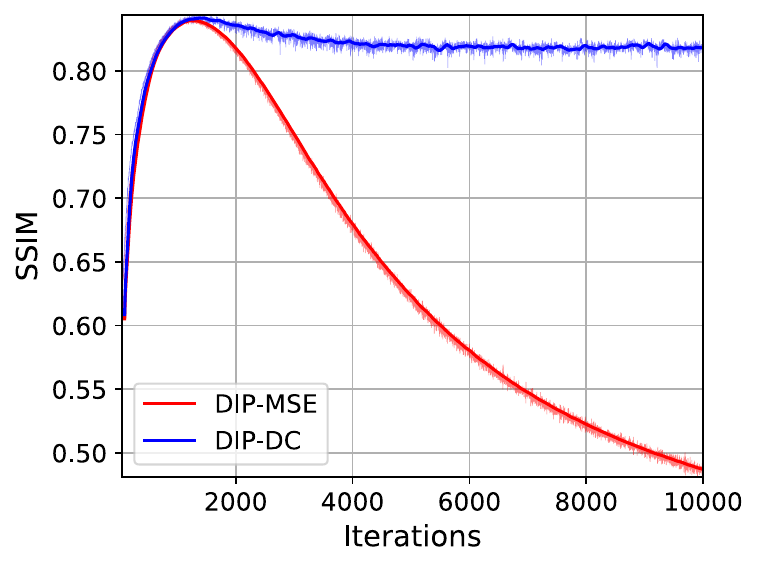}
		\caption{SSIM as a function of iteration number.}
	\end{subfigure}
	\caption{Results for DIP-MSE and DIP-DC at $ \sigma = \frac{25}{255}$.}
	\label{fig:app_sigma_25}
\end{figure}

\begin{figure}[htbp]
	\centering
	\begin{subfigure}[b]{\textwidth}
		\includegraphics[width=\textwidth]{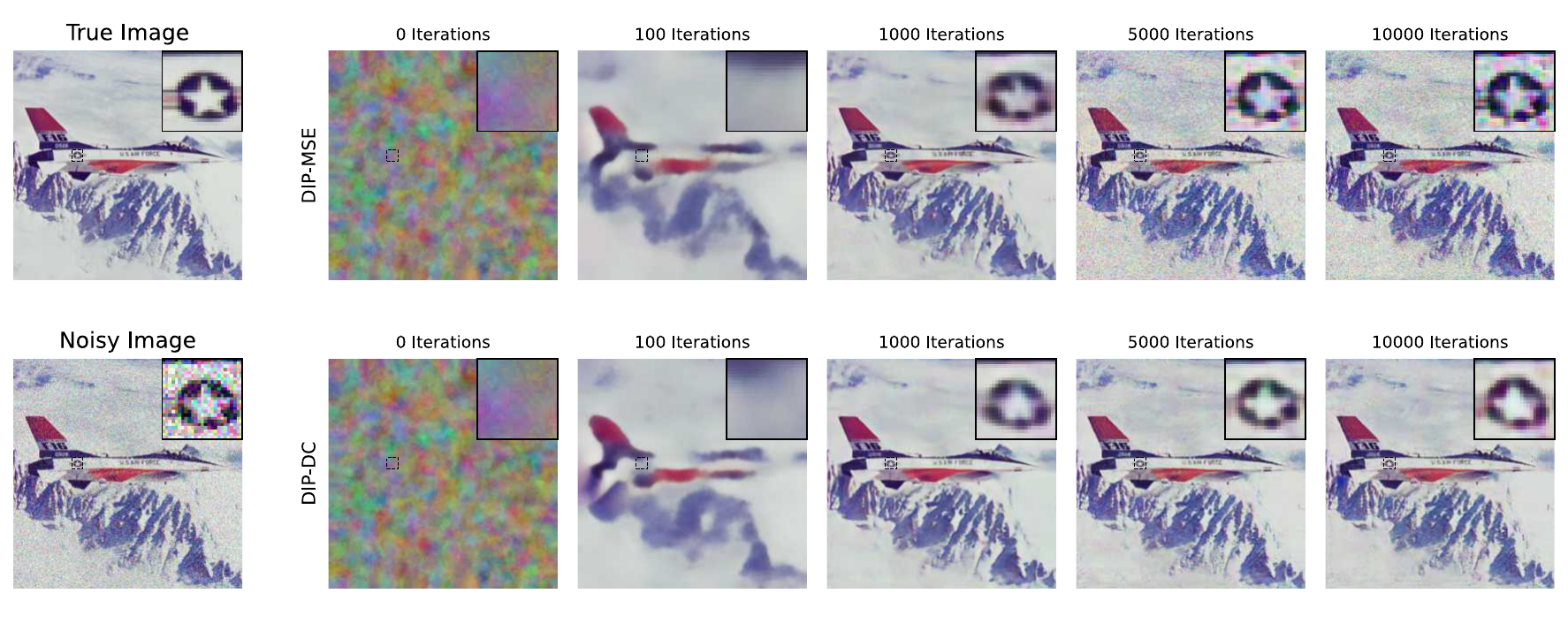}
		\caption{Images produced by DIP-MSE and DIP-DC as a function of iteration number.}
	\end{subfigure}
	\hfill
	\begin{subfigure}[b]{\textwidth}
		\includegraphics[width=\textwidth]{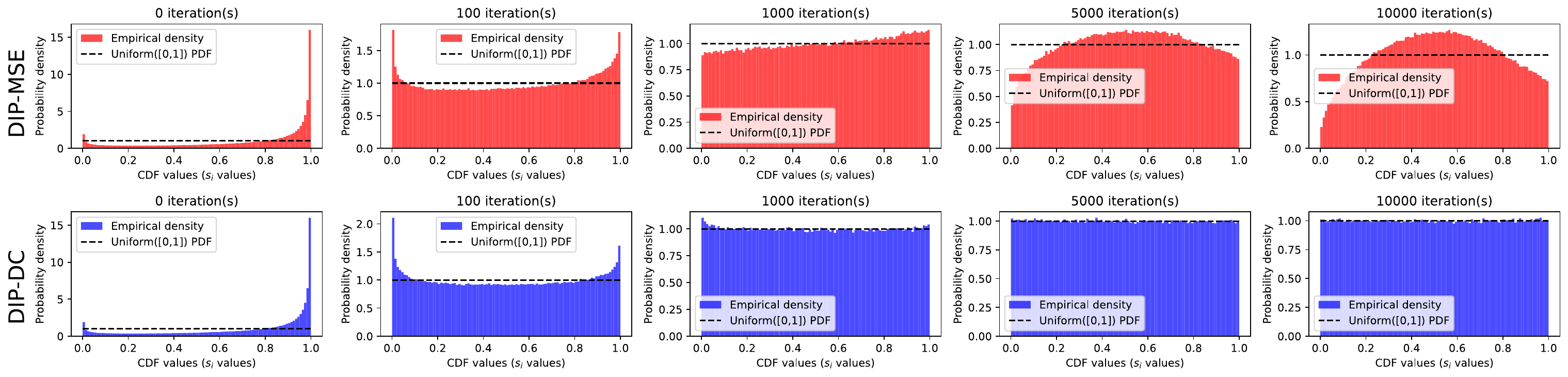}
		\caption{Histograms associated with images produced by DIP-MSE and DIP-DC as a function of iteration number.}
	\end{subfigure}
	\hfill
	\begin{subfigure}[b]{0.45\textwidth}
		\includegraphics[width=\textwidth]{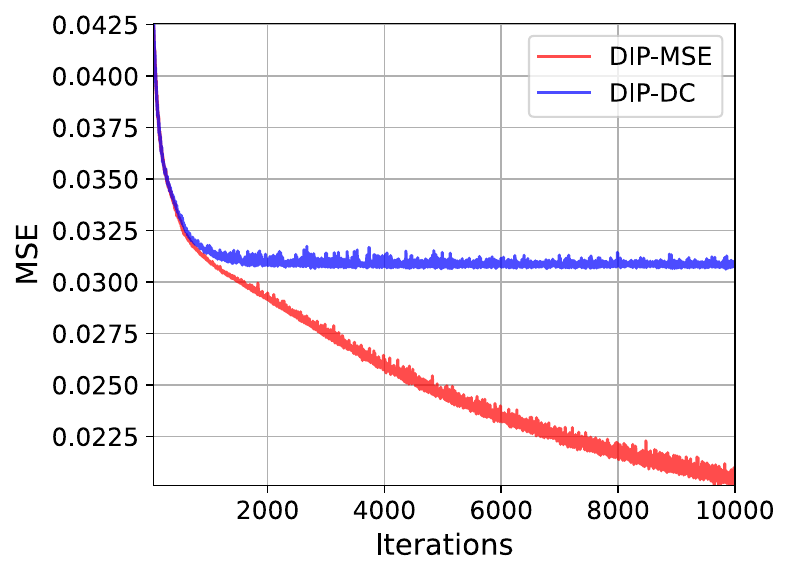}
		\caption{MSE loss as a function of iteration number.}
	\end{subfigure}
	\begin{subfigure}[b]{0.45\textwidth}
		\includegraphics[width=\textwidth]{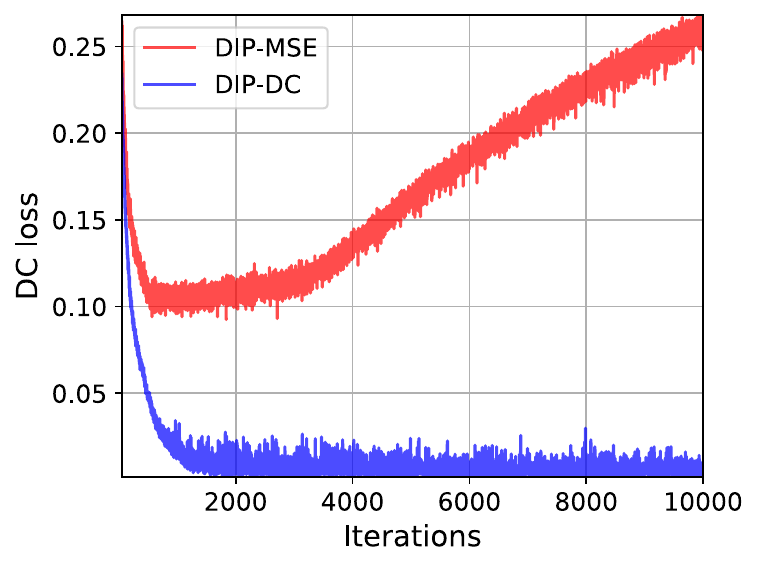}
		\caption{DC loss as a function of iteration number.}
	\end{subfigure}
	\begin{subfigure}[b]{0.45\textwidth}
		\includegraphics[width=\textwidth]{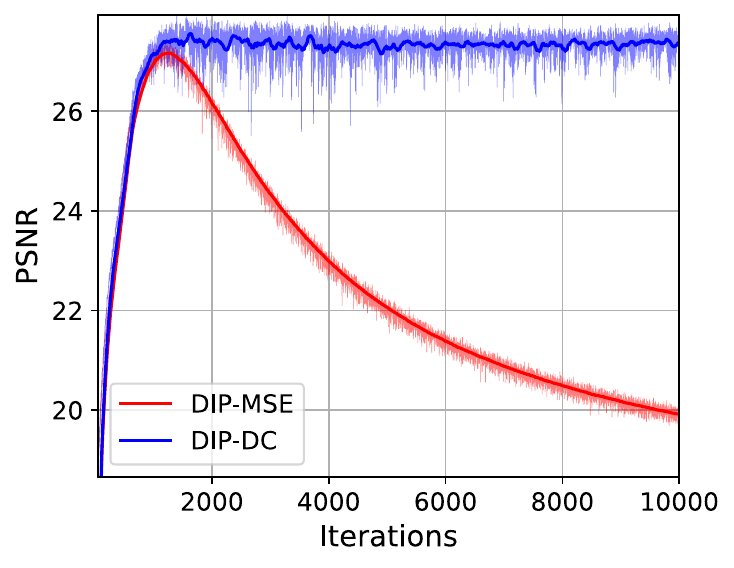}
		\caption{PSNR as a function of iteration number.}
	\end{subfigure}
	\begin{subfigure}[b]{0.45\textwidth}
		\includegraphics[width=\textwidth]{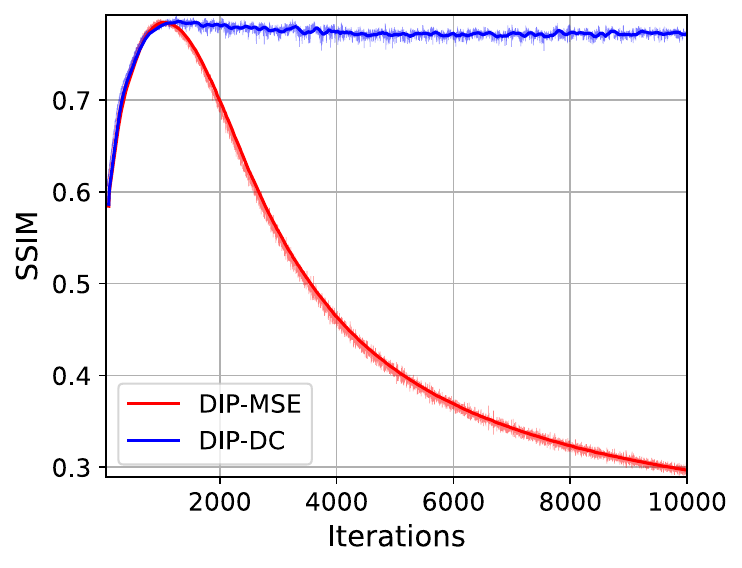}
		\caption{SSIM as a function of iteration number.}
	\end{subfigure}
	\caption{Results for DIP-MSE and DIP-DC at $ \sigma = \frac{50}{255}$.}
	\label{fig:app_sigma_50}
\end{figure}

\begin{figure}[htbp]
	\centering
	\begin{subfigure}[b]{\textwidth}
		\includegraphics[width=\textwidth]{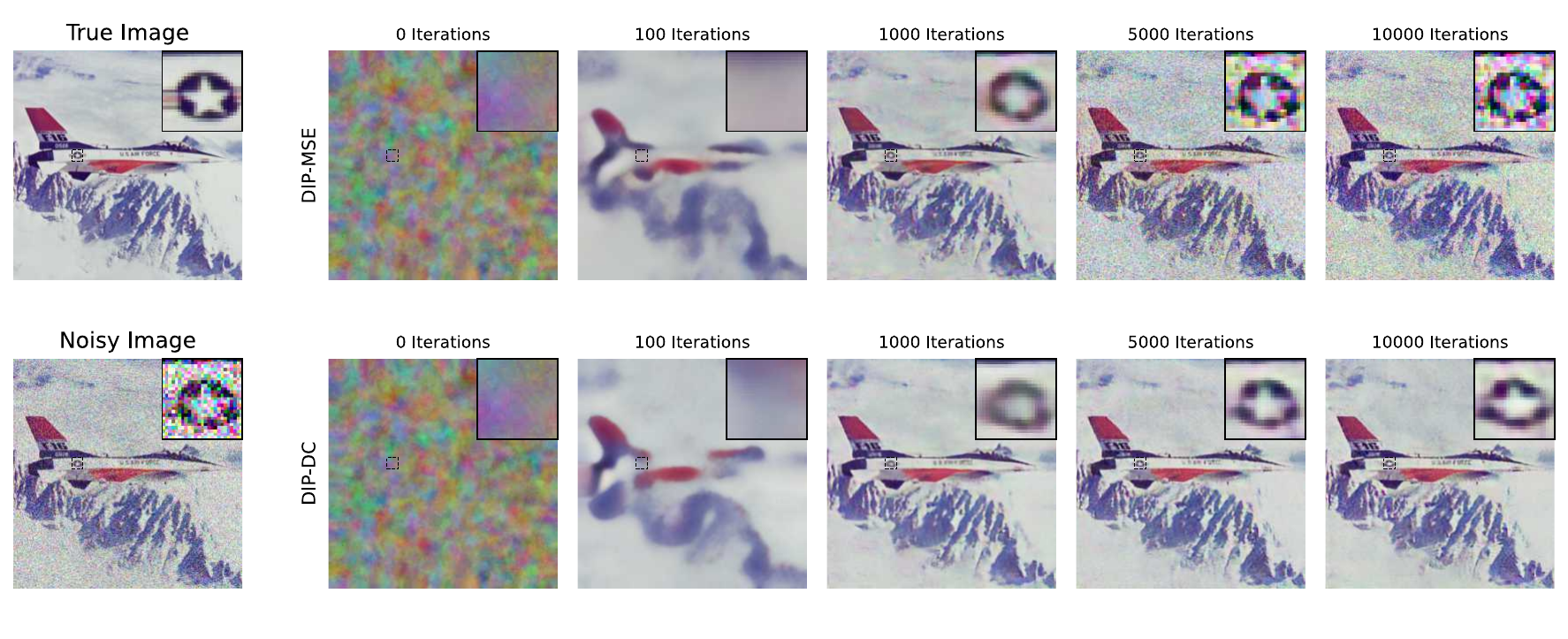}
		\caption{Images produced by DIP-MSE and DIP-DC as a function of iteration number.}
	\end{subfigure}
	\hfill
	\begin{subfigure}[b]{\textwidth}
		\includegraphics[width=\textwidth]{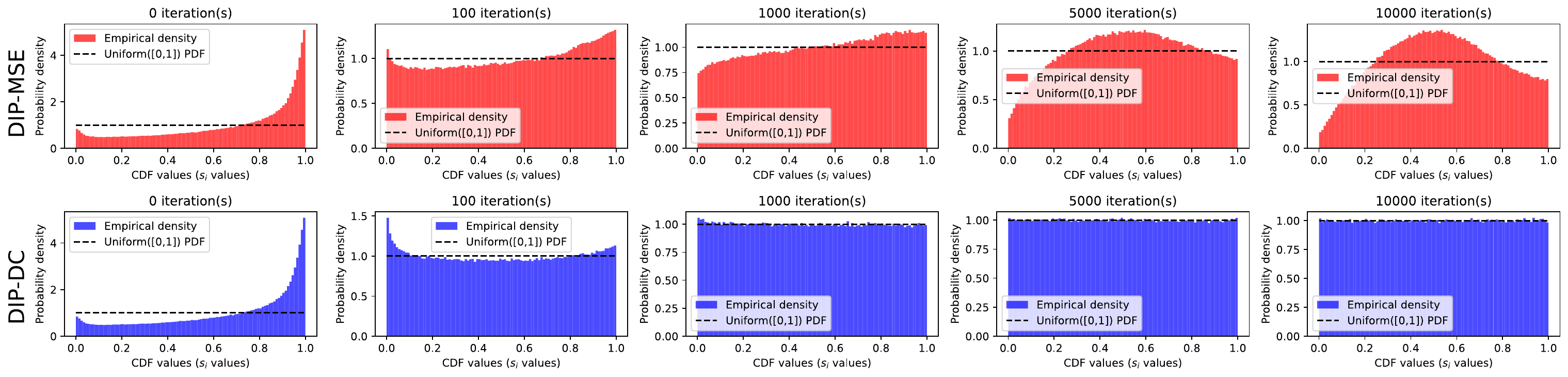}
		\caption{Histograms associated with images produced by DIP-MSE and DIP-DC as a function of iteration number.}
	\end{subfigure}
	\hfill
	\begin{subfigure}[b]{0.45\textwidth}
		\includegraphics[width=\textwidth]{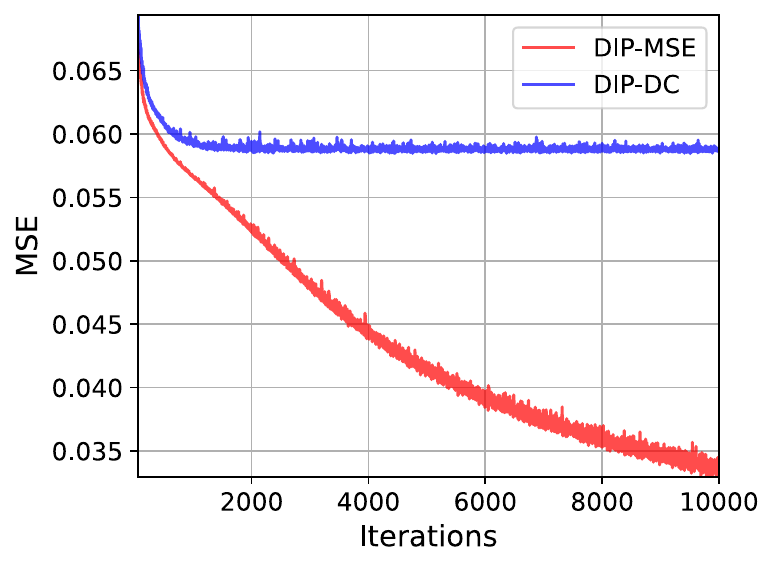}
		\caption{MSE loss as a function of iteration number.}
	\end{subfigure}
	\begin{subfigure}[b]{0.45\textwidth}
		\includegraphics[width=\textwidth]{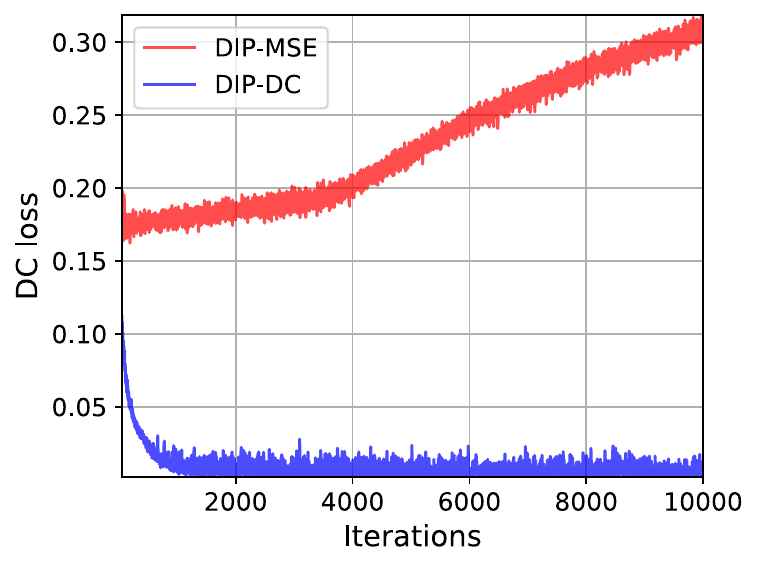}
		\caption{DC loss as a function of iteration number.}
	\end{subfigure}
	\begin{subfigure}[b]{0.45\textwidth}
		\includegraphics[width=\textwidth]{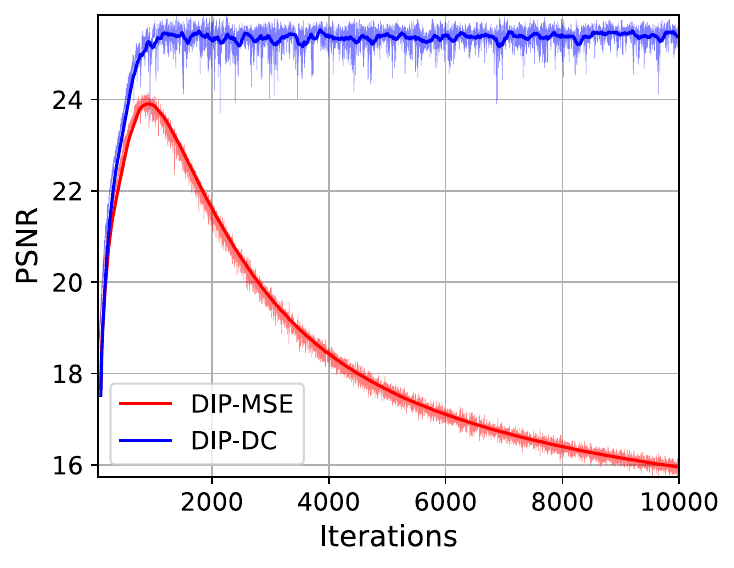}
		\caption{PSNR as a function of iteration number.}
	\end{subfigure}
	\begin{subfigure}[b]{0.45\textwidth}
		\includegraphics[width=\textwidth]{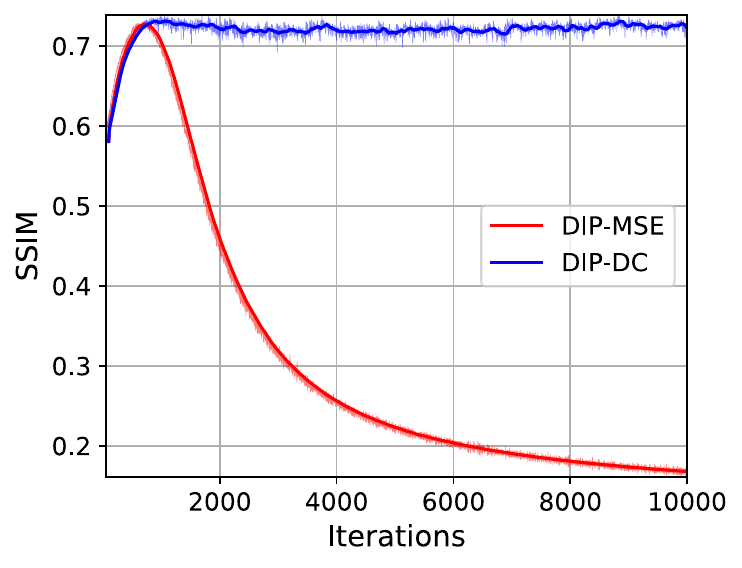}
		\caption{SSIM as a function of iteration number.}
	\end{subfigure}
	\caption{Results for DIP-MSE and DIP-DC at $ \sigma = \frac{75}{255}$.}
	\label{fig:app_sigma_75}
\end{figure}

\begin{figure}[htbp]
	\centering
	\begin{subfigure}[b]{\textwidth}
		\includegraphics[width=\textwidth]{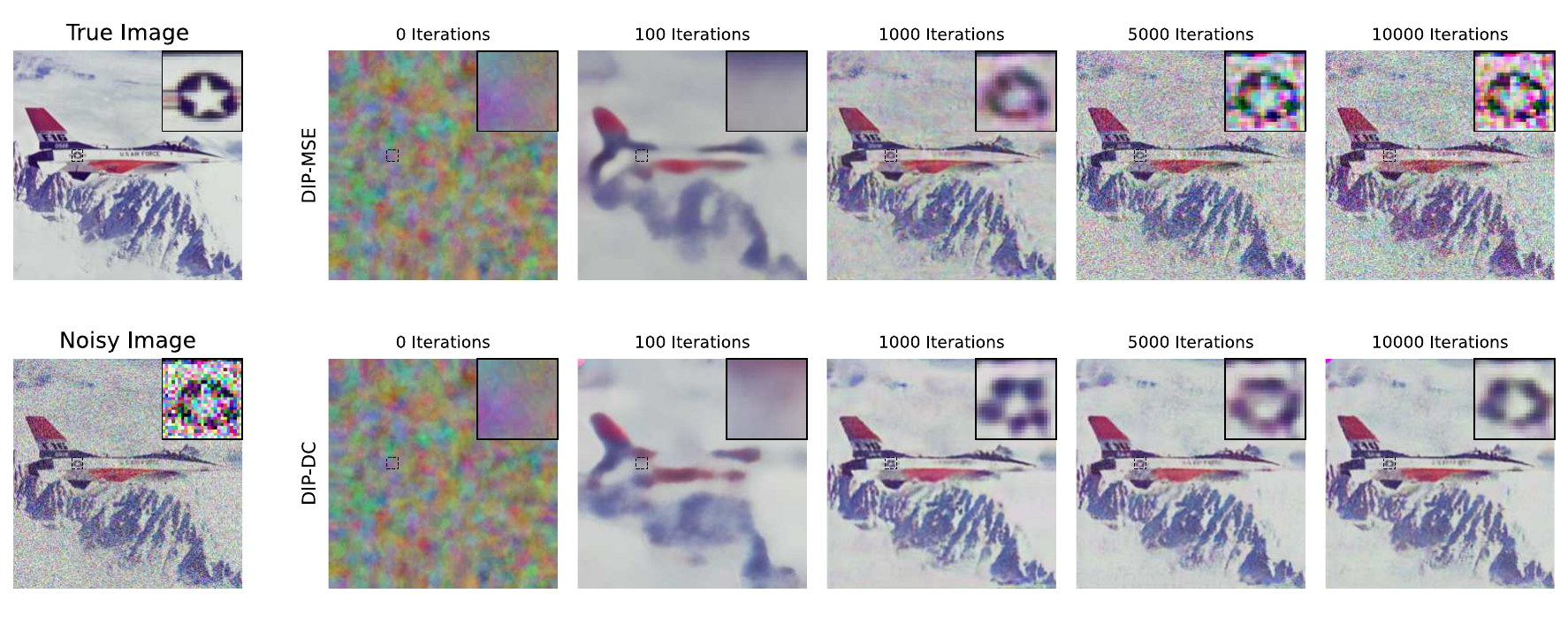}
		\caption{Images produced by DIP-MSE and DIP-DC as a function of iteration number.}
	\end{subfigure}
	\hfill
	\begin{subfigure}[b]{\textwidth}
		\includegraphics[width=\textwidth]{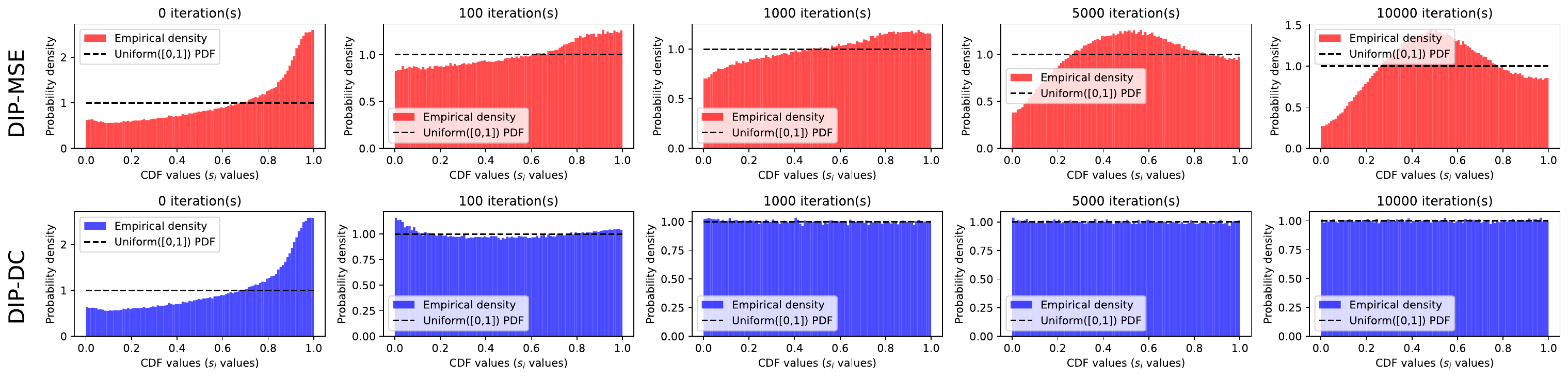}
		\caption{Histograms associated with images produced by DIP-MSE and DIP-DC as a function of iteration number.}
	\end{subfigure}
	\hfill
	\begin{subfigure}[b]{0.45\textwidth}
		\includegraphics[width=\textwidth]{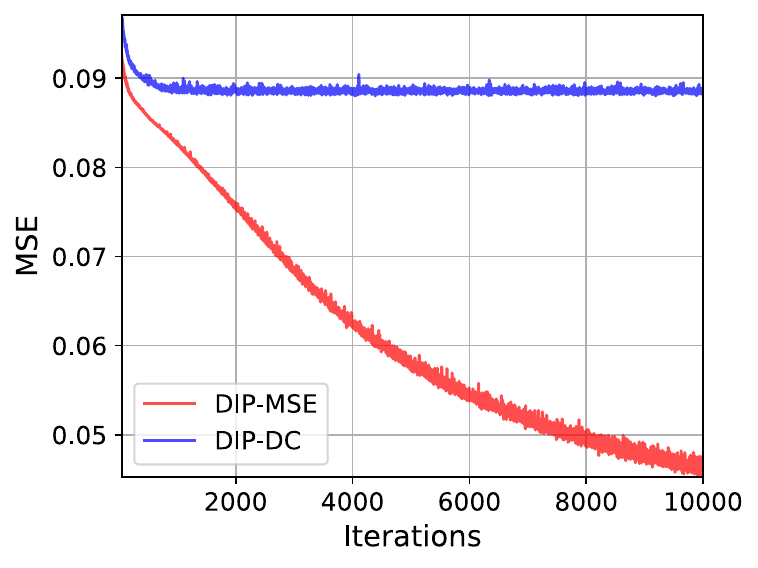}
		\caption{MSE loss as a function of iteration number.}
	\end{subfigure}
	\begin{subfigure}[b]{0.45\textwidth}
		\includegraphics[width=\textwidth]{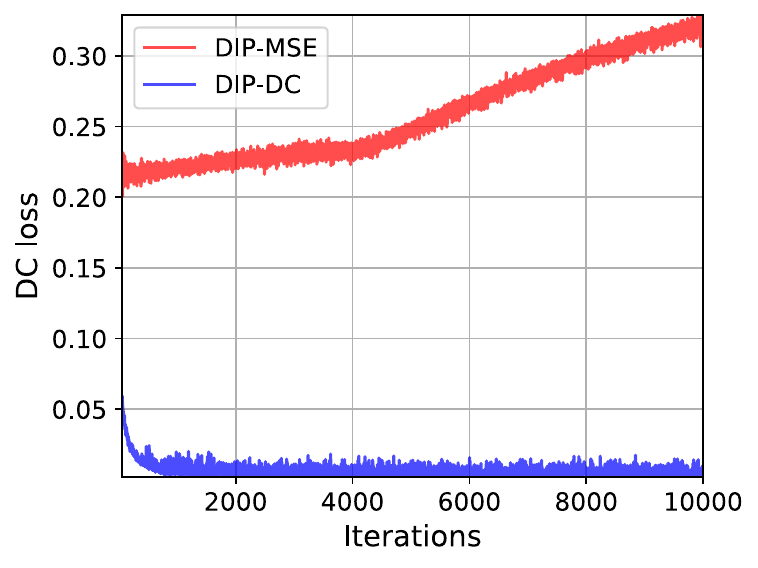}
		\caption{DC loss as a function of iteration number.}
	\end{subfigure}
	\begin{subfigure}[b]{0.45\textwidth}
		\includegraphics[width=\textwidth]{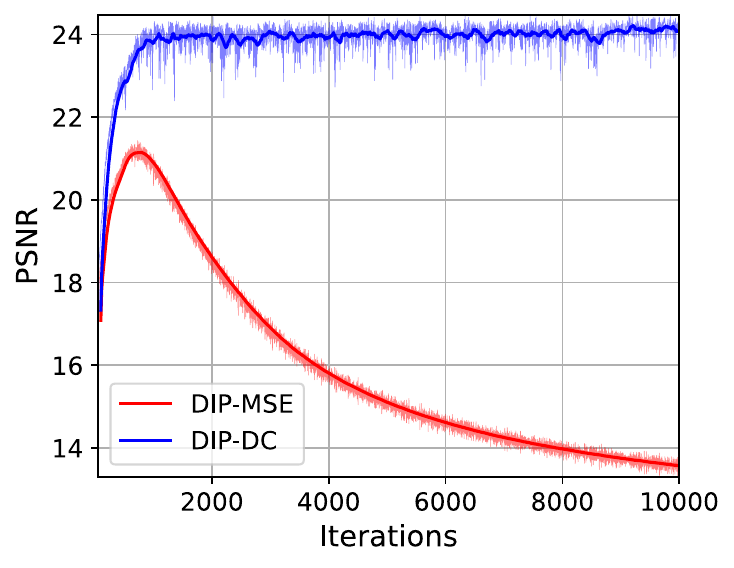}
		\caption{PSNR as a function of iteration number.}
	\end{subfigure}
	\begin{subfigure}[b]{0.45\textwidth}
		\includegraphics[width=\textwidth]{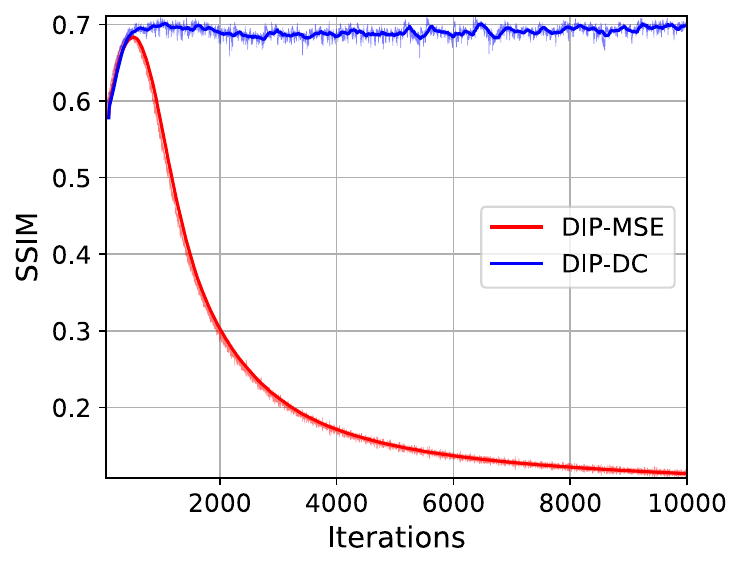}
		\caption{SSIM as a function of iteration number.}
	\end{subfigure}
	\caption{Results for DIP-MSE and DIP-DC at $ \sigma = \frac{100}{255}$.}
	\label{fig:app_sigma_100}
\end{figure}

\subsection{Results with varied images}

We additionally show some results with varied $\sigma$ for different images, to validate that our findings generalize beyond the F-16 aircraft image used in the main text.

Figure \ref{fig:app_DIP_varied_image} shows the output of DIP-DC and DIP-MSE on a selection of natural images at varied $\sigma$. These results are explored in more depth in Figures \ref{fig:app_cat}, \ref{fig:app_horse}, \ref{fig:app_train} and \ref{fig:app_castle}, showing loss curves that correspond with what was observed in Section \ref{sec:4-DIP}.

\begin{figure}[htbp]
	\centering
	\begin{subfigure}[b]{\textwidth}
		\includegraphics[width=\textwidth]{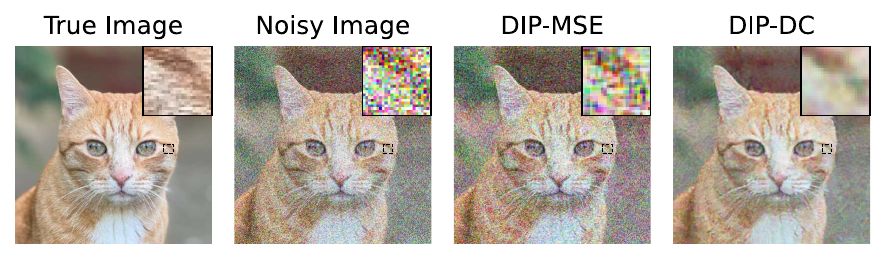}
		\caption{Cat image; $\sigma = \frac{75}{255}$.}
	\end{subfigure}
	\hfill
	\begin{subfigure}[b]{\textwidth}
		\includegraphics[width=\textwidth]{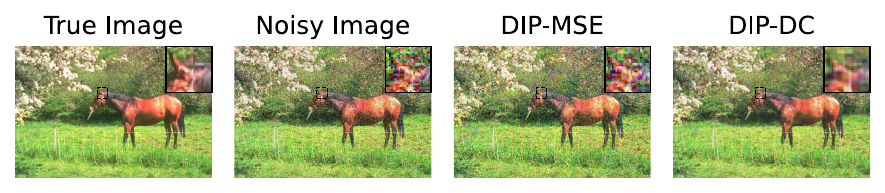}
		\caption{Horse image; $\sigma = \frac{50}{255}$.}
	\end{subfigure}
	\hfill
	\begin{subfigure}[b]{\textwidth}
		\includegraphics[width=\textwidth]{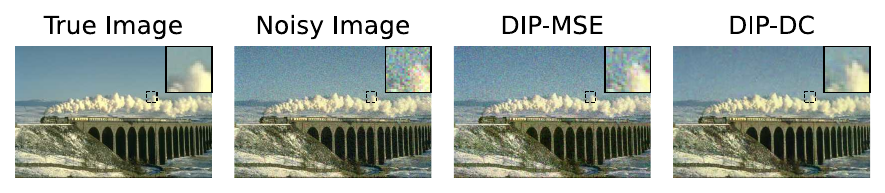}
		\caption{Train image; $\sigma = \frac{25}{255}$.}
	\end{subfigure}
	\hfill
	\begin{subfigure}[b]{\textwidth}
		\includegraphics[width=\textwidth]{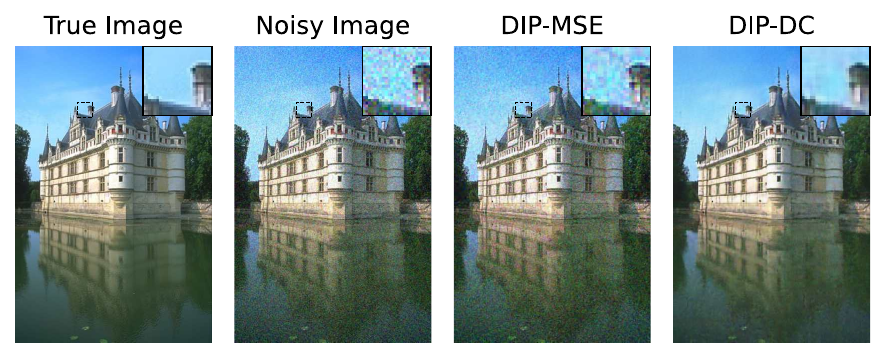}
		\caption{Castle image; $\sigma = \frac{25}{255}$.}
	\end{subfigure}
	\caption{Further example images denoised with DIP-DC and DIP-MSE. Images shown are after 10{,}000 iterations of DIP.}
	\label{fig:app_DIP_varied_image}
\end{figure}

\begin{figure}[htbp]
	\centering
	\begin{subfigure}[b]{\textwidth}
		\includegraphics[width=\textwidth]{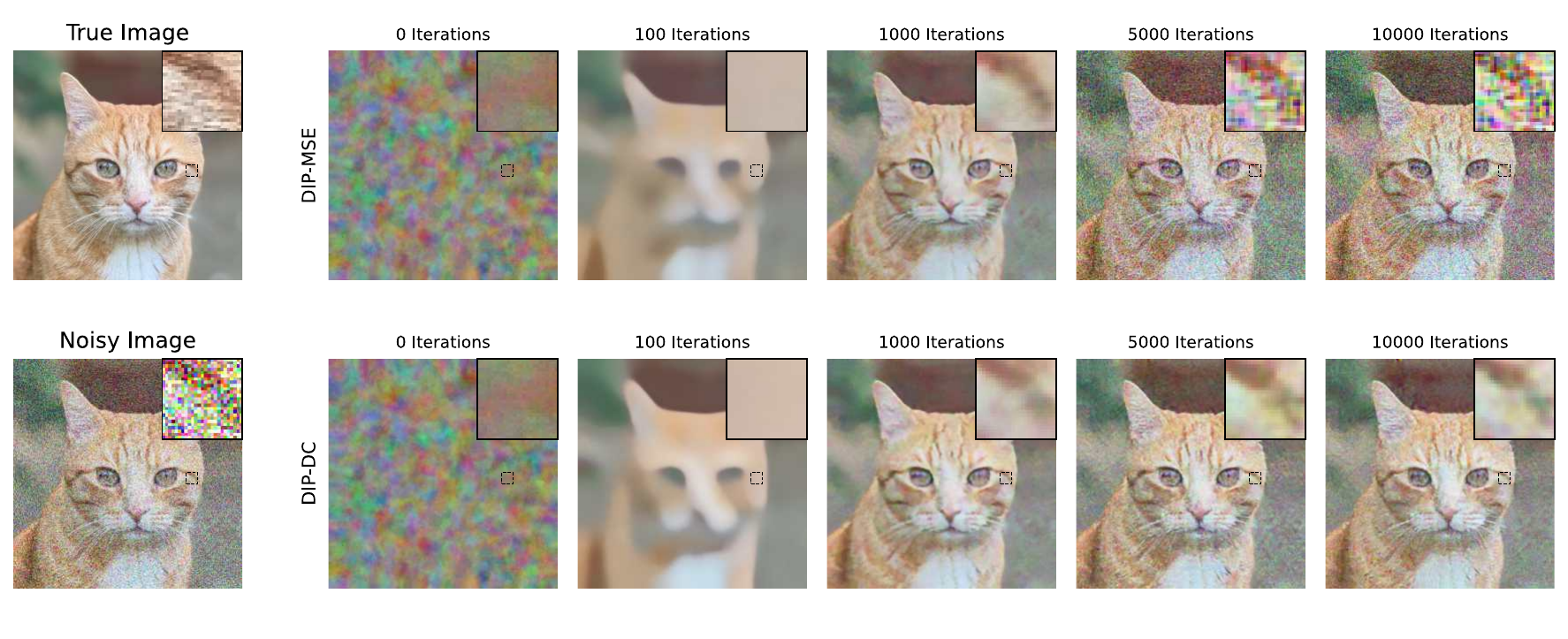}
		\caption{Images produced by DIP-MSE and DIP-DC as a function of iteration number.}
	\end{subfigure}
	\hfill
	\begin{subfigure}[b]{\textwidth}
		\includegraphics[width=\textwidth]{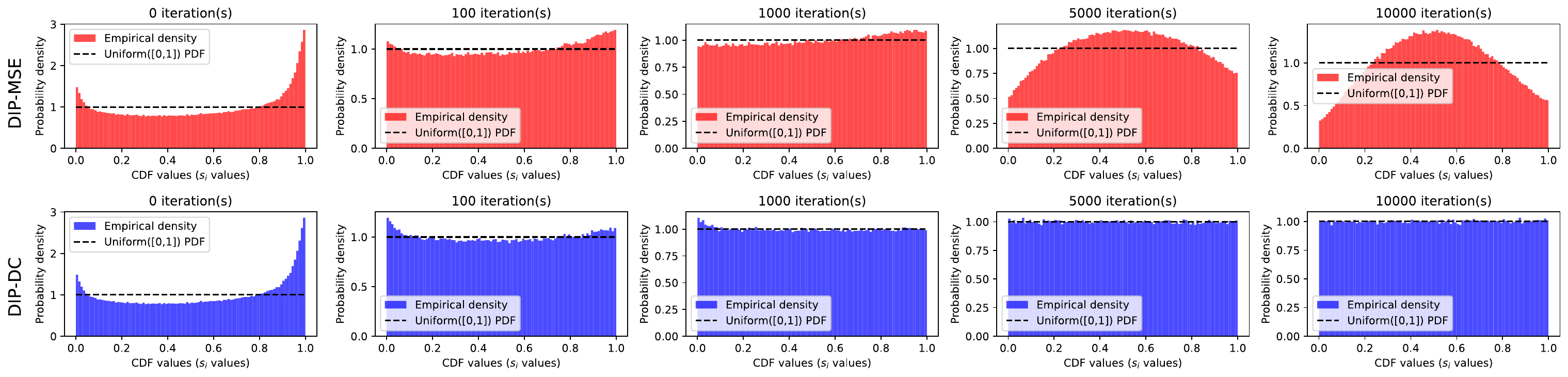}
		\caption{Histograms associated with images produced by DIP-MSE and DIP-DC as a function of iteration number.}
	\end{subfigure}
	\hfill
	\begin{subfigure}[b]{0.45\textwidth}
		\includegraphics[width=\textwidth]{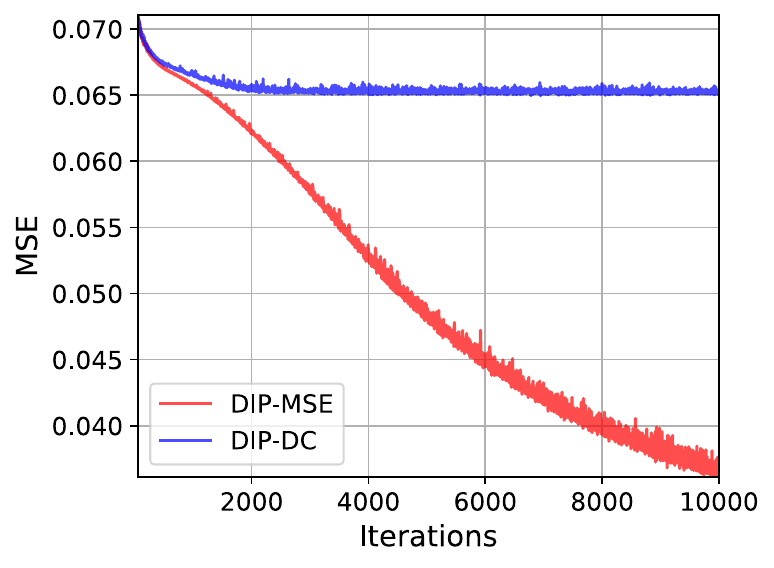}
		\caption{MSE loss as a function of iteration number.}
	\end{subfigure}
	\begin{subfigure}[b]{0.45\textwidth}
		\includegraphics[width=\textwidth]{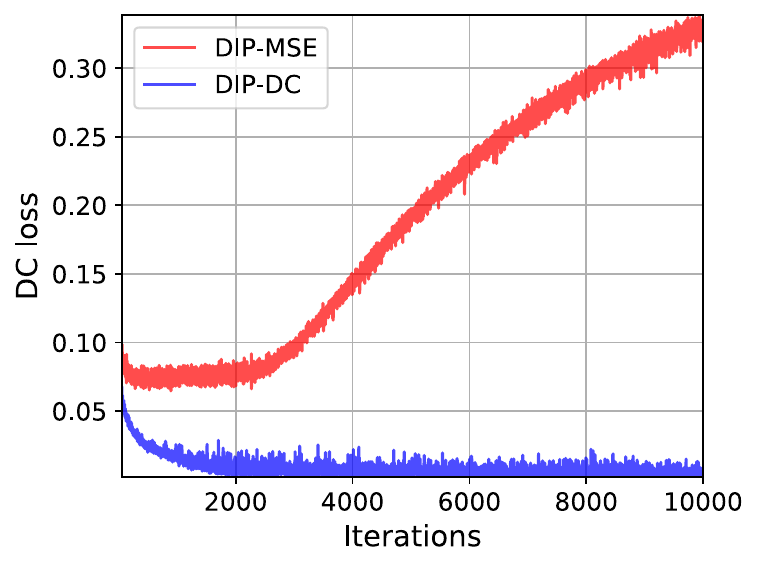}
		\caption{DC loss as a function of iteration number.}
	\end{subfigure}
	\begin{subfigure}[b]{0.45\textwidth}
		\includegraphics[width=\textwidth]{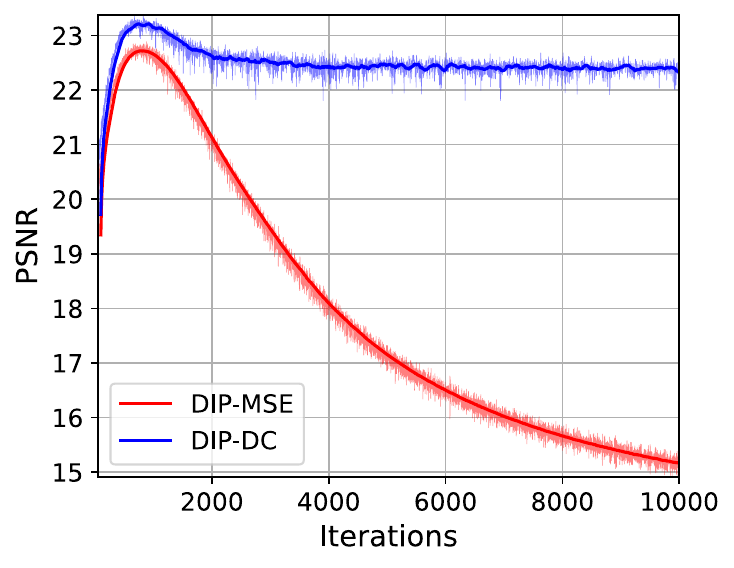}
		\caption{PSNR as a function of iteration number.}
	\end{subfigure}
	\begin{subfigure}[b]{0.45\textwidth}
		\includegraphics[width=\textwidth]{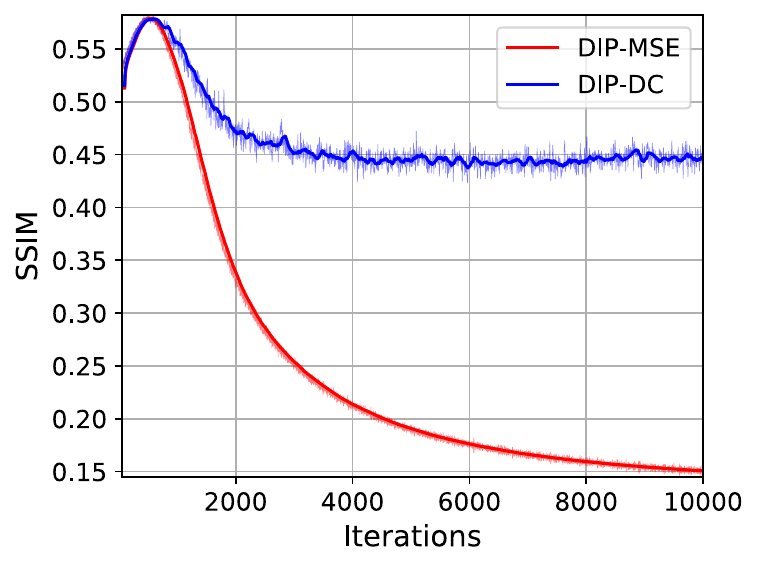}
		\caption{SSIM as a function of iteration number.}
	\end{subfigure}
	\caption{Results for DIP-MSE and DIP-DC on cat image.}
	\label{fig:app_cat}
\end{figure}

\begin{figure}[htbp]
	\centering
	\begin{subfigure}[b]{\textwidth}
		\includegraphics[width=\textwidth]{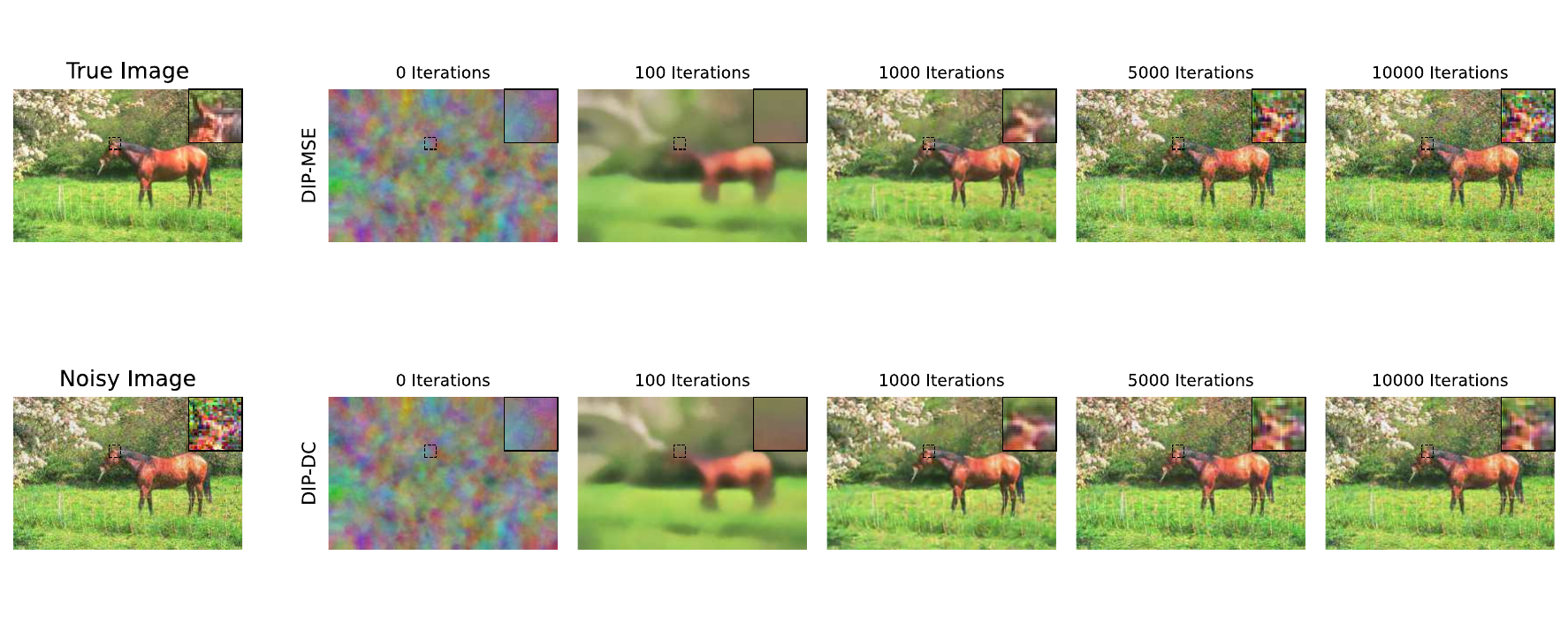}
		\caption{Images produced by DIP-MSE and DIP-DC as a function of iteration number.}
	\end{subfigure}
	\hfill
	\begin{subfigure}[b]{\textwidth}
		\includegraphics[width=\textwidth]{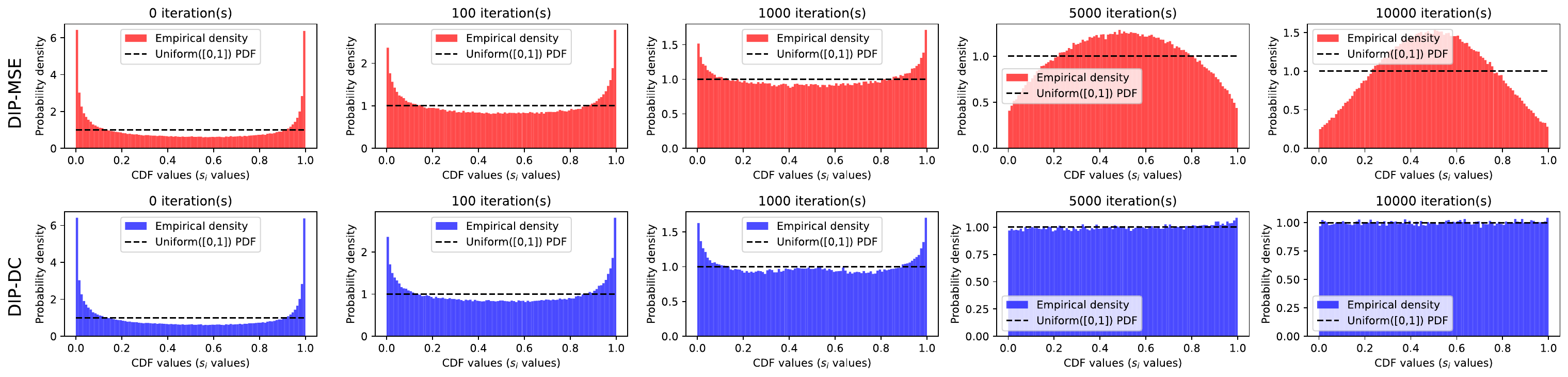}
		\caption{Histograms associated with images produced by DIP-MSE and DIP-DC as a function of iteration number.}
	\end{subfigure}
	\hfill
	\begin{subfigure}[b]{0.45\textwidth}
		\includegraphics[width=\textwidth]{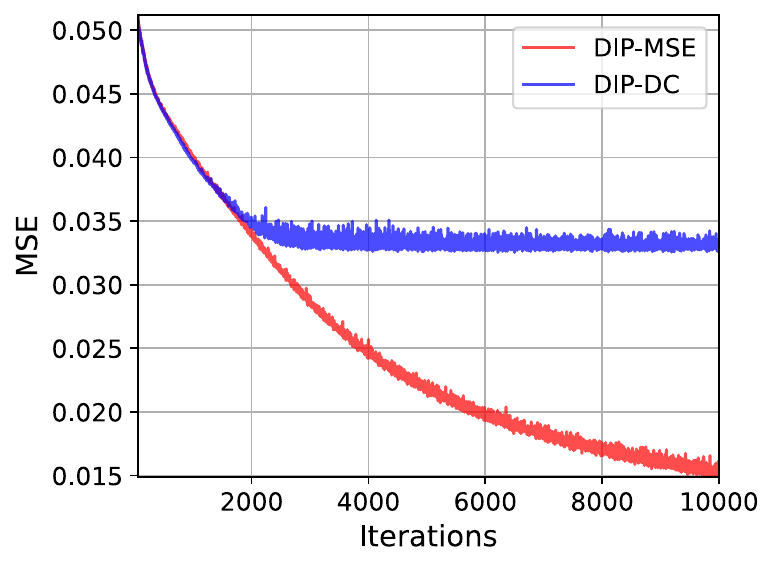}
		\caption{MSE loss as a function of iteration number.}
	\end{subfigure}
	\begin{subfigure}[b]{0.45\textwidth}
		\includegraphics[width=\textwidth]{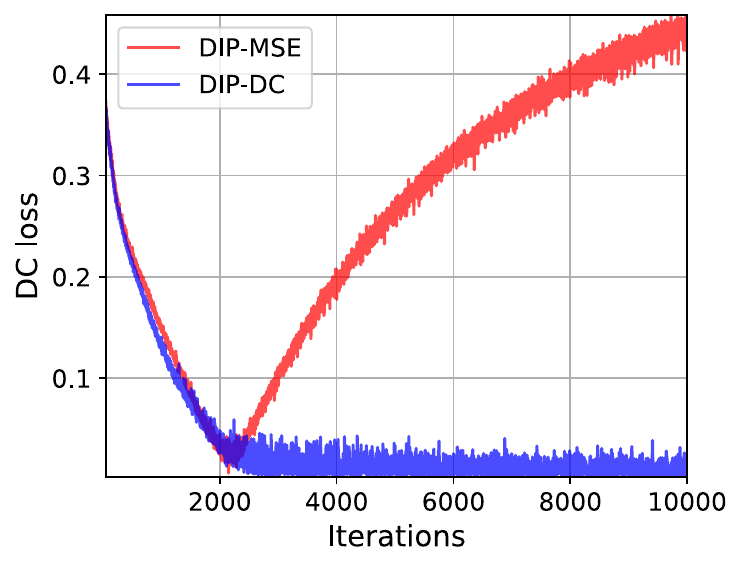}
		\caption{DC loss as a function of iteration number.}
	\end{subfigure}
	\begin{subfigure}[b]{0.45\textwidth}
		\includegraphics[width=\textwidth]{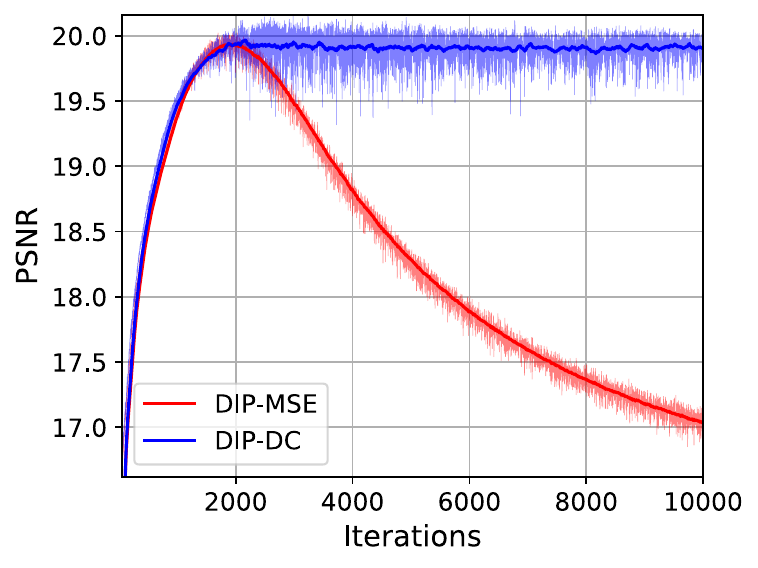}
		\caption{PSNR as a function of iteration number.}
	\end{subfigure}
	\begin{subfigure}[b]{0.45\textwidth}
		\includegraphics[width=\textwidth]{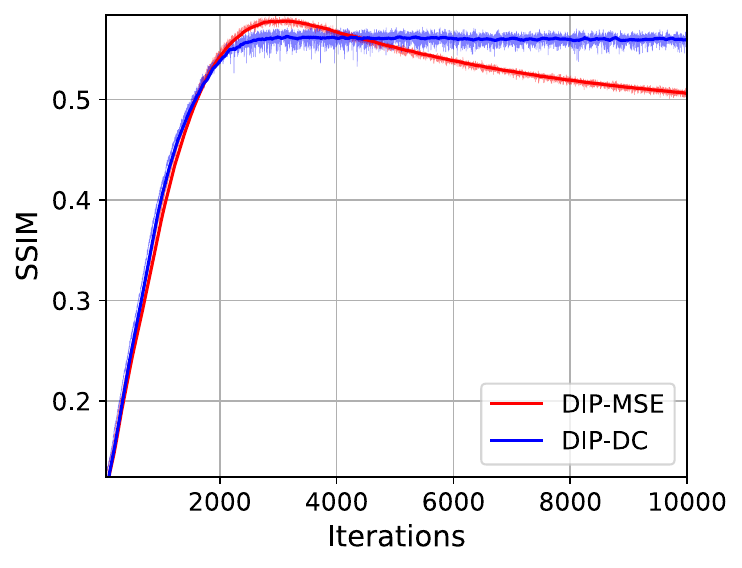}
		\caption{SSIM as a function of iteration number.}
	\end{subfigure}
	\caption{Results for DIP-MSE and DIP-DC on horse image.}
	\label{fig:app_horse}
\end{figure}

\begin{figure}[htbp]
	\centering
	\begin{subfigure}[b]{\textwidth}
		\includegraphics[width=\textwidth]{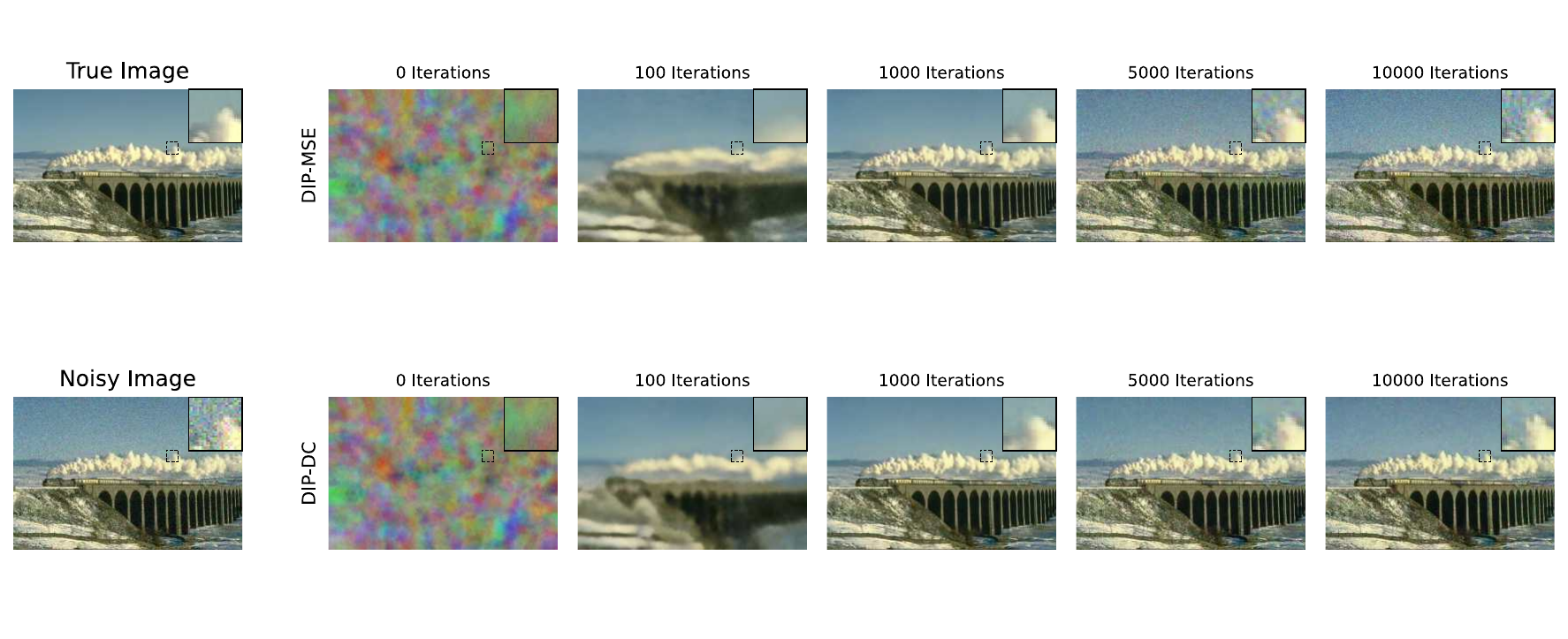}
		\caption{Images produced by DIP-MSE and DIP-DC as a function of iteration number.}
	\end{subfigure}
	\hfill
	\begin{subfigure}[b]{\textwidth}
		\includegraphics[width=\textwidth]{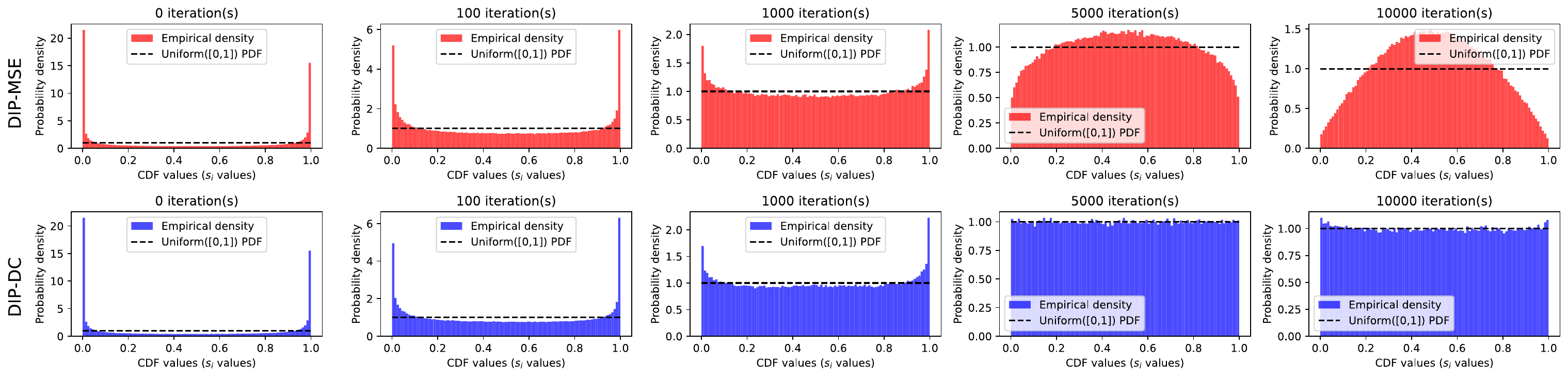}
		\caption{Histograms associated with images produced by DIP-MSE and DIP-DC as a function of iteration number.}
	\end{subfigure}
	\hfill
	\begin{subfigure}[b]{0.45\textwidth}
		\includegraphics[width=\textwidth]{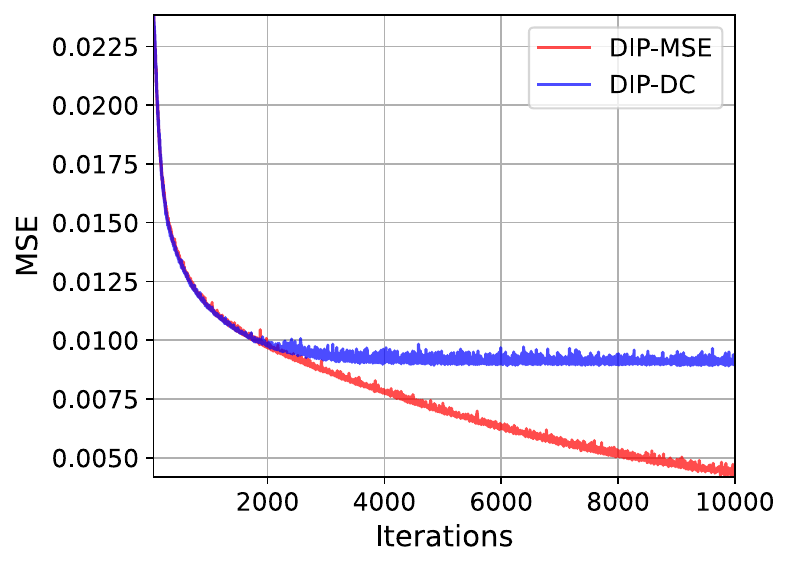}
		\caption{MSE loss as a function of iteration number.}
	\end{subfigure}
	\begin{subfigure}[b]{0.45\textwidth}
		\includegraphics[width=\textwidth]{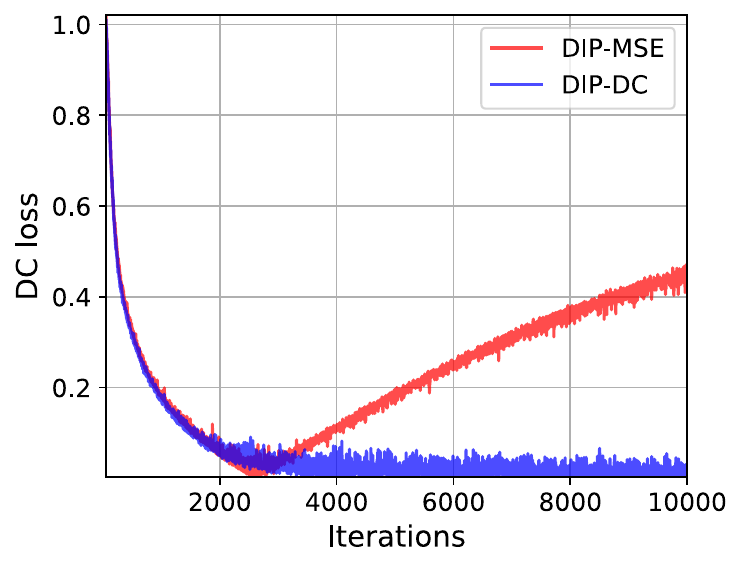}
		\caption{DC loss as a function of iteration number.}
	\end{subfigure}
	\begin{subfigure}[b]{0.45\textwidth}
		\includegraphics[width=\textwidth]{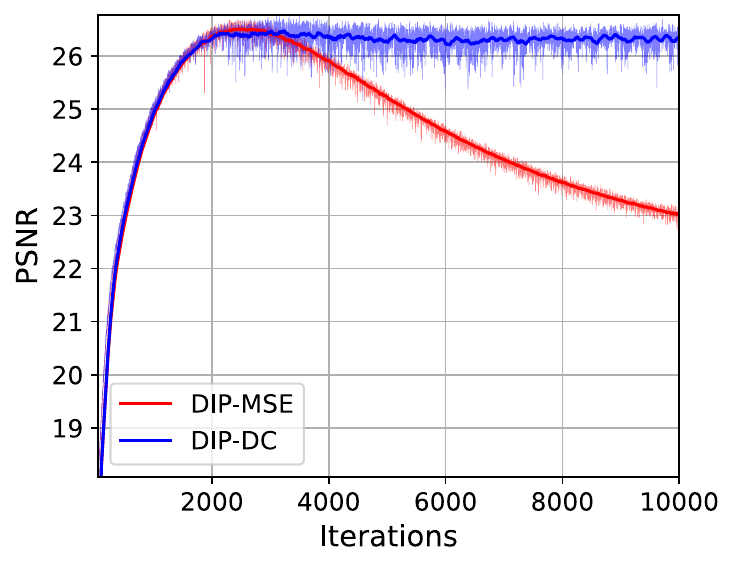}
		\caption{PSNR as a function of iteration number.}
	\end{subfigure}
	\begin{subfigure}[b]{0.45\textwidth}
		\includegraphics[width=\textwidth]{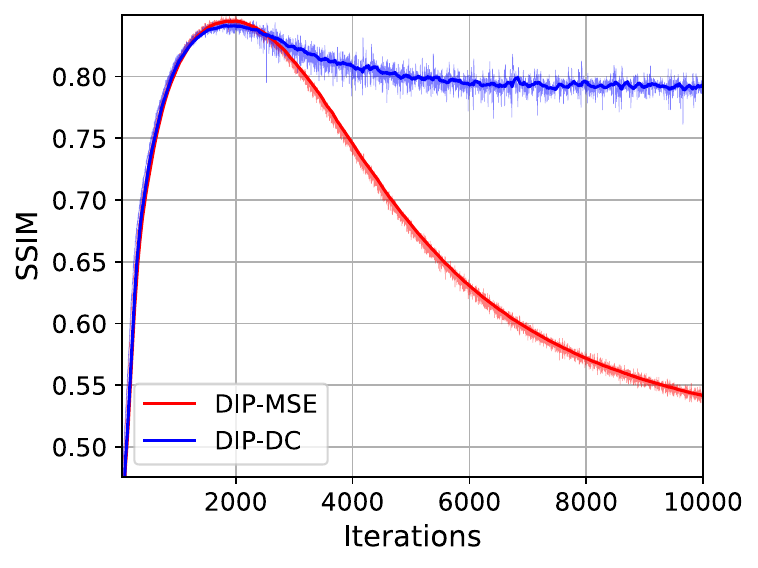}
		\caption{SSIM as a function of iteration number.}
	\end{subfigure}
	\caption{Results for DIP-MSE and DIP-DC on train image.}
	\label{fig:app_train}
\end{figure}

\begin{figure}[htbp]
	\centering
	\begin{subfigure}[b]{\textwidth}
		\includegraphics[width=\textwidth]{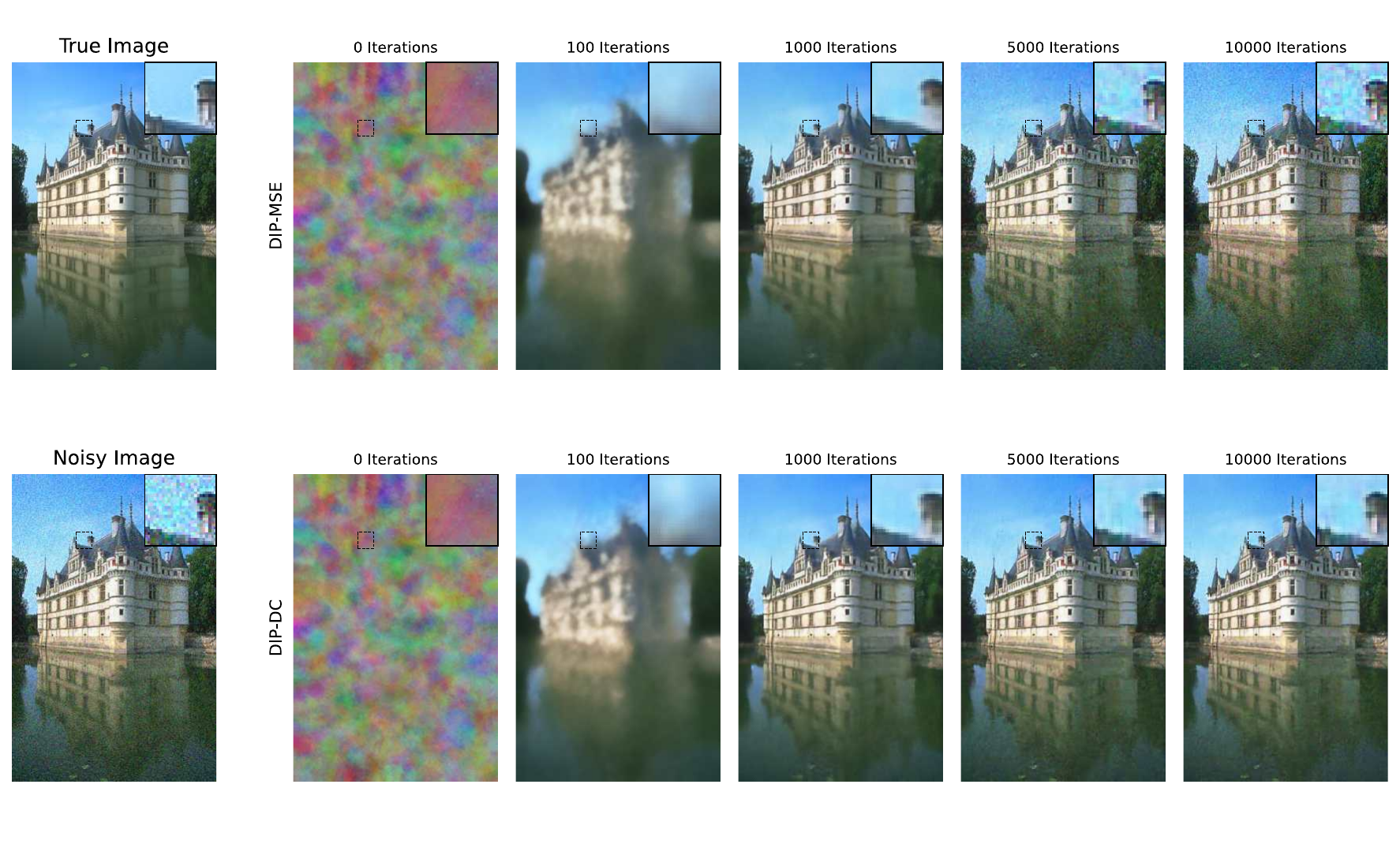}
		\caption{Images produced by DIP-MSE and DIP-DC as a function of iteration number.}
	\end{subfigure}
	\hfill
	\begin{subfigure}[b]{\textwidth}
		\includegraphics[width=\textwidth]{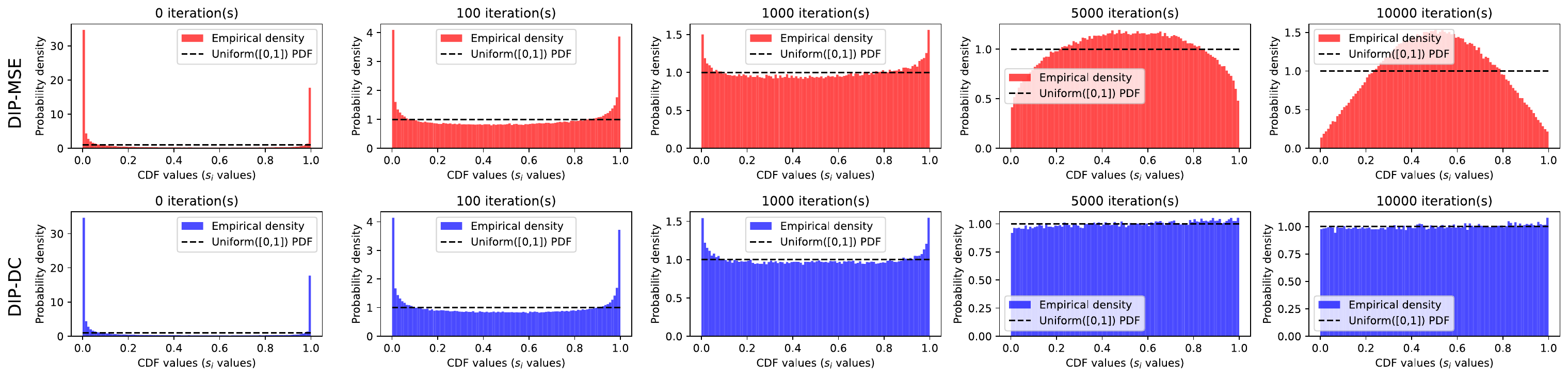}
		\caption{Histograms associated with images produced by DIP-MSE and DIP-DC as a function of iteration number.}
	\end{subfigure}
	\hfill
	\begin{subfigure}[b]{0.45\textwidth}
		\includegraphics[width=0.8\textwidth]{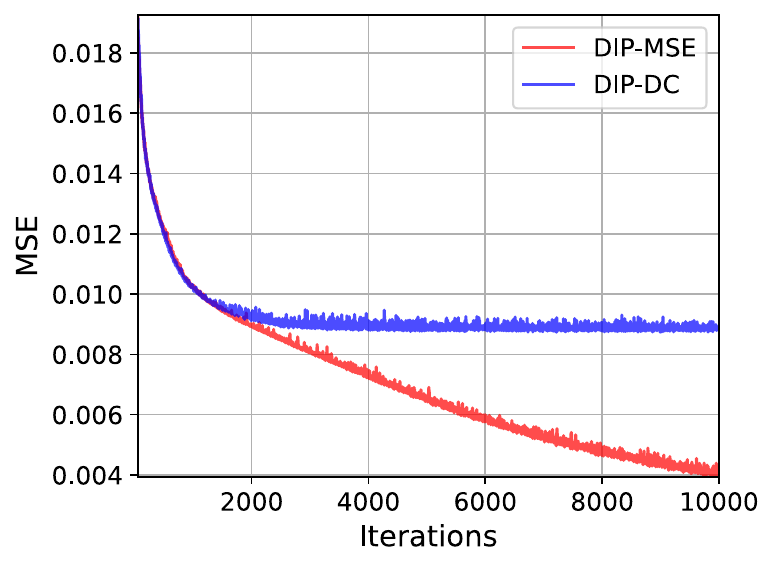}
		\caption{MSE loss as a function of iteration number.}
	\end{subfigure}
	\begin{subfigure}[b]{0.45\textwidth}
		\includegraphics[width=0.8\textwidth]{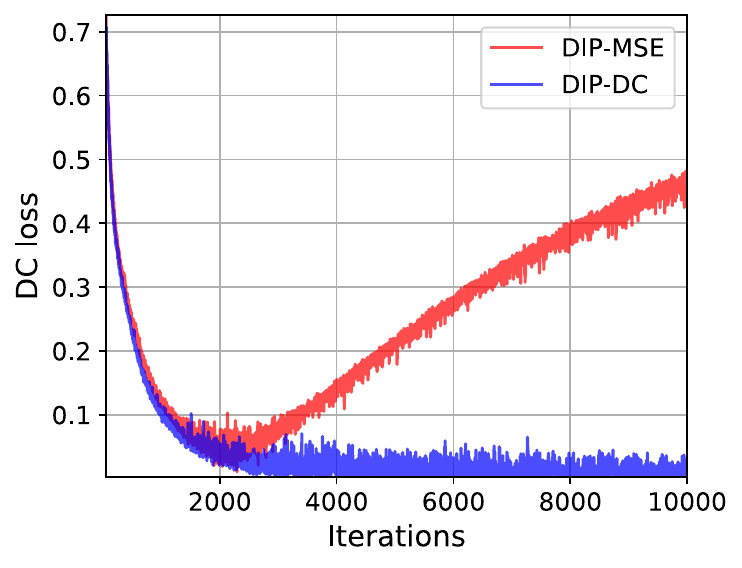}
		\caption{DC loss as a function of iteration number.}
	\end{subfigure}
	\begin{subfigure}[b]{0.45\textwidth}
		\includegraphics[width=0.8\textwidth]{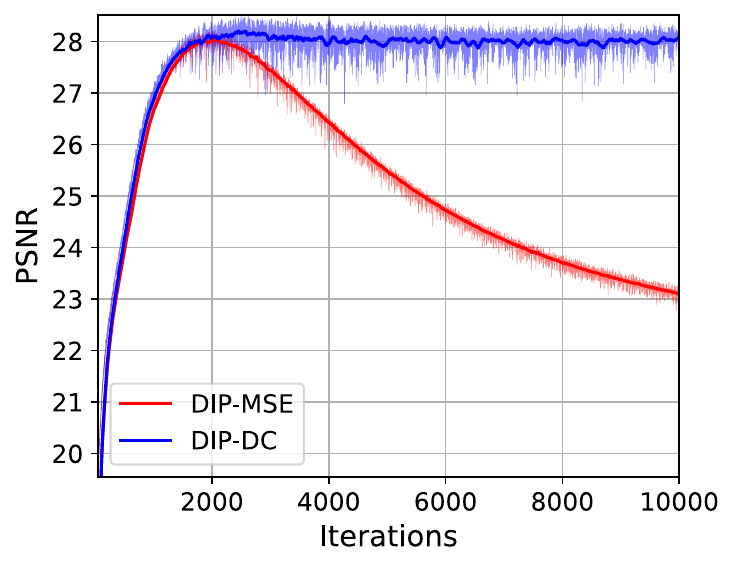}
		\caption{PSNR as a function of iteration number.}
	\end{subfigure}
	\begin{subfigure}[b]{0.45\textwidth}
		\includegraphics[width=0.8\textwidth]{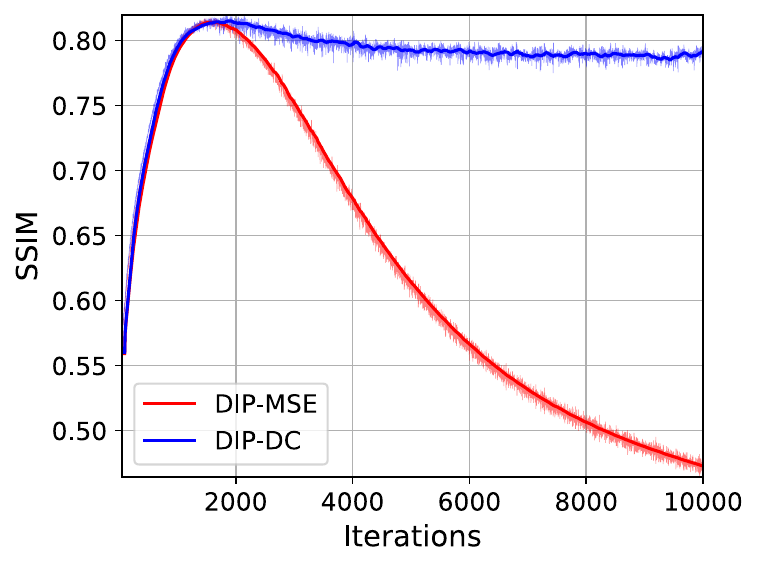}
		\caption{SSIM as a function of iteration number.}
	\end{subfigure}
	\caption{Results for DIP-MSE and DIP-DC on castle image.}
	\label{fig:app_castle}
\end{figure}

\clearpage
\newpage

\subsection{Effect of misspecified noise}\label{app:DIP_varied_noise}

We considered the behavior of DC loss when the assumed noise model is systematically violated.
For all experiments, DC loss assumed the same noise model of a clipped Gaussian with $\sigma_{\text{assumed}} = \frac{75}{255}$, matching the setting used in the main paper.

To probe robustness, we generated data from three alternative families of noise distributions,
each controlled by a single scalar parameter that smoothly modulates the degree of mismatch.
In every case, the clean image and optimization settings were kept fixed.

\paragraph{Outliers} We first investigated the effect of sparse, heavy-tailed corruptions. For each pixel, with probability $p$, we inflated the true noise standard deviation to $\sigma_{\text{true}} = 5\sigma_{\text{assumed}}$, drawing the measurement from a heavier-tailed distribution than the one assumed by the loss.

We varied $p$ from 0\% to 25\%. Figure \ref{fig:app-outliers} shows that the performance of DC loss degraded gracefully over this range, with similar degradation observed relative to MSE loss.

\begin{figure}[htbp]
	\centering
	\begin{subfigure}[b]{0.48\textwidth}
		\includegraphics[width=\textwidth]{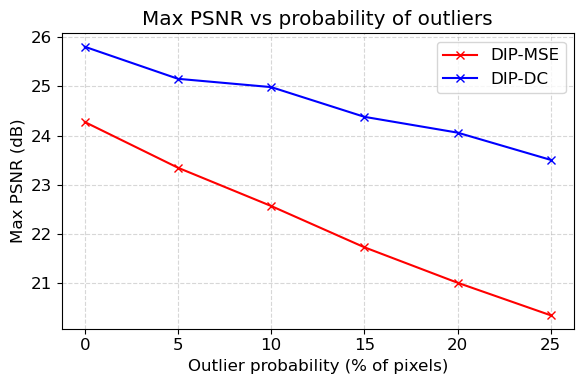}
		\caption{Max PSNR (with respect to iteration) as a function of $p$, the probability that a pixel's measurement is drawn from an outlier heavy-tailed distribution.}
		\vspace{0.5cm}
	\end{subfigure}
	\begin{subfigure}[b]{\textwidth}
		\includegraphics[width=\textwidth]{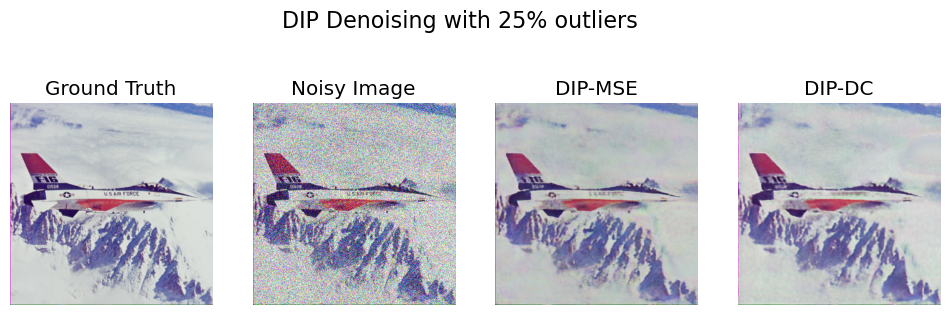}
		\caption{Images corresponding to $p$ = 25\% on the above chart.}
	\end{subfigure}
	\caption{Results for outlier ablation study, investigating the effect of mispecifying the noise distribution for $p$ pixels.}
	\label{fig:app-outliers}
\end{figure}

\paragraph{Correlated noise} We next examined noise with spatial correlation structure. Here, we drew an i.i.d. Gaussian field $\epsilon \sim \mathcal{N}(0,\sigma_{\text{assumed}}^2 )$ of the same shape as the image, and then spatially smoothed it using an isotropic Gaussian kernel. Specifically, letting $g_{\tau}$ denote a Gaussian blur with standard deviation $\tau$ pixels, the true noise was generated as
\[
n_{\text{true}} = g_{\tau} * \epsilon .
\]
The parameter $\tau \ge 0$ controls the strength of correlation: $\tau = 0$ recovers the matched i.i.d. model, while increasing $\tau$ broadens the correlation structure, thereby violating the independence assumption underlying the MSE and DC losses. After smoothing, $n_{\text{true}}$ was rescaled to have standard deviation $\sigma_{\text{assumed}}$, ensuring that only the correlation structure was altered, not the noise amplitude.

We varied $\tau$ from 0 to 2.5. Figure \ref{fig:app-corr} shows that the benefit of DC loss relative to MSE loss degrades severely when correlation is introduced. However, the performance of our method nonetheless degrades at a similar rate to MSE loss in this setting.

\begin{figure}[htbp]
	\centering
	\begin{subfigure}[b]{0.48\textwidth}
		\includegraphics[width=\textwidth]{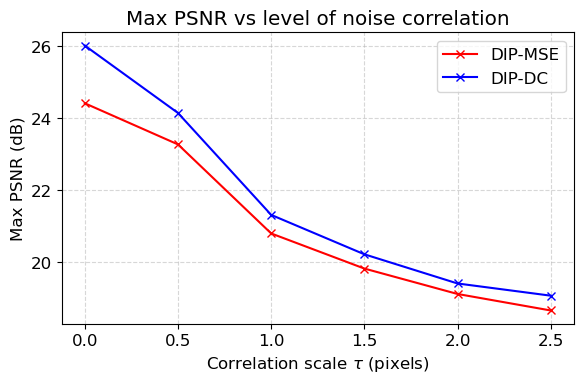}
		\caption{Max PSNR (with respect to iteration) as a function of $\tau$, the correlation scale.}
		\vspace{0.5cm}
	\end{subfigure}
	\begin{subfigure}[b]{\textwidth}
		\includegraphics[width=\textwidth]{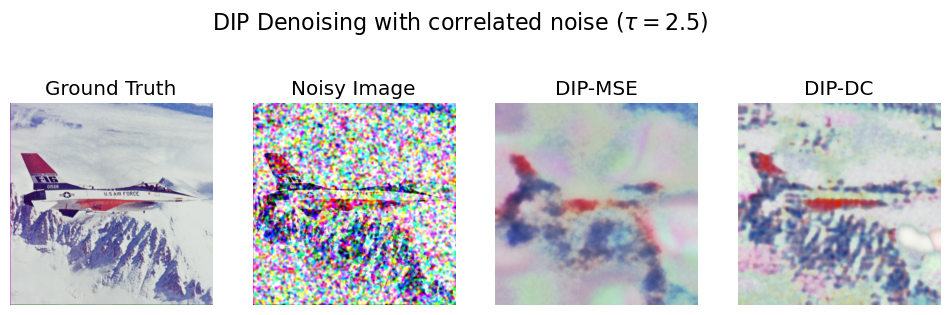}
		\caption{Images corresponding to $\tau$ = 2.5 on the above chart.}
	\end{subfigure}
	\caption{Results for correlated noise ablation study, investigating the effect of mispecifying the noise distribution by introducing correlated noise.}
	\label{fig:app-corr}
\end{figure}

\paragraph{Heteroscedastic noise} Thirdly, we tested a form of noise whose variance depends on the underlying clean intensity.

For a pixel with clean value $x \in [0,1]$, we defined the true noise level as
\[
\sigma_{\text{true}}(x)
= \sigma_{\text{assumed}}\,\bigl(1 + \alpha(2x - 1)\bigr),
\]
where $\alpha \in [-1,1]$ determines the severity of heteroscedasticity. 
Thus $\alpha = 0$ recovers the matched homoscedastic model, while increasing $|\alpha|$ creates regions of inflated or reduced variance depending on image brightness. For each pixel we then sampled
\[
n_{\text{true}}(x) \sim \mathcal{N}\bigl(0,\,\sigma_{\text{true}}(x)^2\bigr),
\]
and constructed the observed image as $y = x + n_{\text{true}}(x)$ with clipping to $[0,1]$.  
The parameter $\alpha$ therefore provides a single scalar that smoothly controls the degree of mismatch, analogous to the outlier rate~$p$ and correlation scale~$\tau$ above.

\begin{figure}[htbp]
	\centering
	\begin{subfigure}[b]{0.48\textwidth}
		\includegraphics[width=\textwidth]{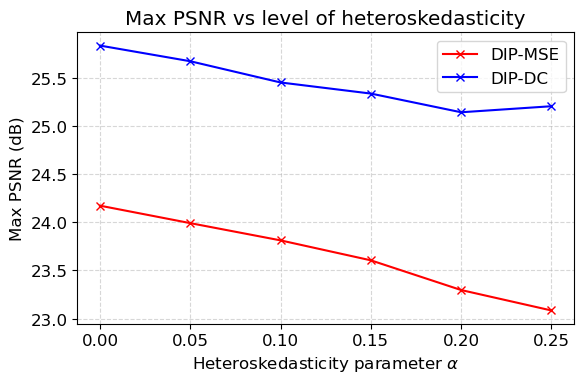}
		\caption{Max PSNR (with respect to iteration) as a function of $\alpha$, the level of heteroscedasticity.}
		\vspace{0.5cm}
	\end{subfigure}
	\begin{subfigure}[b]{\textwidth}
		\includegraphics[width=\textwidth]{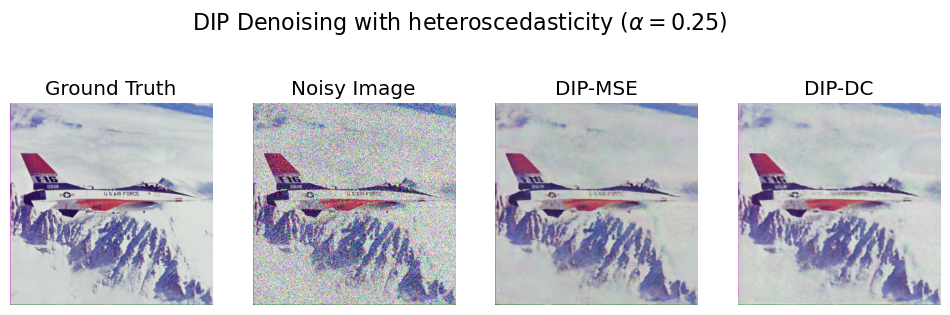}
		\caption{Images corresponding to $\alpha$ = 0.25 on the above chart.}
	\end{subfigure}
	\caption{Results for heteroscedastic noise ablation study, investigating the effect of mispecifying the noise distribution by introducing heteroscedastic noise.}
	\label{fig:app-hetero}
\end{figure}

Our ablation test again demonstrated graceful degradation in the performance of both DC loss and MSE loss with respect to the noise modelling mismatch.

\paragraph{Conclusion}

Taken together, these three experiments explore mismatches in tail behavior, spatial
correlation, and variance structure. Each setting introduces a controlled departure from
the assumed noise model, allowing us to assess how reconstruction quality evolves as the
data deviate from the idealised conditions encoded in the loss.

Under the noise model mismatch settings studied, DC loss's performance degraded gracefully and comparatively to MSE loss.

\clearpage
\newpage

\section{PET reconstruction application}\label{app:PET}

\renewcommand{\thefigure}{F\origthefigureF}
\setcounter{figure}{0}

\subsection{Experimental design}\label{app:PET_experimental_design}

A ground truth BrainWeb phantom was generated \citep{cocosco_brainweb_1997} with a width and height of $N_d=256$ voxels, and multiplied by a scale factor of 0.1 (resulting in a mean voxel intensity of 2.42). A forward model was specified using ParallelProj \citep{schramm_parallelprojopen-source_2024}, approximating the dimensions of a single ring of a Siemens Biograph mMR PET scanner. Lines of response were over-sampled by a factor of 4, and pairs of lines were then added together, so as to simulate thicker lines of response, ensuring the projected rays intersected all voxels in the image space (which avoided a null space in the projector for smaller voxel sizes).

This setup produced sinograms with 252 projection angles and 688 radial bins (for a total of 173376 lines of response, of which 54773 intersected non-zero voxels in the $N_d=256$ ground truth phantom).

For the case of regularized PET image reconstruction (corresponding to Section~\ref{sec:5-pet_reg}), the intensity of the phantom was 25\% of the unregularized case, to simulate a lower-dose acquisition. The edge-preserving TV penalty used is defined in subsection \ref{app:pet_reg_further_section}, and $N_d = 256$ was used only.

For each of the algorithms NLL-Adam, DC-Adam and MLEM, our starting image iteration was an image of uniform intensity equal to the mean intensity of non-zero voxels in the phantom.

For DC-Adam and NLL-Adam only, at each iteration, we multiplied the image estimate by a mask representing the convex hull of the non-zero voxels in the phantom (information that can be derived physically in a real PET scan from the attenuation map). Final images were passed through a ReLU function to remove negative values (which are not physically realistic for a PET image). We preconditioned each gradient update by dividing by the sensitivity image (i.e. the backprojected image obtained by backprojecting a sinogram of ones).

For the experiments in subsection \ref{app:PET_voxel_size_assessment}, the total sum of counts in the ground truth phantom was held constant as the width of the phantom was varied from $N_d = 64$ to $N_d = 512$. In this appendix, learning rate 0.01 was used for $N_d = 512$, 0.005 for $N_d = 256$ and 0.0025 for $N_d \le 128$.

Experiments were conducted with a 24GB Nvidia GeForce RTX 3090 GPU and run for 10{,}000 iterations; for the largest image size $512\times512$, running DC-Adam, NLL-Adam and MLEM consecutively took $\sim30$ minutes.

\subsection{Assessing the distributional consistency of the clean phantom}\label{app:empirical_uniform_assessment_poisson}

In contrast to the Gaussian setting, with a Poisson noise model we have a discrete CDF. As a result, the theoretical guarantee of the probability integral transform does not strictly hold. However, the true image estimate should still induce an approximately uniform histogram of CDF values, and we investigate this assumption in this subsection.

Figure \ref{fig:app_validating_histos_PET} explores how changing the simulated radioactivity level (often referred to as the dose, affecting the total number of measured counts) influences the histogram of CDF values for both the ground-truth sinogram and the corresponding noisy sinogram. We also include an alternative reference distribution: the CDF values of Poisson-distributed samples generated by treating each noisy measurement as the Poisson mean. All count levels in this figure are expressed relative to the baseline count level used for the PET results in the main paper.

\begin{figure}[htbp]
	\centering
	\begin{subfigure}[b]{\textwidth}
		\includegraphics[width=\textwidth]{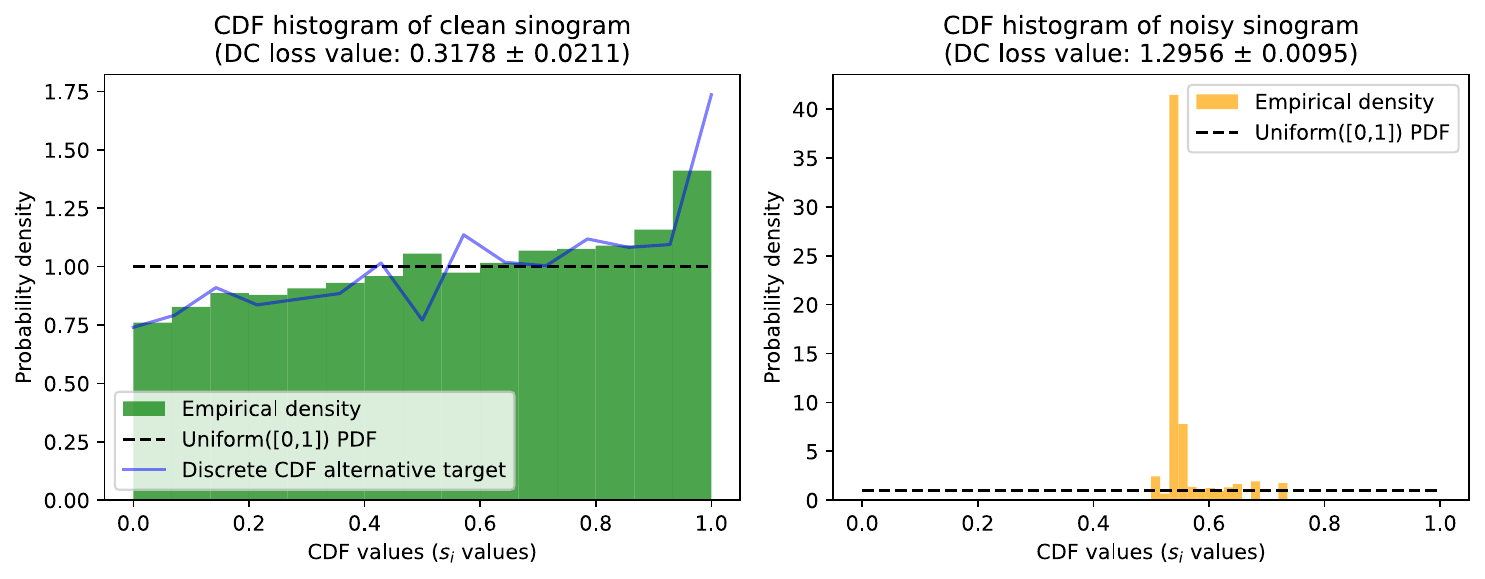}
		\caption{Count level 0.2.}
	\end{subfigure}
	\hfill
	\begin{subfigure}[b]{\textwidth}
		\includegraphics[width=\textwidth]{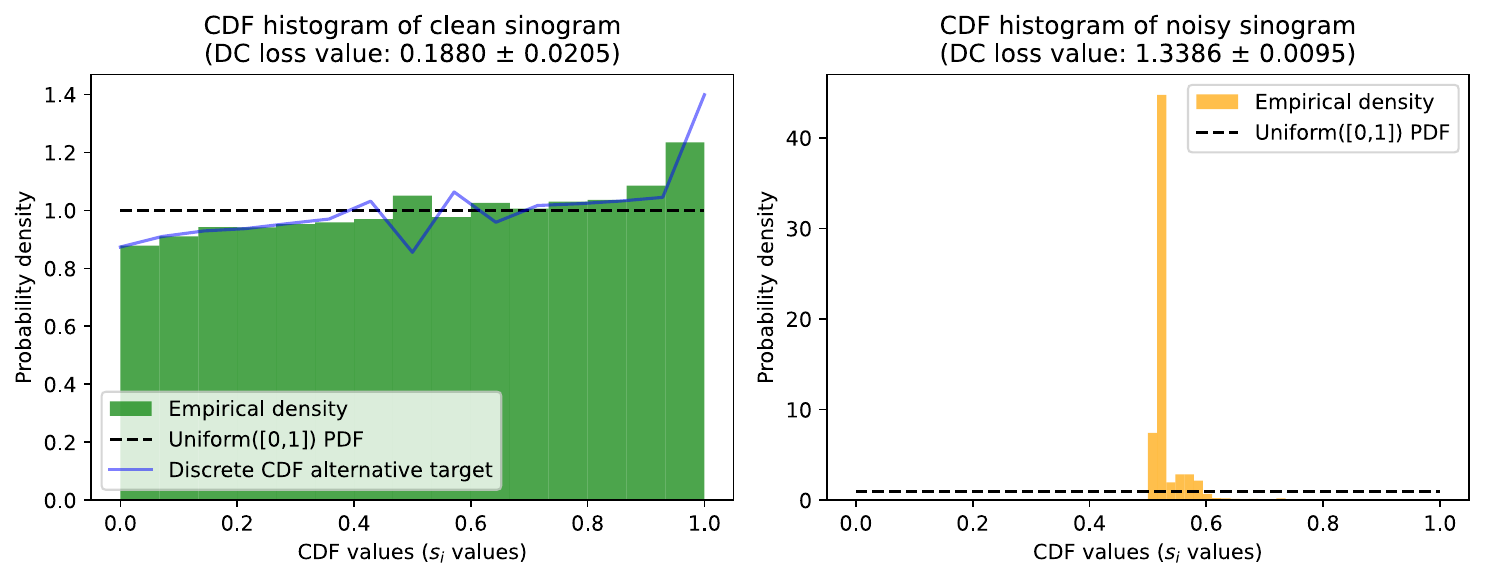}
		\caption{Count level 1.0.}
	\end{subfigure}
	\hfill
	\begin{subfigure}[b]{\textwidth}
		\includegraphics[width=\textwidth]{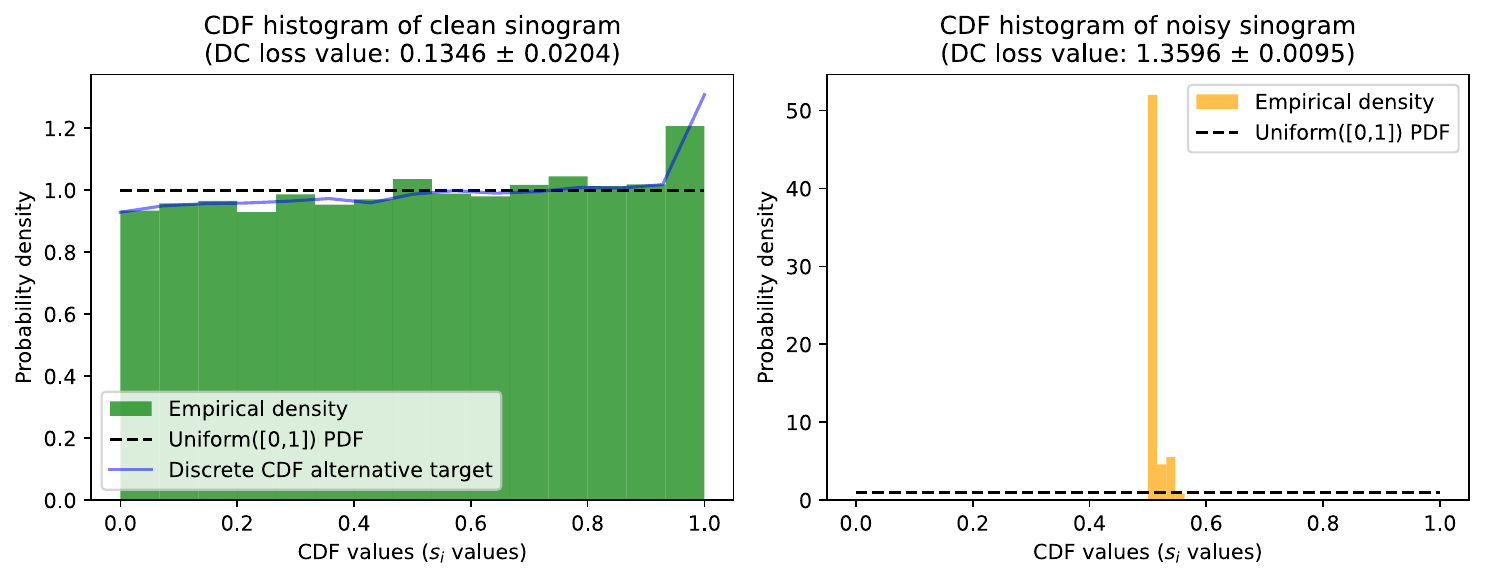}
		\caption{Count level 4.0}
	\end{subfigure}
	\caption{Investigating the DC loss values obtained and the uniformity of CDF histograms in the ideal and worst cases of evaluating the DC loss against the true measured data or Poisson-noisy measured data respectively. Subplots show the effect of increasing the number of measured counts (and thereby reducing the impact of noise introduced by the Poisson noise model). Uncertainty values are given as $1.96\times$ the standard deviation over 100 evaluations of the (stochastic) loss on different instantiations of noise.}
	\label{fig:app_validating_histos_PET}
\end{figure}

These results demonstrate that targeting a uniform histogram of CDFs is an appropriate approximation for higher doses (analogously to how the Normal distribution becomes a good approximation for the Poisson distribution for higher means). Further work is required to determine how to best adjust for the discrete nature of the Poisson distribution at lower doses, although the alignment shown between the alternative target (blue line) in Figure \ref{fig:app_validating_histos_PET} and the true clean histogram is promising, as is the option of using the randomized PIT. (We hypothesize that the peaks at 0.5 and 1.0 are primarily due to very low measured values of 0 and 1, where the Poisson distribution is not well approximated by any continuous distribution.) 

\subsection{Varying voxel size and the number of parameters to estimate}\label{app:PET_voxel_size_assessment}

We investigated the effect of over-parameterization on the PET reconstruction process with the DC loss, by varying the width of the phantom from $N_d = 64$ to $N_d = 512$. Figures \ref{fig:app_PET_64}, \ref{fig:app_PET_128}, \ref{fig:app_PET_256}, and \ref{fig:app_PET_512} show the results of these experiments for $N_d = 64$, 128, 256 and 512 respectively.

\begin{figure}[htbp]
	\centering
	\begin{subfigure}[b]{\textwidth}
		\includegraphics[width=\textwidth]{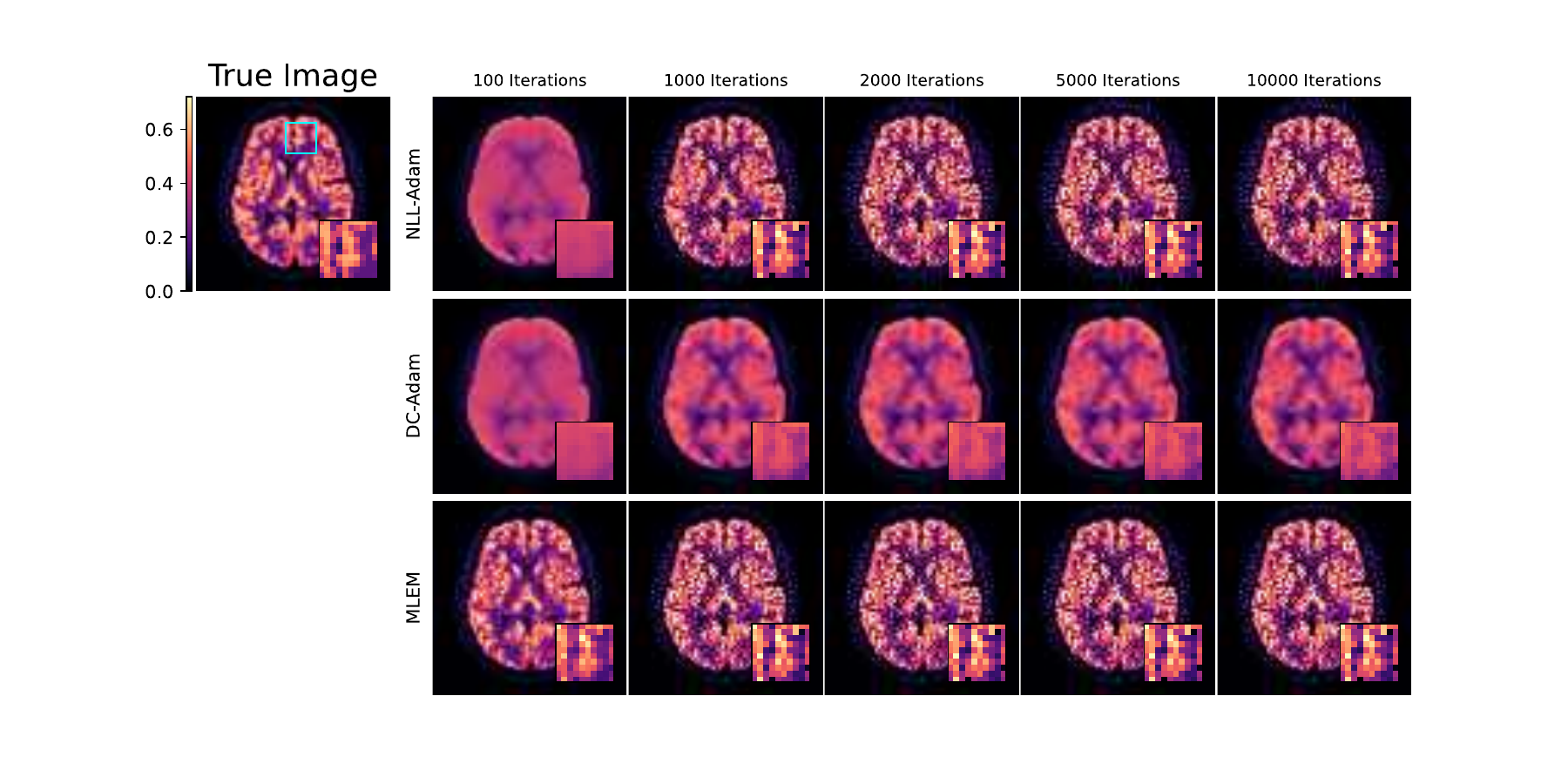}
		\caption{Images produced by PET reconstruction algorithms as a function of iteration number.}
	\end{subfigure}
	\hfill
	\begin{subfigure}[b]{\textwidth}
		\includegraphics[width=\textwidth]{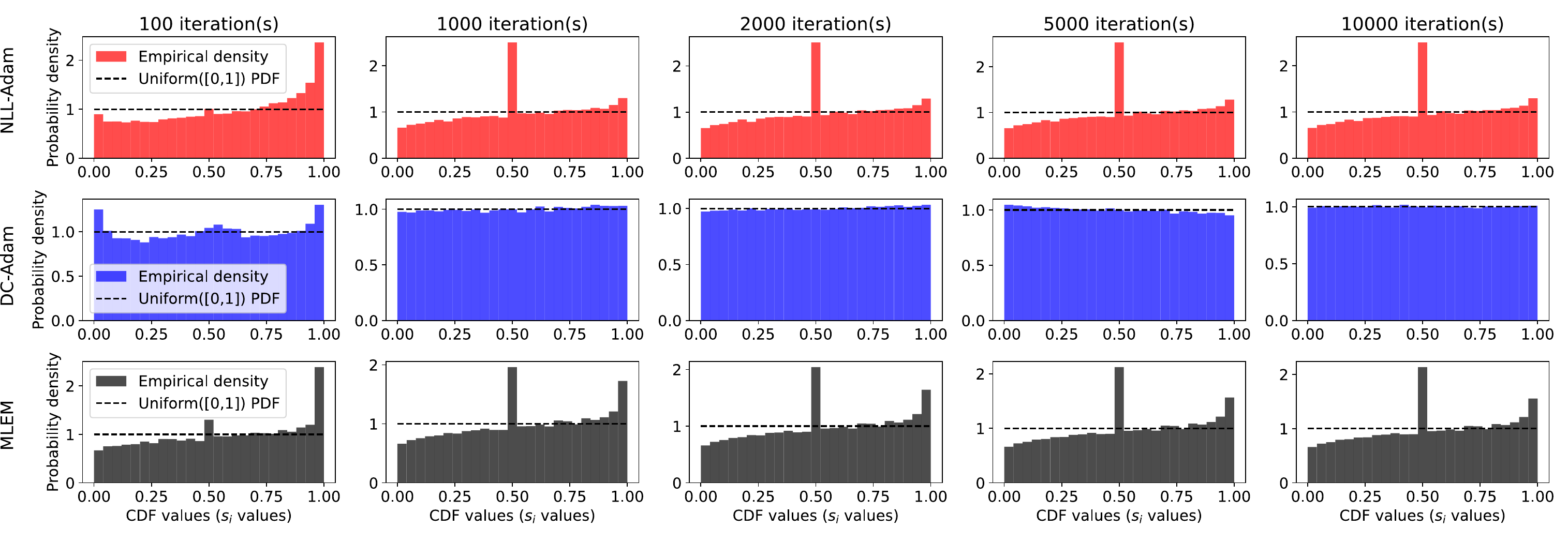}
		\caption{Histograms associated with PET reconstruction outputs as a function of iteration number.}
	\end{subfigure}
	\hfill
	\begin{subfigure}[b]{0.45\textwidth}
		\includegraphics[width=0.8\textwidth]{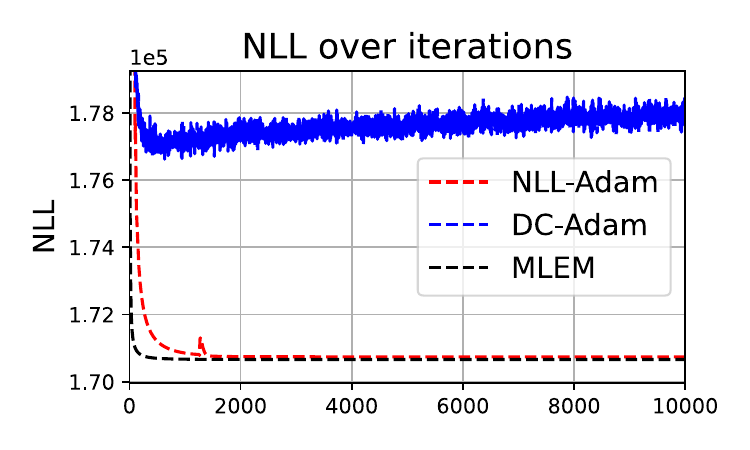}
		\caption{NLL loss as a function of iteration number.}
	\end{subfigure}
	\begin{subfigure}[b]{0.45\textwidth}
		\includegraphics[width=0.8\textwidth]{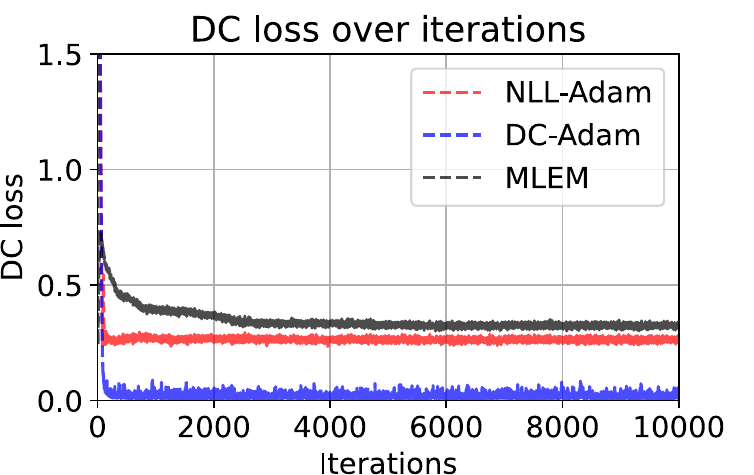}
		\caption{DC loss as a function of iteration number.}
	\end{subfigure}
	\begin{subfigure}[b]{0.45\textwidth}
		\includegraphics[width=0.8\textwidth]{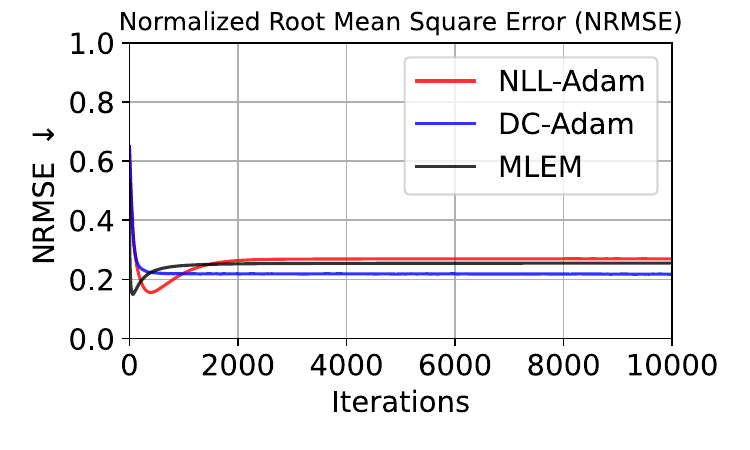}
		\caption{NRMSE as a function of iteration number.}
	\end{subfigure}
	\begin{subfigure}[b]{0.45\textwidth}
		\includegraphics[width=0.8\textwidth]{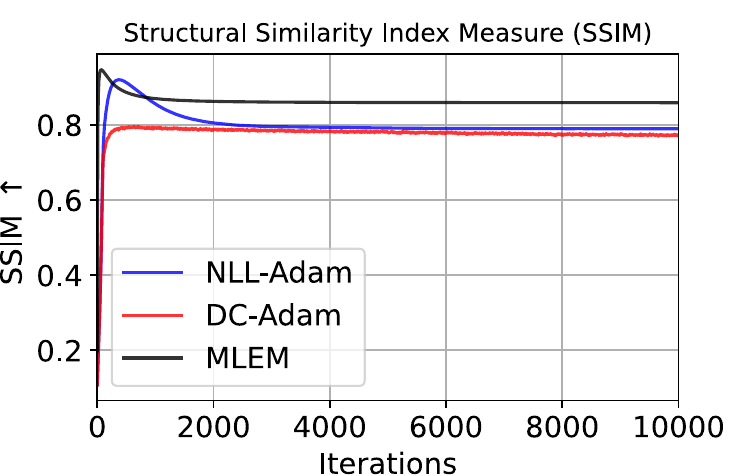}
		\caption{SSIM as a function of iteration number.}
	\end{subfigure}
	\caption{Results for PET reconstruction with image size $N_d = 64$.}
	\label{fig:app_PET_64}
\end{figure}

\begin{figure}[htbp]
	\centering
	\begin{subfigure}[b]{\textwidth}
		\includegraphics[width=\textwidth]{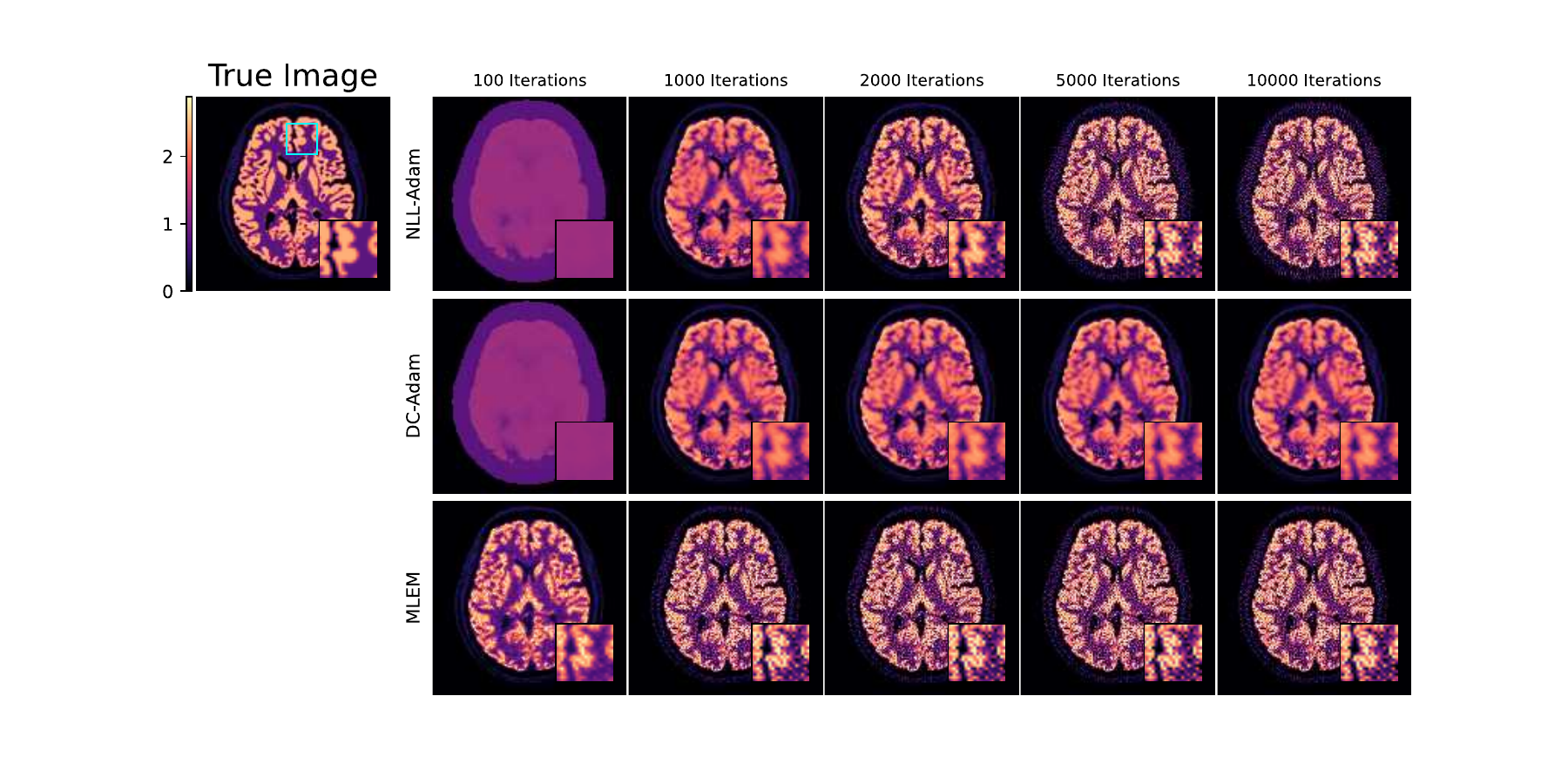}
		\caption{Images produced by PET reconstruction algorithms as a function of iteration number.}
	\end{subfigure}
	\hfill
	\begin{subfigure}[b]{\textwidth}
		\includegraphics[width=\textwidth]{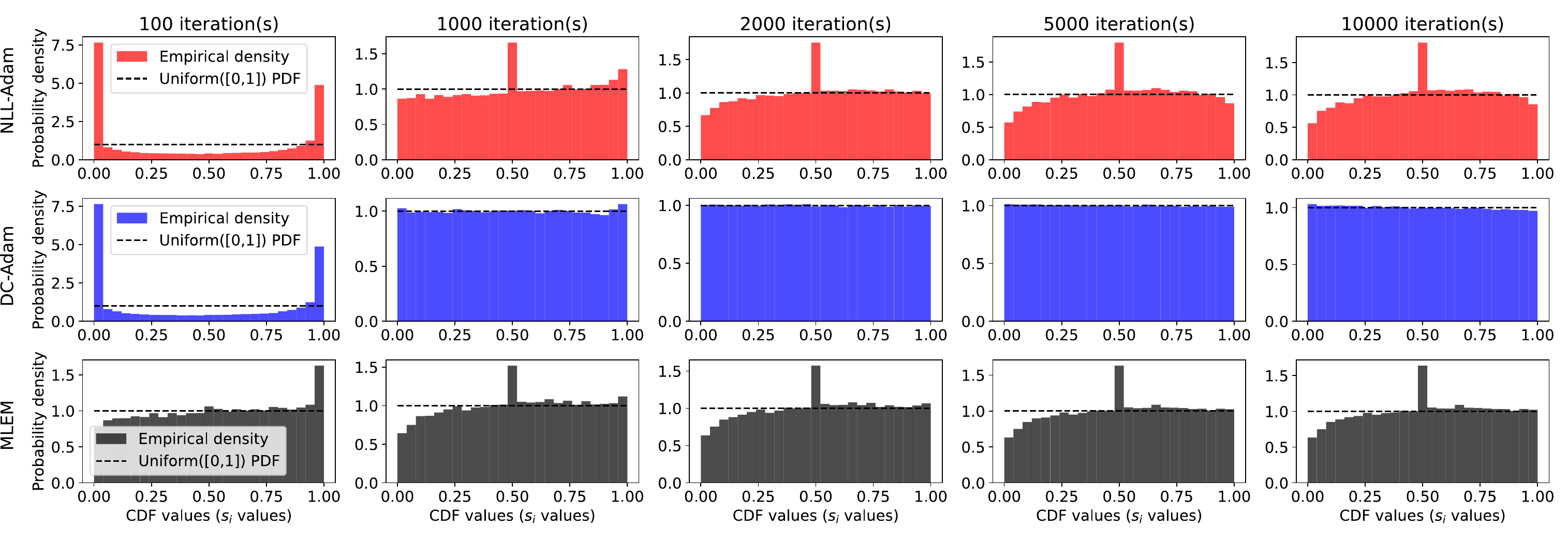}
		\caption{Histograms associated with PET reconstruction outputs as a function of iteration number.}
	\end{subfigure}
	\hfill
	\begin{subfigure}[b]{0.45\textwidth}
		\includegraphics[width=0.8\textwidth]{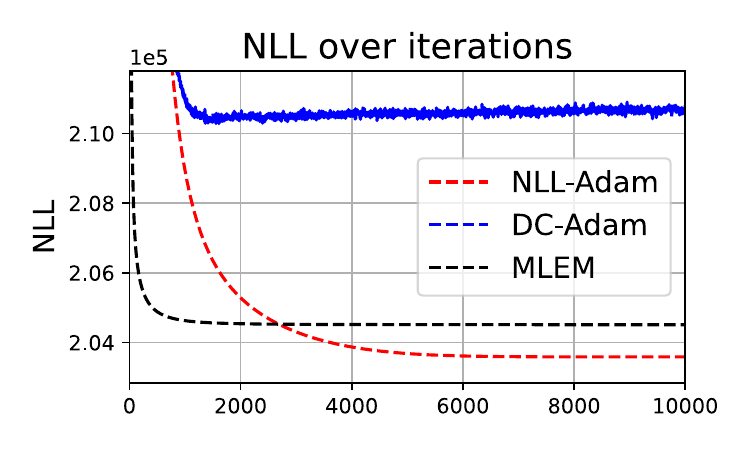}
		\caption{NLL loss as a function of iteration number.}
	\end{subfigure}
	\begin{subfigure}[b]{0.45\textwidth}
		\includegraphics[width=0.8\textwidth]{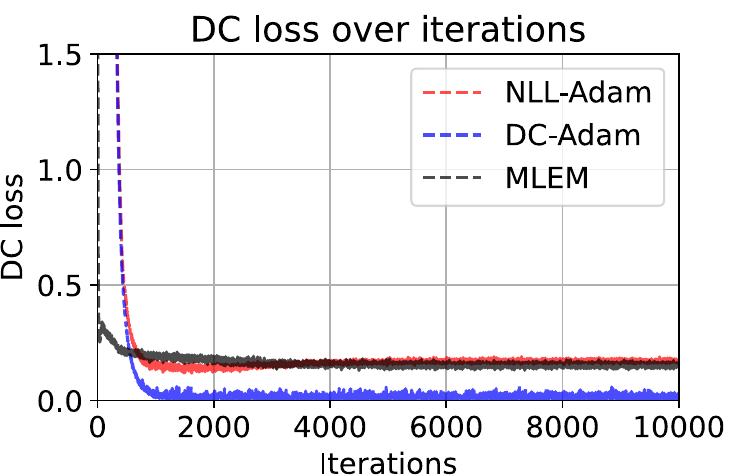}
		\caption{DC loss as a function of iteration number.}
	\end{subfigure}
	\begin{subfigure}[b]{0.45\textwidth}
		\includegraphics[width=0.8\textwidth]{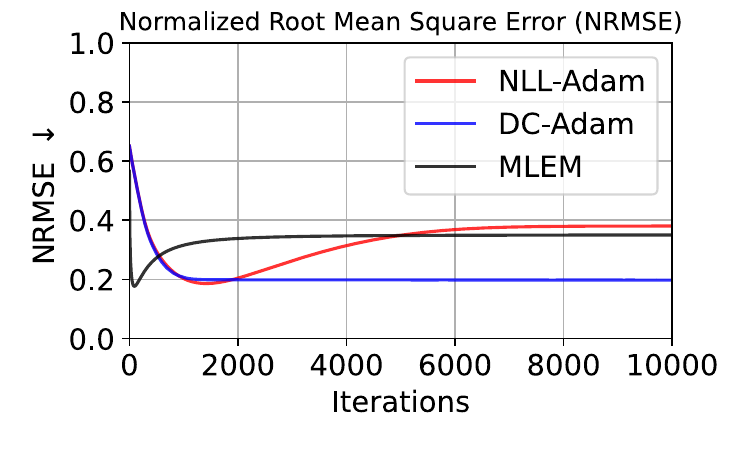}
		\caption{NRMSE as a function of iteration number.}
	\end{subfigure}
	\begin{subfigure}[b]{0.45\textwidth}
		\includegraphics[width=0.8\textwidth]{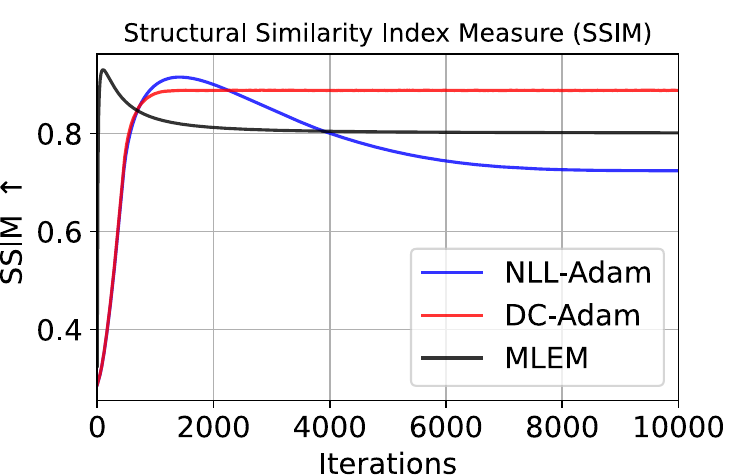}
		\caption{SSIM as a function of iteration number.}
	\end{subfigure}
	\caption{Results for PET reconstruction with image size $N_d = 128$.}
	\label{fig:app_PET_128}
\end{figure}

\begin{figure}[htbp]
	\centering
	\begin{subfigure}[b]{\textwidth}
		\includegraphics[width=\textwidth]{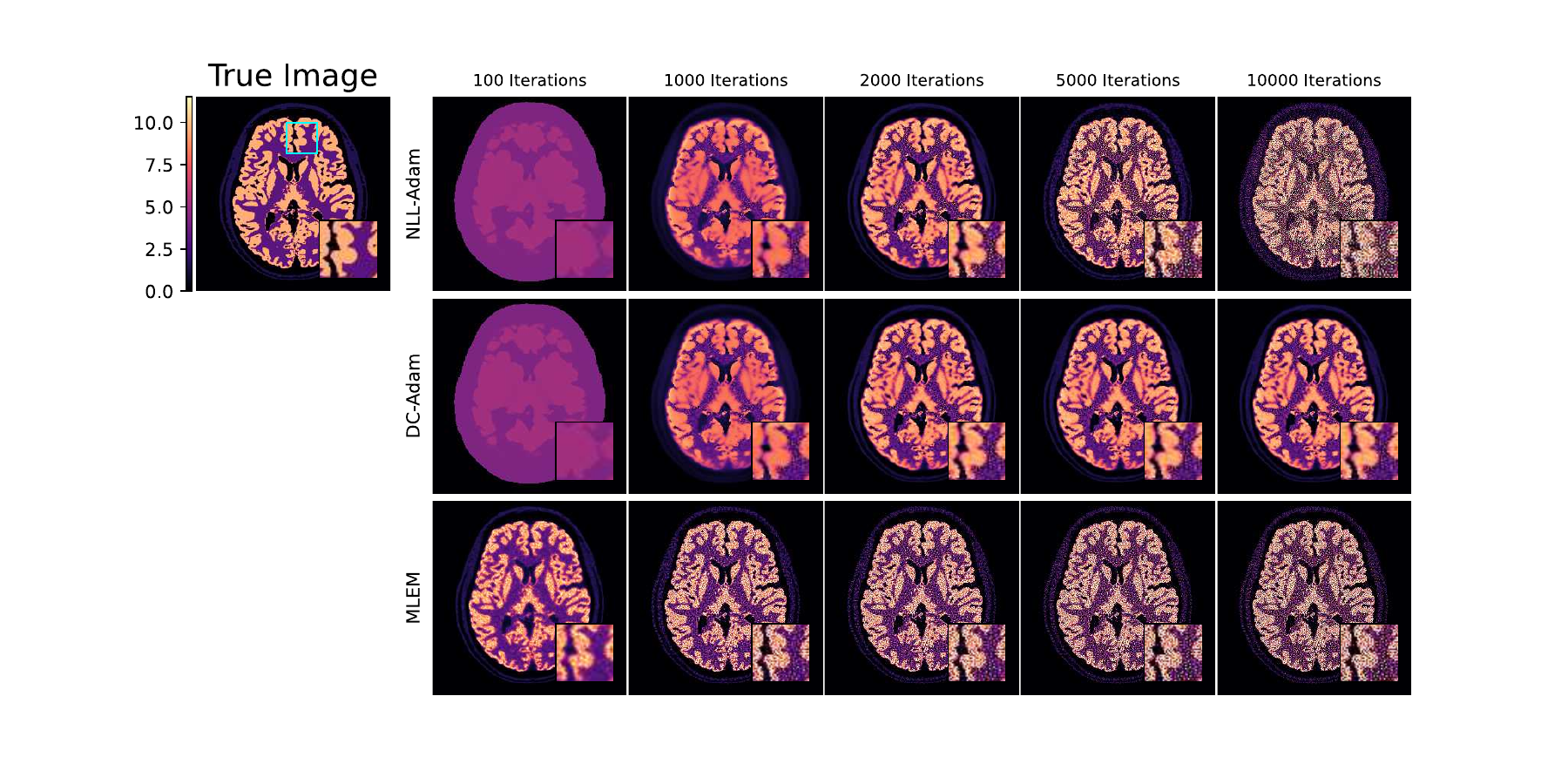}
		\caption{Images produced by PET reconstruction algorithms as a function of iteration number.}
	\end{subfigure}
	\hfill
	\begin{subfigure}[b]{\textwidth}
		\includegraphics[width=\textwidth]{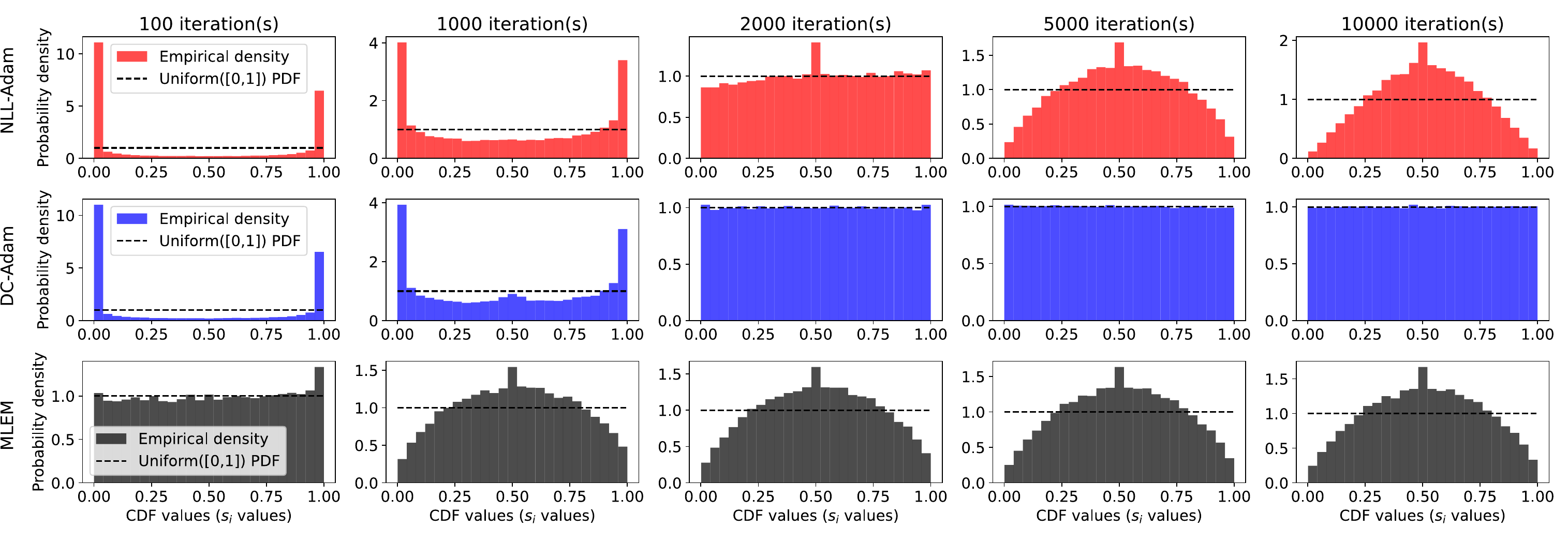}
		\caption{Histograms associated with PET reconstruction outputs as a function of iteration number.}
	\end{subfigure}
	\hfill
	\begin{subfigure}[b]{0.45\textwidth}
		\includegraphics[width=0.8\textwidth]{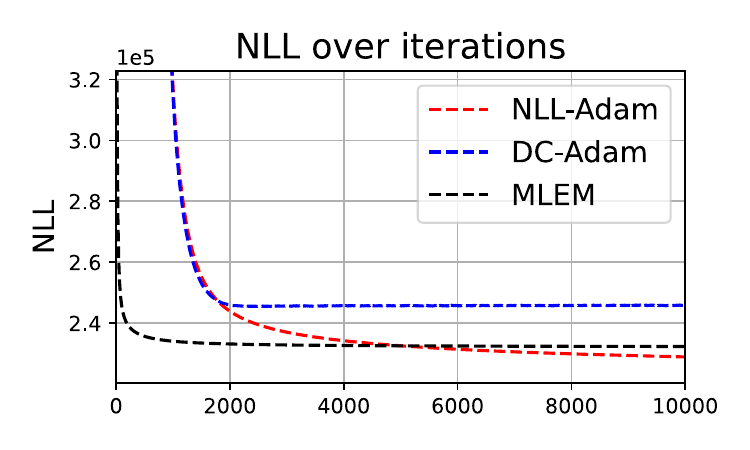}
		\caption{NLL loss as a function of iteration number.}
	\end{subfigure}
	\begin{subfigure}[b]{0.45\textwidth}
		\includegraphics[width=0.8\textwidth]{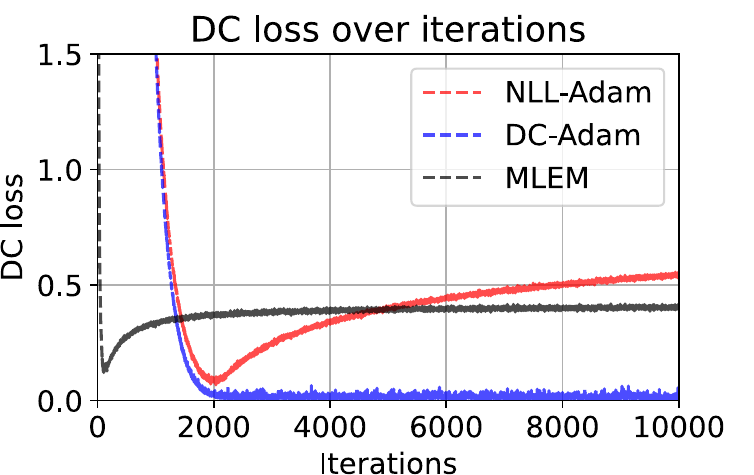}
		\caption{DC loss as a function of iteration number.}
	\end{subfigure}
	\begin{subfigure}[b]{0.45\textwidth}
		\includegraphics[width=0.8\textwidth]{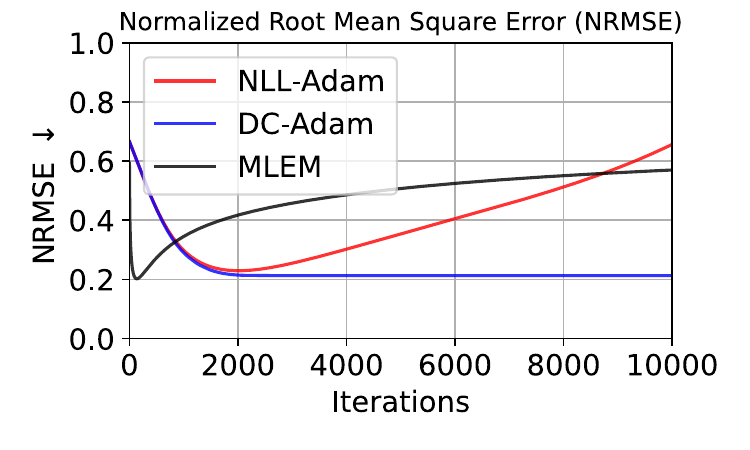}
		\caption{NRMSE as a function of iteration number.}
	\end{subfigure}
	\begin{subfigure}[b]{0.45\textwidth}
		\includegraphics[width=0.8\textwidth]{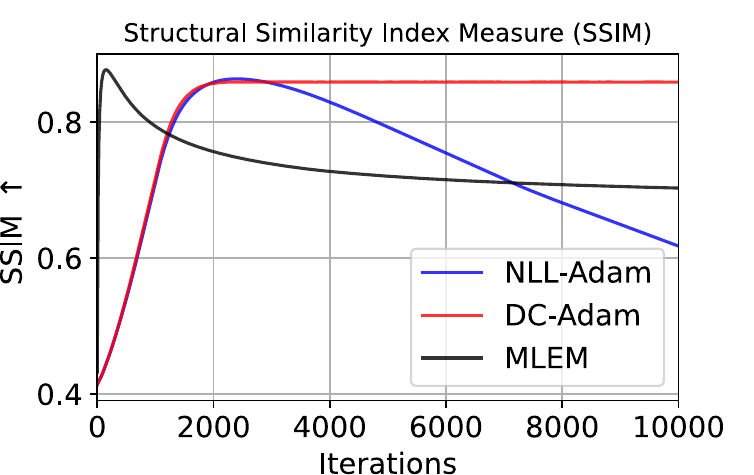}
		\caption{SSIM as a function of iteration number.}
	\end{subfigure}
	\caption{Results for PET reconstruction with image size $N_d = 256$.}
	\label{fig:app_PET_256}
\end{figure}

\begin{figure}[htbp]
	\centering
	\begin{subfigure}[b]{\textwidth}
		\includegraphics[width=\textwidth]{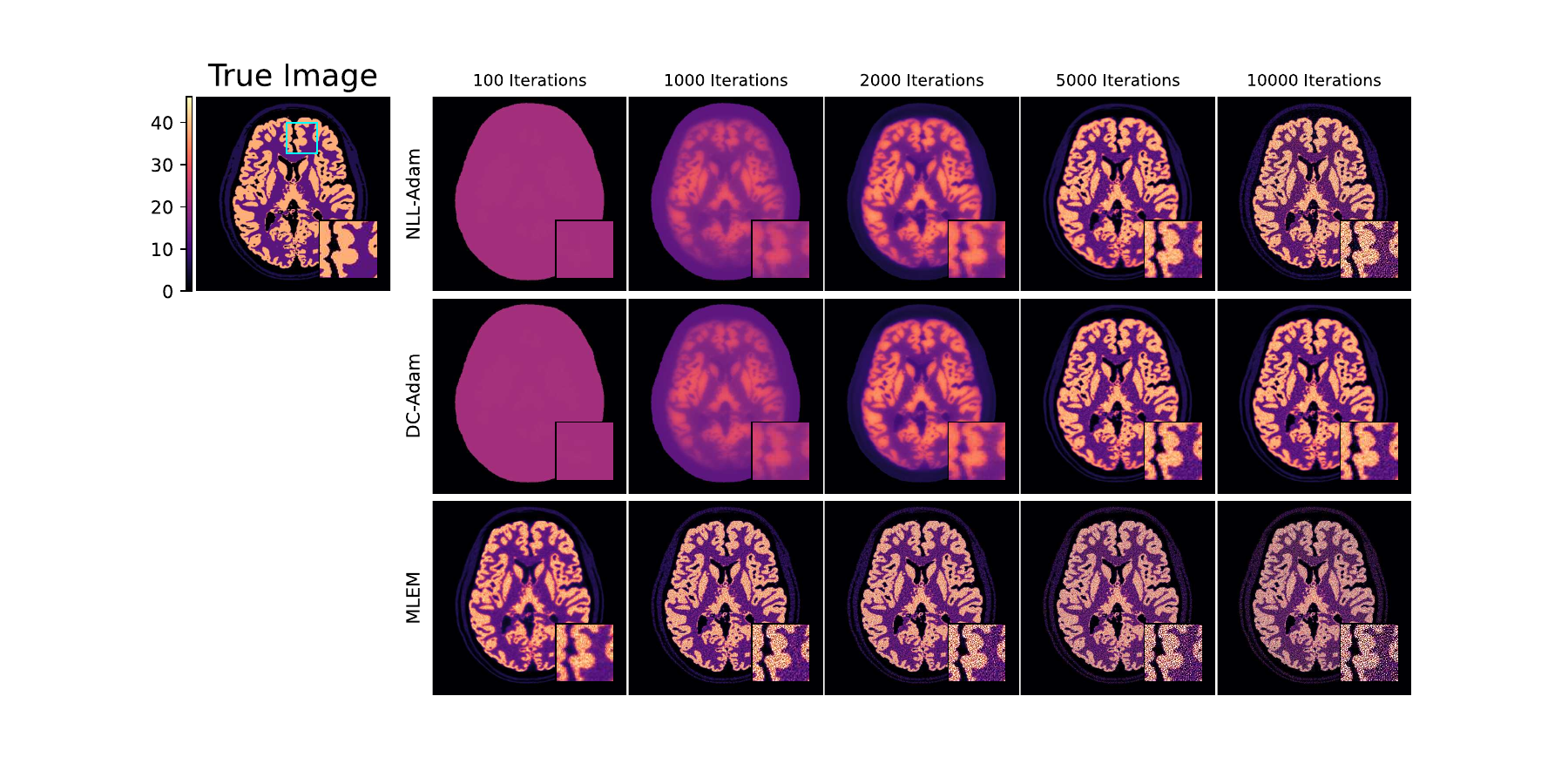}
		\caption{Images produced by PET reconstruction algorithms as a function of iteration number.}
	\end{subfigure}
	\hfill
	\begin{subfigure}[b]{\textwidth}
		\includegraphics[width=\textwidth]{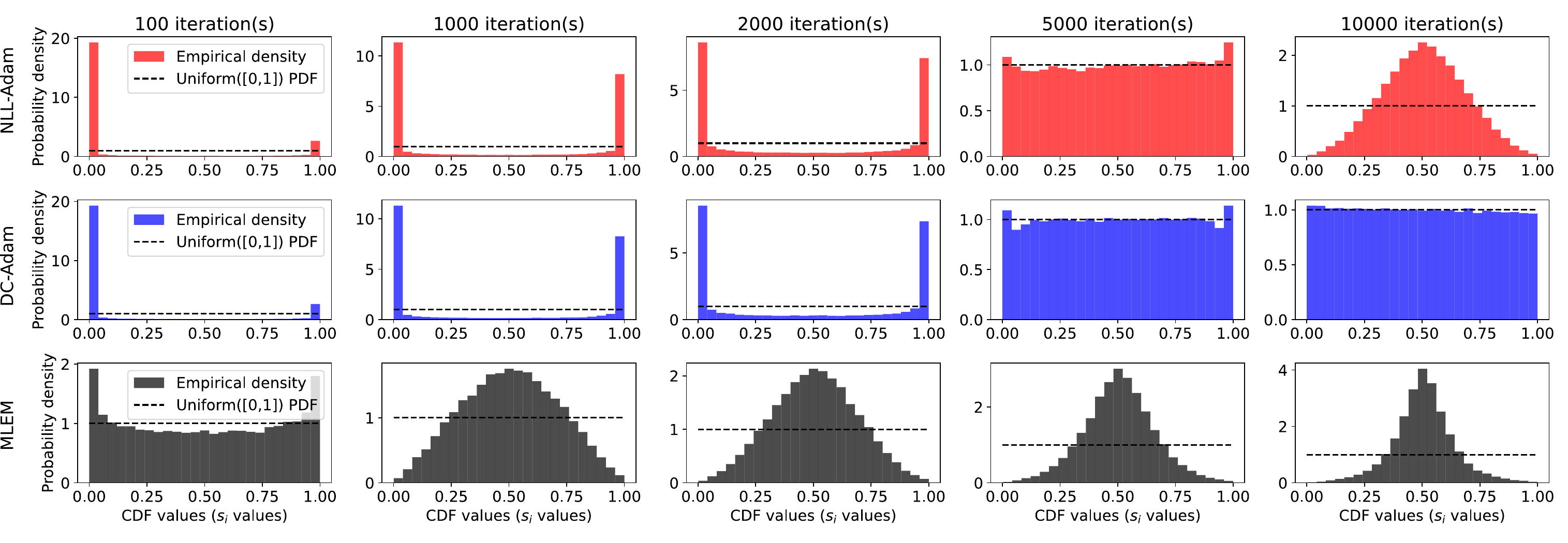}
		\caption{Histograms associated with PET reconstruction outputs as a function of iteration number.}
	\end{subfigure}
	\hfill
	\begin{subfigure}[b]{0.45\textwidth}
		\includegraphics[width=0.8\textwidth]{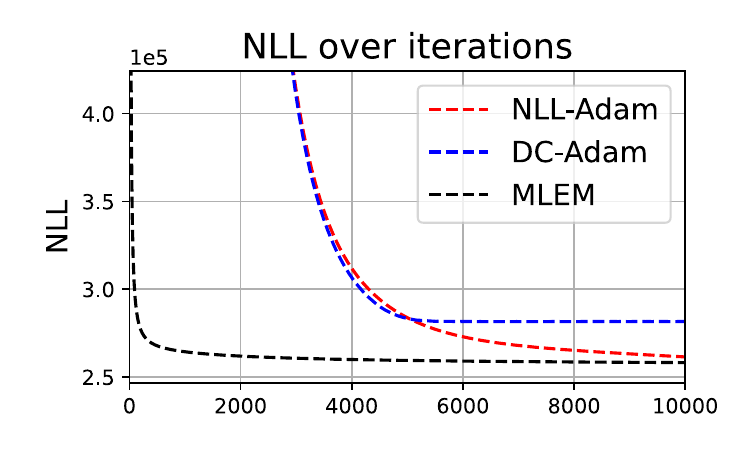}
		\caption{NLL loss as a function of iteration number.}
	\end{subfigure}
	\begin{subfigure}[b]{0.45\textwidth}
		\includegraphics[width=0.8\textwidth]{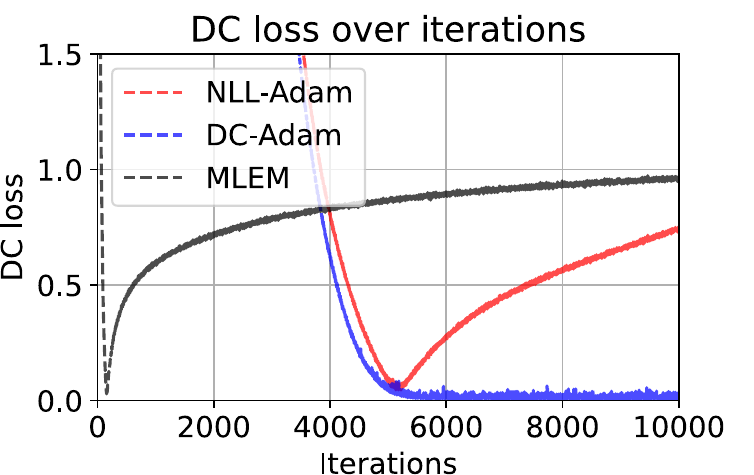}
		\caption{DC loss as a function of iteration number.}
	\end{subfigure}
	\begin{subfigure}[b]{0.45\textwidth}
		\includegraphics[width=0.8\textwidth]{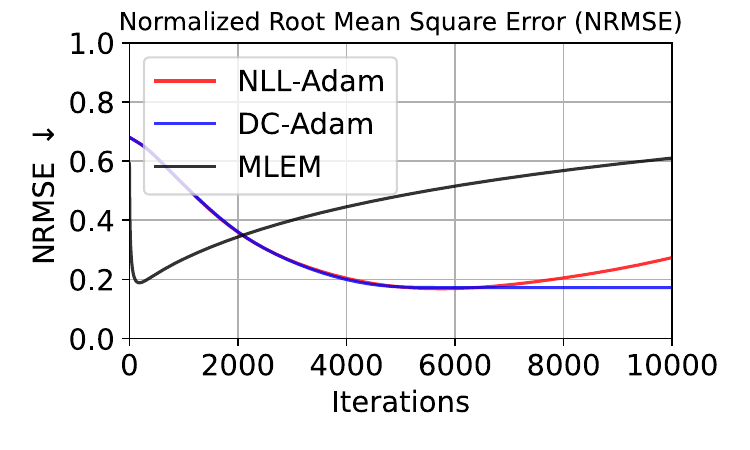}
		\caption{NRMSE as a function of iteration number.}
	\end{subfigure}
	\begin{subfigure}[b]{0.45\textwidth}
		\includegraphics[width=0.8\textwidth]{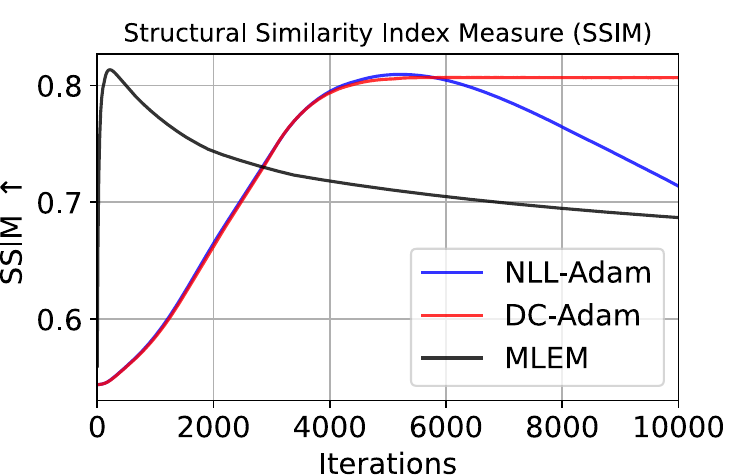}
		\caption{SSIM as a function of iteration number.}
	\end{subfigure}
	\caption{Results for PET reconstruction with image size $N_d = 512$.}
	\label{fig:app_PET_512}
\end{figure}

It is clear that as the number of voxels increases (i.e. as we move into an over-parameterized regime), the ability of traditional algorithms to overfit the noisy data increases, thereby leading to more noisy reconstructions without early stopping. This pattern is particularly obvious in the histograms for MLEM and NLL-Adam in Figures \ref{fig:app_PET_64}--\ref{fig:app_PET_512}, which become less uniform as the voxel size decreases.

NLL-Adam and DC-Adam do not build in non-negativity as a prior, whereas MLEM does, which may explain its better peak metric values for NRMSE and SSIM.

\clearpage

\subsection{Further results with a regularized objective function}\label{app:pet_reg_further_section}

We defined our edge-preserving TV prior as 

\begin{equation}
	\text{EPTV}(\imx) = 
	\frac{1}{N} \sum_{i,j} 
	w_{i,j} \left( \lvert \imx_{i,j+1} - \imx_{i,j} \rvert
	+ \lvert \imx_{i+1,j} - \imx_{i,j} \rvert \right),
\end{equation}

with edge-aware weights defined as 

\begin{equation}
	w_{i,j} = \frac{1}{1 + \left( \frac{\sqrt{(\imx_{i,j+1} - \imx_{i,j})^{2} + (\imx_{i+1,j} - \imx_{i,j})^{2} + \varepsilon}}{\kappa} \right)^{2}} \;. 
\end{equation}

In this work, $ \kappa = 0.1$ and $\varepsilon = 1e-8$ were used. The width and height of each image was $N_d = 256$.

In Figure~\ref{fig:app_PET_reg_metrics}, we show the edge-preserving TV prior values obtained by image estimates found by DC+TV and NLL+TV against NLL, DC loss and error values (NRMSE). In line with what was expected, these charts exhibit different trends for DC+TV and NLL+TV.

Figure \ref{fig:app_PET_reg_metrics_NRMSE} shows that for high levels of regularization (i.e. large values of $\beta$) both DC+TV and NLL+TV exhibit higher NRMSE ($\sim 40\% $ or greater), as the strong regularization overpowers the data-fidelity term, seeking to drive the edge-preserving TV prior (penalty) value closer to zero. Interestingly, for DC+TV, we find many different image estimates (for a range of different regularization strengths $\beta$) that give an NRMSE close to DC+TV’s minimum NRMSE, and only once an image estimate’s value for edge-preserving TV (i.e. the prior or penalty value) dips too low does it come into conflict with the data-fidelity constraint, resulting in worse NRMSE. In contrast, with NLL+TV, the NRMSE changes with greater sensitivity to the regularization strength $\beta$, with only a very limited range of $\beta$ values for which minimum NRMSE is achieved. It is precisely this lack of conflict between the DC loss data-fidelity and the regularization prior which we theorize has led to the improved NRMSE values
obtained by DC+TV in Figures \ref{fig:PET_reg_beta_nrmse} and \ref{fig:app_PET_reg_metrics_NRMSE}.

With lower regularization strengths, as expected, DC+TV retains a low value of the NRMSE while NLL+TV exhibits increasing NRMSE (as overfitting to noise occurs). Figure \ref{fig:app_PET_reg_metrics_DC} displays a similar chart, now showing DC loss against prior EPTV values for a range of image estimates obtained for various $\beta$ values. Both methods show a similar trend to that seen in Figure \ref{fig:app_PET_reg_metrics_NRMSE} (presumably because DC loss and NRMSE are correlated)\footnote{The trend for NLL+TV surprisingly shows a relatively low prior penalty for the no regularization case. Considering the images in Figure \ref{fig:app_PET_reg_images}, we attribute this to an imperfect prior not perfectly correlating with error. The prior value for highly regularized images is still lower than for the unregularized case. Clearly, the prior is having the desired effect, as increasing it reduces overfitting and promotes smoothness (with edges respected). Further work could investigate this phenomenon with different regularization hyperparameters and penalty choices).}.

For completeness, Figure \ref{fig:app_PET_reg_metrics_NLL} also shows a similar trend with NLL in place of DC loss. It is noteworthy (and in line with concepts discussed previously) that the estimates found by DC+TV exhibit relatively constant NLL loss values as regularization strength increases, before leaving this "stable region" when balancing data-fidelity and regularization is no longer feasible at higher $\beta$ values. This point occurs at very similar $\beta$ values in Figure \ref{fig:app_PET_reg_metrics_NLL} as in Figure \ref{fig:app_PET_reg_metrics_DC}.

The images shown with varying regularization strengths $\beta$ in Figure \ref{fig:app_PET_reg_images} underline these trends. With low (or no) regularization strength, DC+TV finds a much better estimate than NLL+TV, and it retains some improvement still at optimum regularization strengths (as seen in Figure \ref{fig:PET_reg}). At high regularization strengths, both methods over-smooth and succumb to artifacts induced by the prior.

\begin{figure}[tbp]
	\centering
	\begin{subfigure}[b]{0.49\textwidth}
		\includegraphics[width=\textwidth]{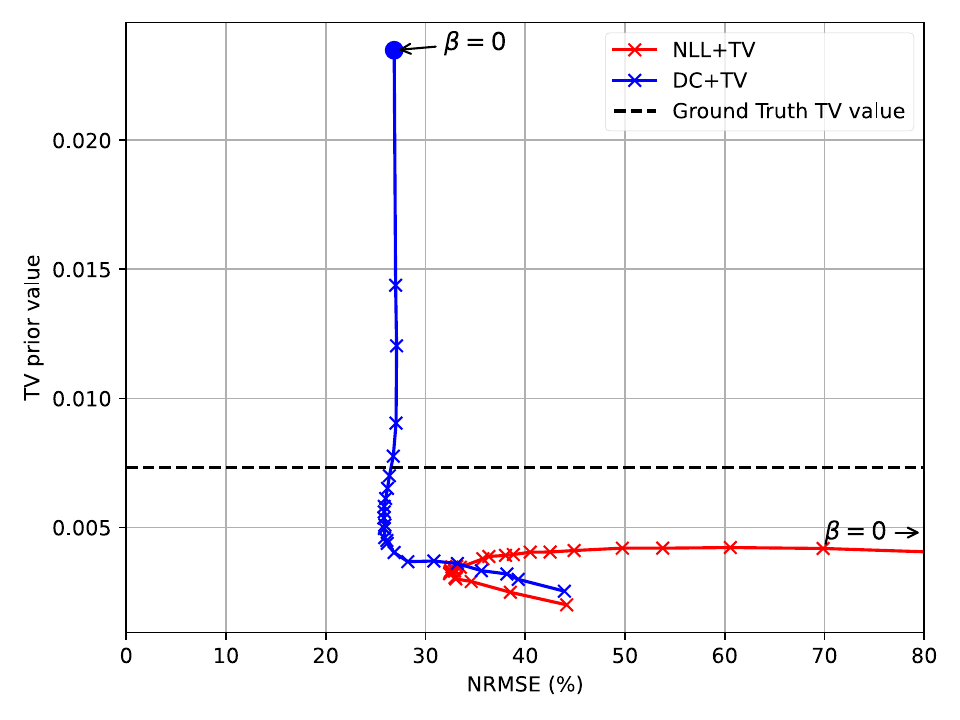}
		\caption{Value of edge-preserving TV prior obtained against NRMSE value obtained by DC+TV estimates and NLL+TV estimates.}
		\label{fig:app_PET_reg_metrics_NRMSE}
	\end{subfigure}
	\hfill
	
	\hfill
	\begin{subfigure}[b]{0.49\textwidth}
		\includegraphics[width=\textwidth]{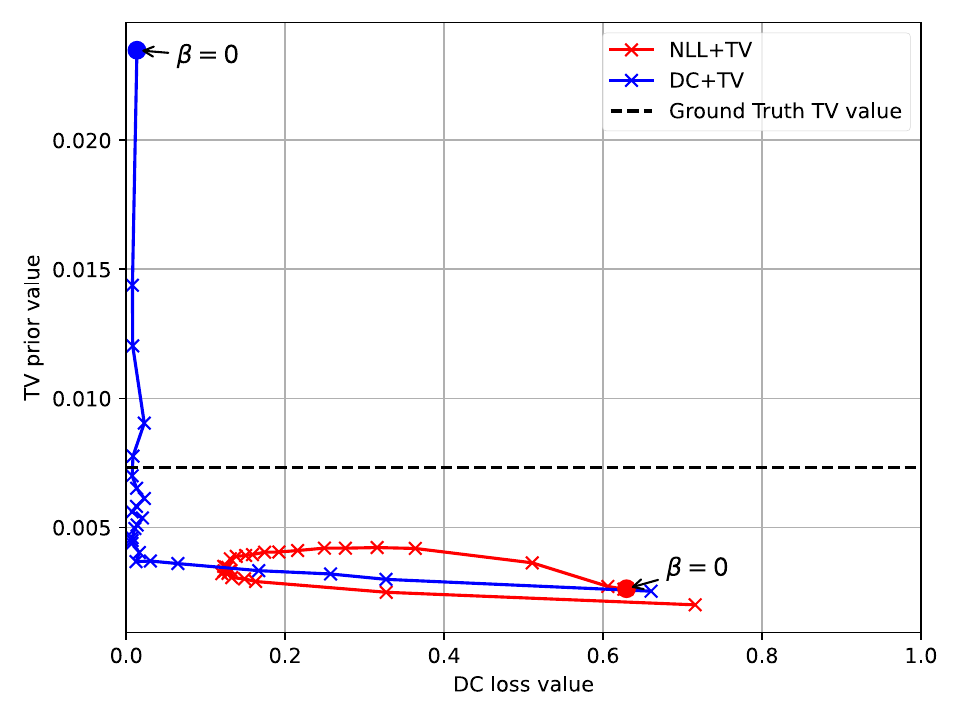}
		\caption{Value of edge-preserving TV prior obtained against DC loss value obtained by DC+TV estimates and NLL+TV estimates.}
		\label{fig:app_PET_reg_metrics_DC}
	\end{subfigure}
	\hfill
	\begin{subfigure}[b]{0.49\textwidth}
		\includegraphics[width=\textwidth]{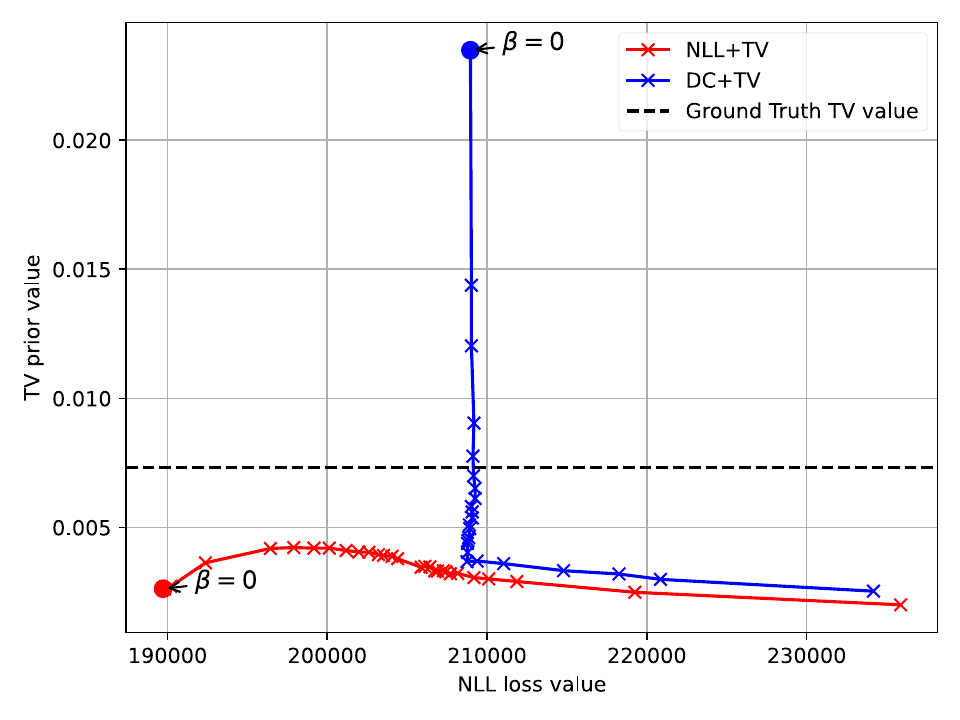}
		\caption{Value of edge-preserving TV prior obtained against NLL loss value obtained by DC+TV estimates and NLL+TV estimates.}
		\label{fig:app_PET_reg_metrics_NLL}
	\end{subfigure}
	\caption{Edge-preserving TV prior values obtained by image estimates found by DC+TV and NLL+TV against NLL, DC loss and error values.}
	\label{fig:app_PET_reg_metrics}
\end{figure}

\begin{figure}[tbp]
	\centering
	\includegraphics[width=\textwidth]{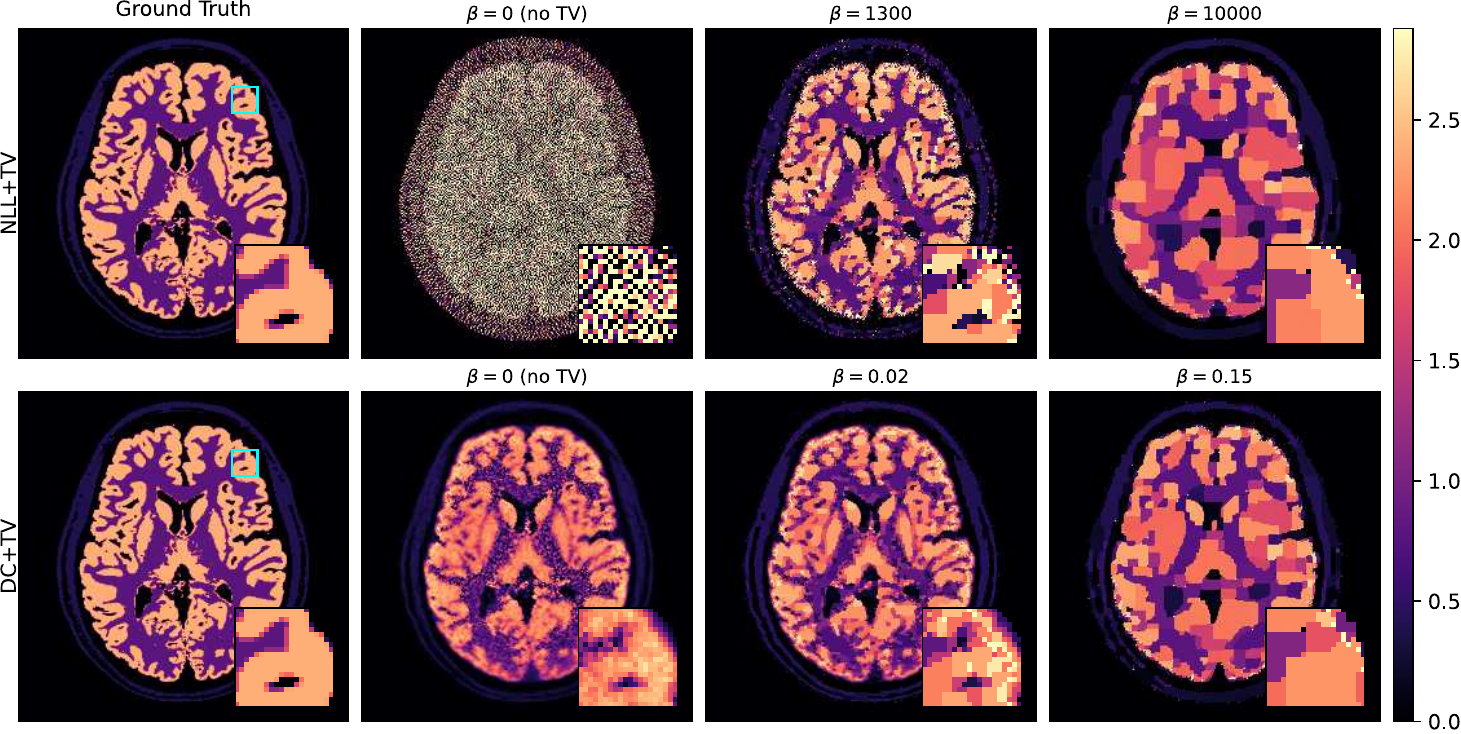}	
	\caption{Example images produced with DC+TV and NLL+TV for regularized PET image reconstruction with an edge-preserving total variation penalty, at increasing regularization strengths $\beta$. First column: ground truth images. Second column: image results with no explicit regularization. Third column: image results with near-optimal choice of $\beta$. Fourth column: indicative image results with $\beta$ chosen to be ``too large", resulting in regularization-induced artifacts for both methods.}
	\label{fig:app_PET_reg_images}
\end{figure}

\clearpage

\subsection{Real PET data: experimental setup}\label{app:pet_real}

This section provides the full experimental details for the real-data reconstruction 
reported in Section~\ref{sec:5-pet_real}. We reconstructed a single 3D clinical PET 
brain scan acquired on a Siemens Biograph mMR system (non-time-of-flight), using the 
same Adam-based optimization framework as in the synthetic PET experiments but with 
an approximation of the full physical forward model used clinically.

\paragraph{Dataset and corrections.}
The dataset consisted of approximately 70 million lines of response, axially 
compressed into span-11 sinograms of dimension $252 \times 344 \times 837$. 
Vendor-provided software supplied the corrective factors required for forward 
modelling: attenuation correction $\mathbf{L}$ (from the MR-derived $\mu$-map), 
detector normalization $\mathbf{N}$, scatter estimation $\mathbf{s}$, and randoms 
estimation $\mathbf{r}$.

\paragraph{Forward model.}
We implemented the tomographic projection operator $\mathbf{P}$ using the 
open-source ParallelProj library~\citep{schramm_parallelprojopen-source_2024}, 
parameterised with scanner-specific geometry to match the real system. 
The complete forward model $\imA$ combined the projection operator with all 
corrections:
\[
\imA \imx = \mathbf{N L C_{11} P x} \;+\; \mathbf{r} \;+\; \mathbf{s},
\]
where $\mathbf{C}_{11}$ denotes the manufacturer’s span-1 to span-11 axial 
compression matrix.

We restricted consideration of sinogram bins to those lines-of-response that intersect the $\mu$-map, i.e. lines of response that actually intersect matter.

\paragraph{Reconstruction grid.}
All reconstructions were performed directly on the same voxel grid as the 
vendor reconstruction: $128 \times 128 \times 120$ voxels with physical voxel 
dimensions $(2.08626\,\mathrm{mm}) \times (2.08626\,\mathrm{mm}) \times 
(2.03125\,\mathrm{mm})$.

\paragraph{Optimization.}
The initial image was defined as a uniform field on a binary support mask derived 
from the $\mu$-map (after hole-filling). Voxels outside the support were 
assigned a constant value equal to $1\%$ of the mean activity inside the mask. 
Although not required for reconstruction, this warm start stabilised the earliest 
iterations and expedited prototyping.

We compared Poisson NLL and DC as alternative data-fidelity terms, both optimised 
with Adam using a learning rate of $10^{-4}$ and a positivity projection applied 
after each update. In line with standard practice for likelihood-based PET 
reconstruction, we accelerated computation by splitting the sinogram into $21$ 
ordered subsets and cycling through them during optimization. All methods were run 
for $10{,}000$ iterations to expose the characteristic late-iteration behavior of 
each loss.

For both NLL and DC, the gradient was preconditioned by dividing by a stabilised 
sensitivity image $A^\top \mathbf{1} + 10^{-3}$, matching the effective diagonal 
scaling used in the MLEM algorithm.

NLL-Adam reconstruction took 0.037 sec/it compared to 0.057 sec/it for the DC-Adam reconstruction.

\paragraph{Notes.}
All reconstructions operated on the same PET grid, used identical physical 
corrections, and differed only in the choice of data-fidelity term. The vendor image 
in Fig.~\ref{fig:pet_real} was the scanner’s MLEM reconstruction, which may have 
included additional proprietary corrections.

\clearpage
\newpage
\section{Plug-and-play with a learned prior}\label{app:pnp}

To demonstrate that DC loss applies cleanly to modern learned priors, we carried out an additional experiment on image deblurring/denoising using a plug-and-play (PnP) framework with a pretrained denoiser prior. Concretely, we consider a standard Gaussian deblurring model
\begin{equation}
	\imy = \imA * \imx_{\text{true}} + \imeta, 
	\qquad 
	\imeta \sim \mathcal{N}(0, \sigma^2 I),
\end{equation}
where $\imx_{\text{true}}$ is a clean grayscale natural image, $\imA$ is a spatially invariant Gaussian blur kernel, and $\imeta$ is additive white Gaussian noise. Pixel values are scaled to $[0,1]$ and clipped after noise is added, as in the DIP experiments.

\paragraph{PnP--ADMM with a DnCNN prior.}
We adopt a plug-and-play ADMM scheme in which the prior is represented by an off-the-shelf DnCNN denoiser \citep{zhang_pnp_2021}, while the data term is either the usual quadratic loss or DC loss. Introducing auxiliary and dual variables $v$ and $u$, we iterate
\begin{align}
	\imx^{t+1} 
	&\approx \arg\min_{\imx} \Big\{
	\mathcal{L}_{\text{data}}(\imA * \imx; \imy) 
	+ \frac{\rho}{2}\,\|\imx - v^t + u^t\|_2^2
	\Big\}, \label{eq:pnp-x-update}\\
	v^{t+1} &= D_\tau\!\big(\imx^{t+1} + u^t\big), \label{eq:pnp-v-update}\\
	u^{t+1} &= u^t + \imx^{t+1} - v^{t+1}, \label{eq:pnp-u-update}
\end{align}
where $\rho > 0$ is the ADMM penalty parameter and $D_\tau$ is a DnCNN denoiser trained for a nominal Gaussian noise level~$\tau$. The $x$–update in \eqref{eq:pnp-x-update} is carried out by a small fixed number of gradient steps with Adam, while the $v$–update \eqref{eq:pnp-v-update} is a pure denoising step (no backpropagation through $D_\tau$).

\paragraph{Data terms: MSE vs.\ DC.}
We compare two choices for the data-fidelity term:
\begin{align}
	\mathcal{L}_{\text{MSE}}(\imA * \imx; \imy)
	&= \frac{1}{2\sigma^2}\,\|\imA * \imx - \imy\|_2^2,\\[2pt]
	\mathcal{L}_{\text{DC}}(\imA * \imx; \imy)
	&= L_{\text{DC-Gauss}}\!\big(\imA * \imx,\, \imy;\, \sigma\big),
\end{align}
where $L_{\text{DC-Gauss}}$ is the clipped-Gaussian DC loss from the DIP experiments, applied to the flattened residuals $(\imA * \imx - \imy)$ with a Gaussian noise model of variance~$\sigma^2$ and intensities clipped to $[0,1]$. In the implementation, the DC loss is evaluated directly on the $(B,C,H,W)$ tensors by flattening all spatial and channel dimensions.

We refer to the two PnP variants as:
\begin{itemize}
	\item \textbf{PnP--MSE}: ADMM with quadratic data term $\mathcal{L}_{\text{MSE}}$.
	\item \textbf{PnP--DC}: ADMM with DC data term $\mathcal{L}_{\text{DC}}$.
\end{itemize}

\paragraph{Experimental setup.}
We use the same F16 image as in the DIP experiments, converted to single-channel grayscale.
The forward operator $\imA$ is a depthwise Gaussian convolution with kernel size $15\times 15$ and kernel standard deviation $1.6$ pixels. We simulate clean data $\imy_{\text{clean}} = \imA * \imx_{\text{true}}$ and then add i.i.d.\ Gaussian noise $\imeta \sim \mathcal{N}(0,\sigma^2 I)$ with $\sigma = 25/255$, clipping the result back to $[0,1]$.

The PnP–ADMM scheme is initialized with $\imx^0 = v^0 = \imy$ and $u^0 = 0$. We run $2500$ ADMM iterations, and use $5$ inner gradient steps with learning rate $10^{-3}$ for the $\imx$–update. The denoiser $D_\tau$ is a standard 17-layer DnCNN trained for $\tau=25/255$ and kept fixed during the experiment \citep{zhang_pnp_2021}. Reconstruction quality is measured against $\imx_{\text{true}}$ using PSNR and SSIM.

\paragraph{Results.}

We varied the hyperparameter $\rho$ from 0.1 to 1000, carrying out a PnP reconstruction for each of MSE and DC loss. Figure~\ref{fig:app-psnragainstrho} shows the PSNR achieved by each reconstruction against~$\rho$.

\begin{figure}[t]
	\centering
	\includegraphics[width=0.7\linewidth]{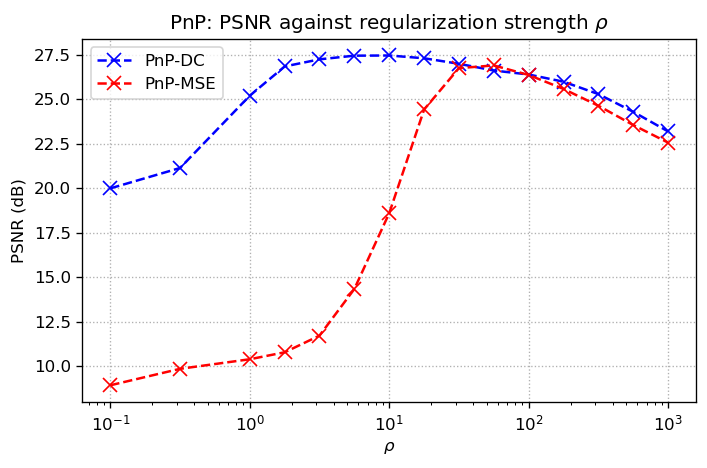}
	\caption{PSNR values for PnP reconstruction with varied regularization strength $\rho$, for PnP-DC and PnP-MSE.}
	\label{fig:app-psnragainstrho}
\end{figure}

\begin{figure}[t]
	\centering
	\includegraphics[width=1.0\linewidth]{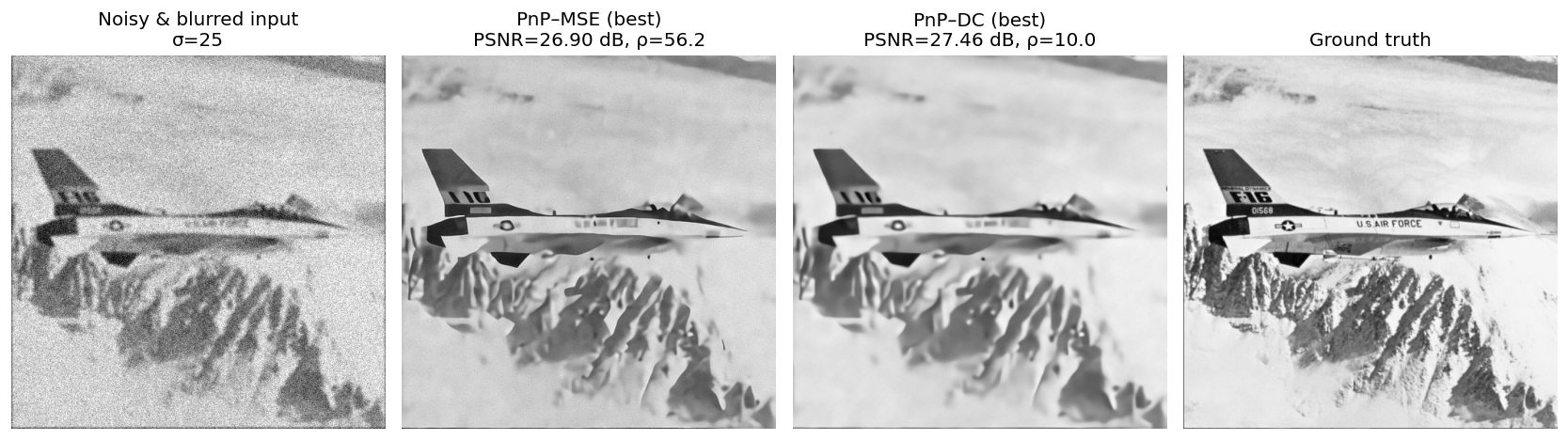}
	\caption{Images obtained with maximum PSNR when reconstructing from blurred and noisy images, for PnP-MSE and PnP-DC.}
	\label{fig:f16imagepnp}
\end{figure}

PnP with DC loss achieves a higher peak PSNR than with MSE (27.5~dB versus 26.9~dB; images shown in Figure \ref{fig:f16imagepnp}), and the DC curve is markedly flatter, indicating substantially greater robustness to the choice of~$\rho$.

This behavior follows directly from the properties of DC loss. In PnP–MSE, higher $\rho$ places more weight on the denoiser prior, while lower $\rho$ places more weight on the data fidelity. With an MSE data term, varying $\rho$ changes this balance in a way that often leads to overfitting the specific noise realization in~$\imy$ (when the data term dominates) or oversmoothing (when the denoiser dominates). As a result, PnP–MSE exhibits a narrow PSNR peak.

In contrast, DC loss stops rewarding further reductions in the residual once its distribution matches the noise model. Consequently, the $x$–update no longer chases the measured noise, regardless of whether $\rho$ emphasises the data term or the prior. This makes the PnP–DC fixed point far less sensitive to~$\rho$ and yields both the higher optimal PSNR and the observed hyperparameter stability.

This reinforces the earlier observation that DC behaves like MSE when far from the solution but self-regularizes once noise consistency is reached, making it naturally compatible with PnP methods. Further work remains to explore these effects in a wider range of modern learned denoising frameworks.

\clearpage
\section{Timing}\label{app:timing}
\subsection{Evaluation speed of DC loss}\label{app:timing_dc}

We benchmarked the computational cost of evaluating the DC loss against conventional pointwise objectives (NLL and MSE) for both Gaussian and Poisson noise models. Timings were obtained on a single NVIDIA RTX 3090 GPU, averaged over 1000 runs. The same pairs of uniform random arrays (in the range [0,1]) were used to measure each pair of methods (DC / NLL or DC / MSE).

Results are reported separately for forward and backward passes, and for two representative dataset sizes, $N=1{,}000$ and $N=1{,}000{,}000$.

\paragraph{Smaller scale ($N=1{,}000$).}
At this moderate sample size, DC loss is measurably slower than NLL/MSE but still lightweight in absolute terms: forward passes are on the order of $7\times10^{-4}$\,s, versus $3\times10^{-5}$\,s for NLL/MSE; backward passes show a similar gap.

\begin{table}[htbp]
	\centering
	\caption{Timing results (seconds) for $N=1{,}000$ (mean $\pm$ std).}
	\label{tab:timing_n1000}
	\begin{tabular}{lllrr}
		\toprule
		Noise    & Pass     & Loss & \multicolumn{1}{c}{Mean} & \multicolumn{1}{c}{Std} \\
		\midrule
		Poisson  & Forward  & DC   & 0.000749 & 0.000476 \\
		&          & NLL  & 0.000032 & 0.000176 \\
		& Backward & DC   & 0.000237 & 0.000426 \\
		&          & NLL  & 0.000042 & 0.000201 \\
		\midrule
		Gaussian & Forward  & DC   & 0.000704 & 0.000478 \\
		&          & MSE  & 0.000030 & 0.000171 \\
		& Backward & DC   & 0.000266 & 0.000443 \\
		&          & MSE  & 0.000026 & 0.000159 \\
		\bottomrule
	\end{tabular}
\end{table}

\paragraph{Larger scale ($N=1{,}000{,}000$).}
At one million samples, the gap widens: DC loss forward passes are about half a second, reflecting per-sample CDF/logit evaluations and sorting for the Wasserstein term, whereas NLL/MSE remain in the millisecond range. Backward passes for DC are faster than forwards but still notably slower than NLL/MSE.

\begin{table}[htbp]
	\centering
	\caption{Timing results (seconds) for $N=1{,}000{,}000$ (mean $\pm$ std).}
	\label{tab:timing_n1000000}
	\begin{tabular}{lllrr}
		\toprule
		Noise    & Pass     & Loss & \multicolumn{1}{c}{Mean} & \multicolumn{1}{c}{Std} \\
		\midrule
		Poisson  & Forward  & DC   & 0.503176 & 0.152972 \\
		&          & NLL  & 0.002166 & 0.001515 \\
		& Backward & DC   & 0.042866 & 0.022514 \\
		&          & NLL  & 0.001780 & 0.001442 \\
		\midrule
		Gaussian & Forward  & DC   & 0.500350 & 0.154613 \\
		&          & MSE  & 0.000825 & 0.000608 \\
		& Backward & DC   & 0.031831 & 0.017249 \\
		&          & MSE  & 0.000965 & 0.000960 \\
		\bottomrule
	\end{tabular}
\end{table}

\paragraph{Context}

Solving inverse problems with modern unsupervised methods typically involves costly
regularizers (e.g.neural network evaluations) and forward operator evaluations (e.g. tomographic projection). For context, the subsetted 3D tomography operator in Section~\ref{sec:5-pet_real} required approximately 0.012 s per evaluation, DC loss on the same problem required 0.017 s per evaluation, and a single reverse-diffusion step with the architecture and methodology of \citep{singh_scorebased_pet_2024} required 1.05 s.

DC loss therefore introduced a noticeable runtime overhead relative to NLL/MSE, but
its cost remained comparable to other common components in modern inverse-problem
pipelines.

\paragraph{Summary.}
DC loss carries a runtime overhead versus NLL/MSE. The overhead is small at $N=1{,}000$, but becomes more noticeable at $N=1{,}000{,}000$, driven chiefly by distribution evaluations and sorting in the Wasserstein step. It may be possible to reduce this overhead with approximation strategies and GPU-optimized kernels for DC loss at scale.

\clearpage
\section{LLM Usage}\label{app:llm}

Large-language models (LLMs) were used to sanity check ideas, polish writing and assist with converting code to algorithm pseudocode.

\end{document}